\newcommand{\ad}{\mathrm{ad}}
\newcommand{\Ad}{\mathrm{Ad}}
\newcommand{\cons}{\operatorname{cons}}
\newcommand{\GN}{\operatorname{GN}}
\newcommand{\Gf}{\operatorname{Gf}}
\newcommand{\MLE}{\operatorname{MLE}}
\newcommand{\MAP}{\operatorname{MAP}}
\newcommand{\op}{\operatorname{op}}
\newcommand{\EIKF}{\operatorname{EIKF}}
\newtheorem{problem}{Problem}
\renewcommand{\theproblem}{\arabic{problem}} % Number format: just the theorem number
\newtheorem{theorem}{Theorem}
\newtheorem{lemma}{Lemma}
\newtheorem{proposition}{Proposition}
\newtheorem{definition}{Definition}
\newtheorem{remark}{Remark}
\renewcommand{\d}[1]{\ensuremath{\operatorname{d}\!{#1}}}
\newcommand{\dexp}{{\mathrm{dexp}}}
\newcommand{\bs}[1]{\boldsymbol{#1}}
\newcommand{\rvline}{\hspace*{-\arraycolsep}\vline\hspace*{-\arraycolsep}}
\begin{document}
	\title{Efficient Invariant Kalman Filter for Inertial-based Odometry with Large-sample Environmental Measurements}
	\author{Xinghan Li$^{\rm a,b}$, Haoying Li$^{\rm b}$, Guangyang Zeng$^{\rm b}$, Qingcheng Zeng$^{\rm c,b}$, Xiaoqiang Ren$^{\rm d}$, Chao Yang$^{\rm e}$, Junfeng Wu$^{\rm b}$
		 \thanks{ 
 $^{\rm a}$:~College of Control Science and Engineering, Zhejiang University, Hangzhou, P. R. China.
 $^{\rm b}$:~School of Data Science,  The Chinese University of Hong Kong, Shenzhen, Shenzhen, P. R. China.
 $^{\rm c}$:~The Hong Kong University of Science and Technology (Guangzhou), Guangzhou, P. R. China.  
 $^{\rm d}$: School of Mechatronic Engineering and Automation, Shanghai University, Shanghai, P. R. China. 
 $^{\rm e}$: Department of Automation, East China University of Science and Technology, Shanghai, P. R. China.
 Emails: xinghanli@zju.edu.cn(X. Li), haoyingli@link.cuhk.edu.cn(H. Li), 
zengguangyang@cuhk.edu.cn (G. Zeng), 
qzeng450@connect.hkust-gz.edu.cn(Q. Zeng)
xqren@shu.edu.cn (X. Ren), yangchao@ecust.edu.cn (C. Yang), junfengwu@cuhk.edu.cn (J. Wu).
} 			
			% 	This work was supported in part by	the university development fund of the Chinese University of Hong Kong, Shenzhen under grant No. 01002232.
   }
	
	\date{September 2022}

	\maketitle	
	\begin{abstract}
		A filter for inertial-based odometry is a recursive method used to estimate the pose from measurements of ego-motion and relative pose. Currently, there is no known filter that guarantees the computation of a globally optimal solution for the non-linear measurement model.  In this paper, we demonstrate that an innovative filter, with the state being $SE_2(3)$ and the $\sqrt{n}$-\textit{consistent} pose as the initialization, efficiently achieves \textit{asymptotic optimality} in terms of minimum mean square error. This approach is tailored for real-time SLAM and inertial-based odometry applications.
		Our first contribution is that we propose an iterative filtering method based on the Gauss-Newton method on Lie groups which is numerically to solve the estimation of states from a priori and non-linear measurements. The filtering stands out due to its iterative mechanism and adaptive initialization. Second, when dealing with environmental measurements of the surroundings, we utilize a $\sqrt{n}$-consistent pose as the initial value for the update step in a single iteration. The solution is closed in form and has computational complexity $O(n)$. Third, we theoretically show that the approach can achieve asymptotic optimality in the sense of minimum mean square error from the a priori and virtual relative pose measurements (see Problem~\ref{prob:new update problem}). Finally, to validate our method, we carry out extensive numerical and experimental evaluations. Our results consistently demonstrate that our approach outperforms other state-of-the-art filter-based methods, including the iterated extended Kalman filter and the invariant extended Kalman filter, in terms of accuracy and running time.  
	\end{abstract}	
 
 \section{Introduction}
	\subsection{Background and Contributions}
	Simultaneous localization and mapping (SLAM) is essential for estimating a robot's position while mapping its surroundings. Odometries, like LiDAR inertial odometry (LIO) and visual inertial odometry (VIO), provide precise robot pose information. With the increasing availability of sensor data, the demand of accurate and efficient odometry algorithms has grown. These algorithms find applications in various domains, including unmanned aerial vehicles~\cite{nguyen2021viral, zhao2021super}, unmanned ground vehicles~\cite{Mourikis2007AMC,xu2022fast}, and legged robots~\cite{wisth2022vilens, wisth2021unified}. This paper introduces an innovative filter-based method custom-tailored for VIO and LIO applications, with a particular emphasis on achieving superior performance in scenarios involving large-samples environmental measurements~(LEM).
	
	% The platform of SLAM has witnessed significant advancements, enabling real-time state estimation and mapping using single perceptual sensors like LiDAR~\cite{Zhang2014LOAMLO} or camera~\cite{MurArtal2015ORBSLAMAV}. LiDAR-based methods are renowned for their ability to capture fine details over long distances.  Vision-based methods, on the other hand, excel in texture-rich environments but can be susceptible to issues like changes in illumination, rapid motion, and unknown scale~\cite{Qin2017VINSMonoAR, Qin2019LINSAL}. By incorporating data from LiDAR, camera, and IMU sensors, we aim to implement the LVI-navigation platform which leverages their respective strengths to overcome degeneracy of the single sensor~\cite{Zhang2016OnDO} and improve the robustness. 
	
	\begin{figure*}[htbp]
		\centering
		\subfigure[]{
			\includegraphics[width=0.45\textwidth]{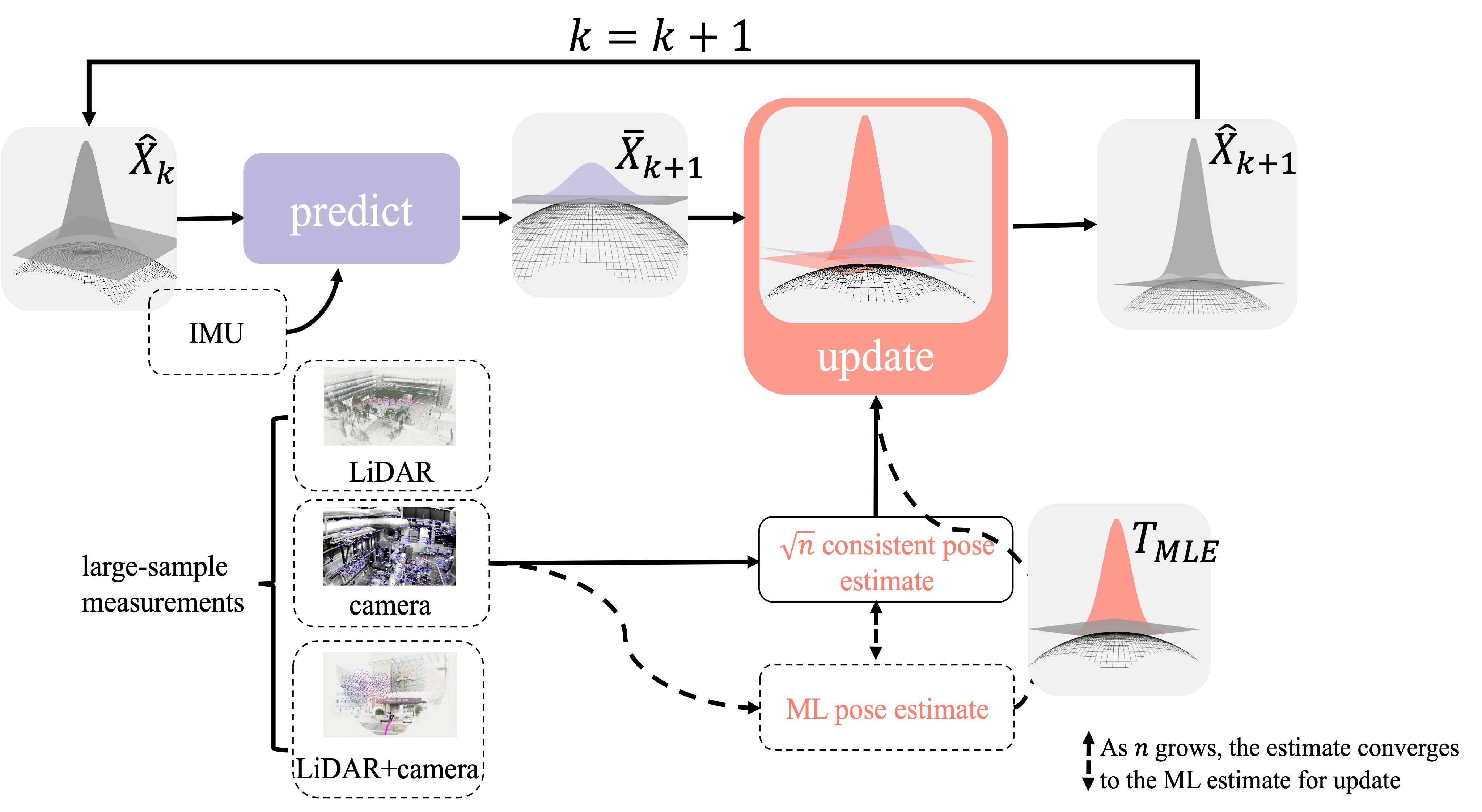}\label{fig:architecture}
		}
           \subfigure[]{
			\includegraphics[width=0.45\textwidth]{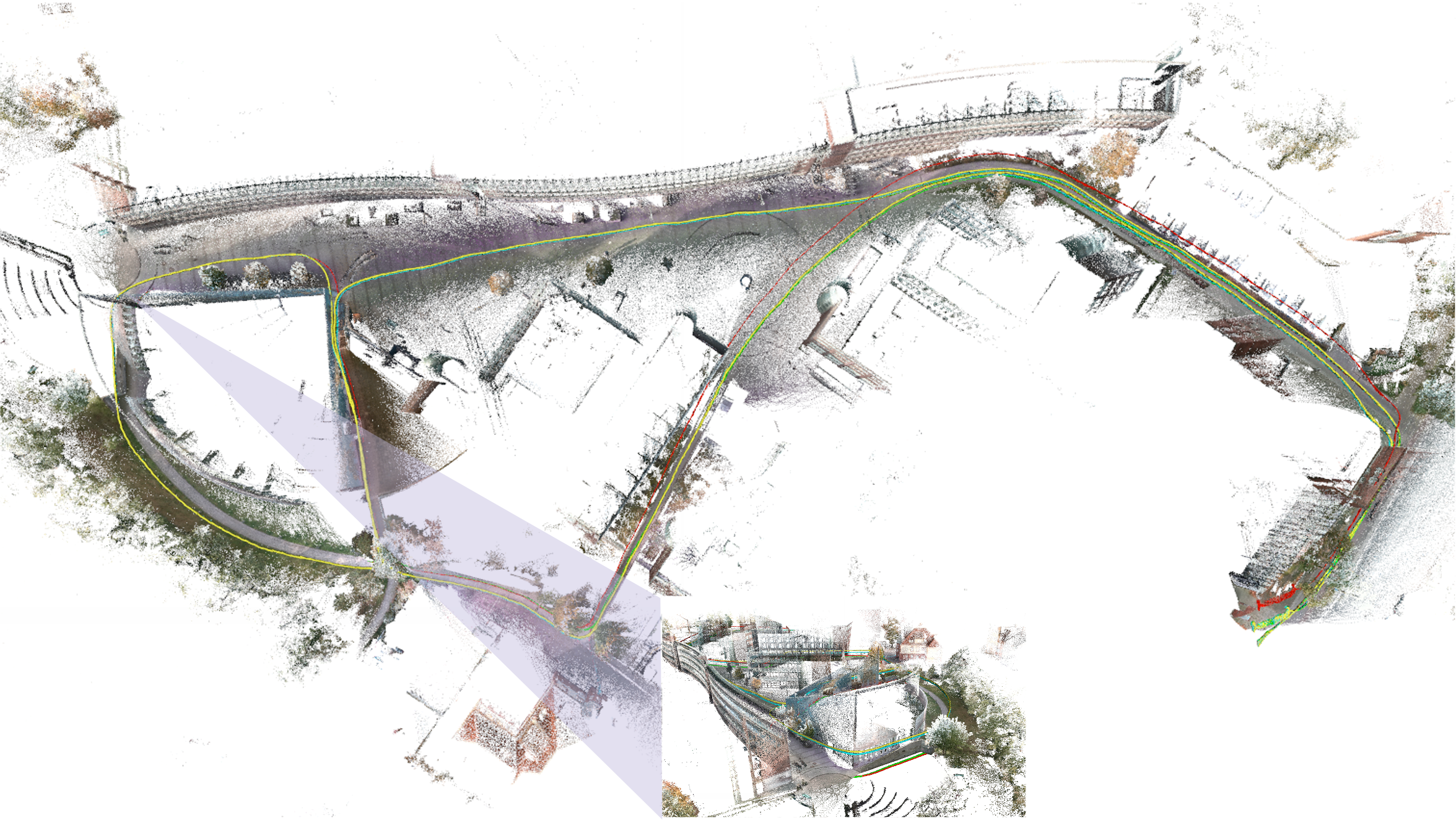}\label{fig:efficiency}
		}
\caption{Fig.~\ref{fig:architecture} depicts the architecture of efficient invariant Kalman filter (EIKF). In LEM scenarios, EIKF solves Problems~\ref{prob:predict problem} and~\ref{prob:new update problem}, which are 
approximate to exact predict and update (Problems~\ref{prob:update problem} and~\ref{prob:original predict problem}) for 
sequential Bayesian filtering. Fig.~\ref{fig:efficiency} is a snapshot of the LiDAR-Visual-Inertial odometry with EIKF in the zzz\_day\_02 sequence of the public dataset~\cite{mcdviral2023}. 
EIKF represented by the yellow line outperforms other filtering methods in terms of accuracy~(see specific performance metrics in Table~\ref{table:exp_LVIO}). }\label{fig:illustrate}
	\end{figure*}
	
	Filtering methods in odometry and SLAM operate recursively, involving the prediction and update steps, such as extended Kalman filter~(EKF)~\cite{Huang2008AnalysisAI}. Unlike optimization-based approaches, these methods excel at handling a large volume of real-time measurements with less computational cost, making them ideal for time-critical applications. To address the non-linearity of pose estimation in the update step, iterative techniques via filtering methods strike a balance between computational efficiency and estimation precision. For instance, methods like the iterated extended Kalman filter~(IEKF) utilize the Gauss-Newton~(GN) method in multiple iterations to enhance accuracy~\cite{Bell93}. 
	
	When filtering methods are considered to be used in SLAM, researchers have found that the EKF produces inconsistent estimates, the covariance calculated by the filter violating the true covariance of the estimator \cite{Huang2007ConvergenceAC}. To address this concern, researchers have explored observability analysis, leading to the introduction of the first estimation Jacobian~\cite{Huang2008AnalysisAI}. An alternative approach, the invariant extended Kalman filter (InEKF) proposed by Barrau et al.~\cite{Barrau2015AnEA}, has also been shown promising performance in terms of observability and local stability. In light of this, our work capitalizes on the benefits of the InEKF.
	
	In recent times, an abundance of data has become available from sensors at regular intervals, such as dense image features and point clouds. Taking advantage of the principles of asymptotic optimality as outlined in statistical theory by~\cite{Jennrich1969AsymptoticPO,zeng2022cpnp}, we can harness these vast samples to improve the accuracy of odometry estimatess. Asymptotic efficiency is used to describe how well an estimator can approach optimality in minimizing mean squared error as the environmental measurements of samples grow. Consequently, it is imperative to place significant emphasis on analyzing the asymptotic efficiency of estimators in the current technology and context. Our specific focus revolves around the adoption of the invariant error approach to obtain estimates that not only demonstrate asymptotic efficiency in the context of extensive data samples but also offer the flexibility to iterate multiple times for various scenarios.
	
There are limited studies investigating effective methods to exactly compute the probability of the current true state given the historical measurements
due to the non-linearity of the robotic kinematics and the measurement models. In this paper, we 
 turn to investigate 
approximate prediction and update, namely Problem \ref{prob:predict problem} and Problem \ref{prob:new update problem}, with the help of so-called virtual measurements constructed from the raw measurements. 
To solve the new problems, we introduce a new filtering approach that features an iterative mechanism on the Lie group with an initial consistent pose constructed from LEMs, as illustrated in Figure~\ref{fig:illustrate}. Its validity is supported by theoretical analysis and multi-sensor experimental evaluations. The main contributions of the paper are summarized as follows:
	\begin{enumerate}[label=(\roman*)]
		\item\label{contri:Iterated InEKF} We develop a GN method, equivalent to the update of iterated filtering, to solve the estimates of states from the a priori on the Lie group and relative pose measurements. This filtering method is characterized by a flexible number of iterations compared to InEKF and a more adaptive initialization compared to other conventional GN methods. The derivation is applied specifically in the context of VIO and LIO.
  
		\item We introduce a novel initialization of the above filtering acquired by large samples of surroundings, referred to as a \textit{$\sqrt{n}$-consistent} pose. We derive the $\sqrt{n}$-consistent solutions for perspective-n-points (PnP) in VIO and point-to-plane iterative closest point (ICP) in LIO. This $\sqrt{n}$-consistent pose is efficiently expressed in a closed form and demonstrates asymptotic convergence to the true pose. Additionally, one iteration is sufficient to provide an accurate estimate. These characterizations make it applicable in LEM scenarios.
		\item We theoretically show that our proposed filtering method initialized with a $\sqrt{n}$-consistent pose, namely the efficient invariant Kalman filter (EIKF), asymptotically approaches to the solution to Problems~\ref{prob:predict problem} and~\ref{prob:new update problem} in LEM scenarios. In other words,  our EIKF is asymptotically optimal in terms of minimum mean square error (MMSE) to Problems~\ref{prob:predict problem} and~\ref{prob:new update problem}.

		\item We conduct simulations and experimental demonstrations of both the VIO and LIO methods individually. Furthermore, we implement the filtering on a LiDAR-Visual-IMU platform. Through a series of simulations and dataset experiments, we evaluate the root mean square error (RMSE) and time cost of our method with other state-of-the-art filtering techniques. The results are evident, highlighting that our filter approach surpasses existing methods in terms of both the accuracy of the estimates and the complexity of time. The testing code can be downloaded from \href{https://github.com/LIAS-CUHKSZ/EIKF-VIO-LIO}{https://github.com/LIAS-CUHKSZ/EIKF-VIO-LIO}. 
		
		% The implementation for the multi-sensor platform in C++ code is available at \href{https://github.com/LIAS-CUHKSZ/ILIVE}{https://github.com/LIAS-CUHKSZ/ILIVE} and \href{https://github.com/LIAS-CUHKSZ/IEKF-OpenVINS}{https://github.com/LIAS-CUHKSZ/IEKF-OpenVINS}.
	\end{enumerate}

	\subsection{Links and differences with existing literature}
	In this subsection, we provide an overview of filter-based estimators for pose estimates and their applications, including VIO, LIO, and multi-sensors navigation system.
	\subsubsection*{Kalman filter}
	filtering methods are commonly employed to estimate the state of robots using a combination of control inputs and measurements up to the present time. The filter process typically consists of two steps: prediction and update. In situations where the process model and measurements exhibit linearity, and the noises follow the Gaussian distribution, Kalman filter~(KF) offers an optimal solution in terms of MMSE~\cite{Klmn1960ANA}.
	
	However, practical mobile robots are of non-linear system dynamics, and their measurement models may also be non-linear. In such cases, an EKF based on error states can be employed for pose estimates and SLAM~\cite{Bailey2006ConsistencyOT,Huang2010ObservabilitybasedRF}. The EKF linearizes the non-linear dynamics around the current state estimate, allowing for the use of a linear KF. It is important to note that while the EKF provides a locally stable observer, it does not preserve the optimality guaranteed by KF in general cases. Here, ``locally'' means that if the initial error is significant, the EKF diverges and yields unreliable estimates. Additionally, the EKF fails to maintain the observability space of the original system in SLAM, leading to an inconsistent\footnote{A state estimator for SLAM is consistent
		defined in~\cite{Huang2008AnalysisAI} if the estimation
		errors (i) are zero-mean, and (ii) have covariance matrix smaller or equal to the one calculated by the filter.} estimate of the robot's states~\cite{Huang2010ObservabilitybasedRF}.
	
	There is a significant body of literature addressing the issue of ``inconsistency" in SLAM estimators. One pioneering work in this field is by Barrau et al.~\cite{Barrau2015AnEA}, who introduced the concept of Kalman Filter with error defined on  Lie groups, known as InEKF. This approach provides a more accurate and consistent estimation framework for SLAM. The substantial study~\cite{Barrau} also included an analysis of convergence and highlighted an affine property of $SE_2(3)$~(the extension of $SE(3)$), further improving the performance of EKF. However, it is important to acknowledge that due to the non-linearity of the measurement model, achieving optimality in the MMSE sense remains a persisting issue in pose estimates, despite the advancements made by InEKF and related techniques.
	
	\subsubsection*{Bayes filter}
	The Bayes filter is a probabilistic approach used to recursively estimate the probability density function of state using measurement and a mathematical  model. It is a more general framework that can be applied to a broader class of problems, including non-linear and non-Gaussian systems and widely recognized that the KF is a specific implementation of Bayes filter designed for linear systems with Gaussian noises.
	
	When considering uncertainty analysis on Lie groups for Bayes filter, pioneering work has demonstrated that the dispersion of mobile robots, under the influence of sensor noises, exhibits a shape resembling a ``banana'' rather than a standard ellipse\cite{Long2012TheBD}. This phenomenon can be accurately approximated using Gaussian distributions on the Lie exponential coordinates of the special Euclidean group $SE(2)$. In a related study by Wang and Chirikjian ~\cite{YunfengWang2006}, they introduced the concept of the concentrated Gaussian distribution, which is defined on $SE(3)$.
	
	Furthermore, the application of Bayes fusion on the pose $SE(3)$ has been explored in the work of Barfoot et al.\cite{Barfoot2014AssociatingUW}. Additionally, Brossard et al.\cite{Brossard2020AssociatingUT} investigated the application of Bayes fusion specifically on the extended pose $SE_2(3)$. These studies highlighted the uncertainty propagation on different Lie groups. While they have focused on uncertainty analysis in the propagation step of estimation, further research is needed to explore the optimality of the update step in estimation algorithms.
	
	\subsubsection*{ A maximum likelihood estimation approach}
	The Bayes filter update step can be formulated as a maximum likelihood estimation (MLE) problem. It is well-known that the MLE formulation, assuming i.i.d. Gaussian noises, can be equivalently represented as a LS problem. The pioneering work by Bell et al.\cite{Bell93} introduced the concept of IEKF, which utilizes the non-linear least squares~(NLS) formulation for the update step and proposes the GN method as a numerical solution. The IEKF has found extensive applications in various domains, including pose estimates~\cite{Lin2021R3LIVEAR} and calibration~\cite{mirzaei2008kalman}.
	
	However, optimizing the NLS formulation on Lie groups poses several challenges. Firstly, while the GN method has been extended to manifolds such as $SO(3)$ and $SE(3)$~\cite{Forster2015OnManifoldPF}, concerns regarding computational cost in the optimization framework persist~\cite{Leutenegger2015KeyframebasedVO,Qin2017VINSMonoAR}. Secondly, ensuring the convergence of the GN method to the NLS problem remains uncertain~\cite{dennis1996numerical,carlone2013convergence}, particularly with regards to the choice of the initial value and the number of iterations. 
	
	\subsubsection*{Inertial-based odometries platform}
	In the field of VIO, OpenVINS\cite{Geneva2020OpenVINSAR} provides an on-manifold sliding window KF, showing competitive estimation performance.
	For LIO, Fast-LIO2~\cite{xu2022fast} offers a fast, robust, and versatile navigation framework based on IEKF.
	The benefits of fusing multiple measurements to enhance pose estimates accuracy have been discussed extensively by researchers~\cite{Yang2022OnlineSF,Zhang2016OnDO}. 
	R3LIVE, a LiDAR-IMU-Visual platform designed for real-time localization and radiance map reconstruction, is based on iterated  EKF~\cite{Lin2021R3LIVEAR}.
	While these works explore the utilization of filters for navigation, they are primarily based on the EKF or IEKF, providing limited consideration for the iterative scheme on Lie group~$SE_2(3)$. Furthermore, there is a noticeable absence of statistical performance assessment with a substantial number of samples. In conclusion, there exists ample room for enhancing the estimates performance of filter-based platforms, especially in terms of asymptotic optimality.
	
	\subsection{Organization of the paper}
	The remainder of this paper is organized as follows. Section~\ref{sec:math_preliminary} revisits selected preliminaries, and Section~\ref{sec:model} describes our problem and models. Prediction of our algorithm is described in Section~\ref{sec:filtering_design_predict} and a novel update of our algorithm in Section~\ref{sec:update} is introduced. Section~\ref{sec:theoretical analysis} provides the analysis of the efficiency and optimality of our algorithm. Section~\ref{sec:experiment} reports simulation results and experimental evaluation on real-word dataset and our collected hardware. Finally, Section~\ref{sec:conclusion} concludes the paper and envisions future work.
	
	\textbf{Notation}: The bracket $[\cdot,\cdot]$ denotes the Lie bracket. For $\bs{x}\triangleq\begin{bmatrix}
		x_1&\cdots&x_m
	\end{bmatrix}^\top\in\mathbb{R}^m$, $\bs{X}\in\mathbb{R}^{m\times n}$ and $f:\mathbb{R}^m\rightarrow\mathbb{R}^n$, $\frac{\partial f}{\partial \bs{x}}\triangleq\begin{bmatrix}
		\frac{\partial f}{\partial {x_1}}&\cdots&\frac{\partial f}{\partial {x}_m}
	\end{bmatrix}$. The following two linear operators:~$\langle\langle\bs{A}\rangle\rangle \triangleq -\operatorname{tr}(\bs{A}) \bs{I}+\bs{A}, 
	\langle\langle\bs{A}, \bs{B}\rangle\rangle \triangleq\langle\langle\bs{A}\rangle\rangle\langle\langle\bs{B}\rangle\rangle+\langle\langle\bs{B} \bs{A}\rangle\rangle$ and the vectorization operator $vec([\bs{x}_1\cdots \bs{x}_n])\triangleq[\bs{x}^\top_1,\cdots,\bs{x}^\top_n]^\top$ will be used. 
	
	\section{ PRELIMINARIES}\label{sec:math_preliminary}
	\subsection{Lie groups in robotics:$SO(3)$, $SE(3)$ and $SE_2(3)$}
	We review  Lie groups for robotics.  
	The special Euclidean group $SE(3)$ can describe relative pose between different frames, including a rotation matrix $\bs{R}$ on the special orthogonal group $SO(3)$ and a translation vector $\bs{p}\in \mathbb{R}^3$, as follows:
	$$SE(3)\triangleq
	\left\lbrace\begin{bmatrix} \bs{R} &\bs{p}\\\bs{0}_{1\times3}&1\end{bmatrix}\vline \bs{R}\in SO(3),\bs{p}\in\mathbb{R}^{3} \right\rbrace.$$
	The Lie algebra of $SE(3)$ is denoted by $\mathfrak{se}(3)$ writes$$\mathfrak{se}(3)\triangleq\left\lbrace \begin{bmatrix}\bs{\omega}^{\wedge} & \bs{t} \\ \bs{0}_{1\times3} & 0\end{bmatrix}\vline \bs{\omega}^{\wedge}\in\mathfrak{so}(3),\bs{t}\in\mathbb{R}^3\right\rbrace, $$
	where $(\cdot)^\wedge$ is used to represent the mapping from the Lie exponential coordinates to the corresponding Lie algebra. For instance, in $\mathfrak{so}(3)$, $(\cdot)^\wedge:\mathbb R^3\rightarrow \mathfrak{so}(3)$ denotes the skew matrix. 	
	The matrix Lie group $SE_2(3)$, known as an \textit{extended pose} in the area of pose estimation and proposed in~\cite{Barrau}, includes a rotation matrix $\bs{R}$, position $\bs{p}$ and velocity $\bs{v}$. The matrix group $SE_2(3)$ writes:
	$$SE_2(3)\triangleq\left\lbrace\begin{bmatrix}
		\bs{R}&\vline &\bs{p}&\bs{v}\\
		\hline
		\bs{0}_{2\times3}&\vline&\multicolumn{2}{c}{\begin{matrix}
				\bs{I}_2
		\end{matrix}}
	\end{bmatrix}\vline \bs{R}\in SO(3),\bs{p},\bs{v}\in\mathbb{R}^{3} \right\rbrace. $$
	The corresponding Lie algebra $\mathfrak{se}_2(3)$ writes:	
	$$\mathfrak{se}_2(3)\triangleq\left\lbrace\begin{bmatrix}
		\bs{\omega}^\wedge&\vline &\bs{t}_1&\bs{t}_2\\
		\hline
		\bs{0}_{2\times3}&\vline&\multicolumn{2}{c}{\begin{matrix}
				\bs{0}_2
		\end{matrix}}
	\end{bmatrix}\vline \bs{\omega}^\wedge\in \mathfrak{so}(3),\bs{t}_1,\bs{t}_2\in\mathbb{R}^{3} \right\rbrace. $$
	% In addition, for the sake of brevity, we use an operator $\mathcal{L}$ to denote the extraction mapping from $SE_2(3)$ to $SE(3)$:
	% \begin{equation*}
		% 	\mathcal{L}(\begin{bmatrix}
			% 		\bs{R}&\vline &\bs{p}&\bs{v}\\
			% 		\hline
			% 		\bs{0}_{2\times3}&\vline&\bs{I}_{2}
			% 	\end{bmatrix})=\begin{bmatrix} \bs{R} &\bs{p}\\\bs{0}_{1\times3}&1\end{bmatrix}.
		% \end{equation*}
	\subsubsection{Exponential, logarithm and adjoint operators}
	Lie groups and Lie algebras are linked by the exponential and logarithmic operations. We define the exponential map $\exp:\mathbb{R}^9 \rightarrow SE_2(3)$ for Lie groups as:~$\exp(\xi)=\exp_m(\xi^\wedge)$ where $\exp_m$ is the exponential of the matrix. Locally, it is a bijection if restricted to the open ball $\mathcal B_{\pi} \triangleq \{\xi\in\mathbb R^9\vline  \|\xi_{1:3}\|<\pi\}$, and the Lie logarithm, as an exponential inverse,  is defined as a map $\log: SE_2(3) \rightarrow \mathcal B_{\pi}$ by which  $\log{\left(\exp(\xi)\right) }=\xi$. 
	
	We conveniently define the adjoint operator for $X\in SE_2(3)$:
	\begin{equation}\label{eqn:definition_of_adjoint}
		\Ad_X\triangleq\begin{bmatrix}
			\bs{R}&\bs{0}_3&\bs{0}_3\\
			\bs{p}^\wedge\bs{R}&\bs{R}&\bs{0}_3\\
			\bs{v}^\wedge\bs{R}&\bs{0}_3&\bs{R}
		\end{bmatrix}
	\end{equation}
	as an operator acting directly on $y\in\mathbb R^9$. The operator $\Ad_Ty$ is also equivalent to $Ty^\wedge T^{-1}$. We also define the the adjoint operator for $x\in \mathfrak{se}_2(3)$:
	\begin{equation}\label{eqn:definition_of_adjoint_differential}
		\ad_x\triangleq\begin{bmatrix}
			\bs{\omega}^\wedge&\bs{0}_3&\bs{0}_3\\
			\bs{t}_1^\wedge&\bs{\omega}^\wedge&\bs{0}_3\\
			\bs{t}_2^\wedge&\bs{0}_3&\bs{\omega}^\wedge
		\end{bmatrix}
	\end{equation}
	as an operator acting directly on $y\in\mathbb R^9$. The operator $\ad_xy$ is also equivalent to $[x^\wedge,y^\wedge]$.
	%	The adjoint action of $\bs{X}\in SE_n(3)$ on $\bs{y}^\wedge \in \mathfrak{se}_n(3)$ is defined as $\operatorname{Ad}_{\bs{X}}: \mathfrak{se}_n(3) \rightarrow \mathfrak{se}_n(3), \bs{y}^\wedge \mapsto \bs{X}\bs{y}^\wedge \bs{X}^{-1}$. The differential of the adjoint action $\operatorname{Ad}_{\bs{X}}$ at the identity element of $SE_n(3)$, denoted as $\operatorname{ad}_{\bs{x}}: \mathfrak{se}_n(3) \rightarrow \mathfrak{se}_n(3)$, is a linear mapping from $\mathfrak{se}_n(3)$ to itself. The matrix representations of the adjoint action of $SE_n(3)$ and the adjoint action of $\mathfrak{se}_n(3)$ on itself are:
	%	\begin{equation*}
		%		\begin{split}
			%			\begin{bmatrix}\bs{R} &\bs{0} &\bs{0} &\bs{0}\\
				%				\bs{t}_1^\wedge\bs{R}&\bs{R}&\bs{0} &\bs{0}\\
				%				\vdots &\bs{0} &\ddots &\bs{0}\\
				%				\bs{t}_n^\wedge\bs{R} &\bs{0} &\bs{0} &\bs{R}
				%			\end{bmatrix},
			%			\begin{bmatrix}\omega^\wedge &\bs{0} &\bs{0} &\bs{0}\\
				%				\bs{v}_1^\wedge &\omega^\wedge &\bs{0} &\bs{0}\\
				%				\vdots &\bs{0} &\ddots &\bs{0}\\
				%				\bs{v}_n^\wedge &\bs{0} &\bs{0} &\omega^\wedge
				%			\end{bmatrix}.
			%		\end{split}
		%	\end{equation*}
	\subsubsection{{Baker-Campbell-Hausdorff} (BCH) formula:}
	To compound two matrix exponentials $\exp_m(x^\wedge)$ and $\exp_m(y^\wedge)$, we need to use the \textit{Baker-Campbell-Hausdorff} (BCH) formula:
	\begin{equation*}
		\exp_m(x^\wedge)\exp_m(y^\wedge)=\exp_m(x^\wedge+y^\wedge+\frac{1}{2}[x^\wedge,y^\wedge]+\cdots).
	\end{equation*}
	If we assume that $\left\| x\right\| $ or $\left\| y\right\| $ is close to zero, we provide the following Lemma to approximate BCH formulas.
	\begin{lemma}[Theorem 5.3 in \cite{Hall2004LieGL}]\label{lemma: compound of two matrix exponentials}
		For any $x^\wedge,y^\wedge\in\mathfrak{se}_2(3)$, their matrix exponentials is compounded by the following approximation:
		\begin{enumerate}[label=(\roman*)]
			\item if $\left\| x\right\|$ is sufficiently small, we have
			\begin{equation*}
				\exp(x)\exp(y)\approx\exp(\dexp^{-1}_yx+y),
			\end{equation*}
			\item  if $\left\| y\right\|$ is sufficiently small, we have
			\begin{equation*}
				\exp(x)\exp(y)\approx\exp(x+\dexp^{-1}_{-x}y),
			\end{equation*}
			\item 	if $\left\| x\right\|$ and $\left\| y\right\|$ are both sufficiently small quantities, we recover the first-order approximation, 
			\begin{equation*}
				\exp{(x)}\exp{(y)}\approx\exp{(x+y)}.
			\end{equation*}
		\end{enumerate}
		Here, $\dexp_x$ is known as the left Jacobian of $x^\wedge\in\mathfrak{se}_2(3)$, and its inverse has the form:
		\begin{equation}\label{eqn:left_jacobian}
			\dexp^{-1}_x=\sum_{i=0}^{\infty}\frac{(-1)^{i}}{(i+1)!}(\ad_x)^{i},
		\end{equation} 
		where the operator $\ad_x(y^\wedge)$ is defined in~\eqref{eqn:definition_of_adjoint_differential}.
	\end{lemma}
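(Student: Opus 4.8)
The plan is to derive all three items from a single master identity — the derivative of the matrix exponential — and to treat (iii) as an immediate corollary of (i). Recall that for a smooth curve $Z(t)$ in $\mathfrak{se}_2(3)$ one has, in right-trivialized form, $\tfrac{d}{dt}\exp_m(Z(t))\cdot\exp_m(Z(t))^{-1} = \big(\dexp_{Z(t)}\dot Z(t)\big)^{\wedge}$, with $\dexp$ the left Jacobian whose inverse is the series \eqref{eqn:left_jacobian} in the operator $\ad$ of \eqref{eqn:definition_of_adjoint_differential}; since $SE_2(3)$ is a matrix Lie group, $\exp$, $\log$ and $\Ad$ are real-analytic, so every series below converges and every curve below is smooth. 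The base case (iii) is just the first-order BCH expansion: when $\|x\|$ and $\|y\|$ are both $O(\varepsilon)$, $\log\big(\exp_m(x^{\wedge})\exp_m(y^{\wedge})\big) = x^{\wedge}+y^{\wedge}+\tfrac12[x^{\wedge},y^{\wedge}]+\cdots = x^{\wedge}+y^{\wedge}+O(\varepsilon^2)$; equivalently, once (i) is in hand, $\dexp_y^{-1}=I+O(\|y\|)$ by \eqref{eqn:left_jacobian}, so $\dexp_y^{-1}x=x+O(\|x\|\,\|y\|)$ and $\exp(\dexp_y^{-1}x+y)\approx\exp(x+y)$.

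For item (i), fix $y$ and let $\|x\|$ be small. I would introduce the curve $Z(t) := \log\big(\exp_m(t x^{\wedge})\exp_m(y^{\wedge})\big)$ for $t\in[0,1]$; it is well defined and smooth for $\|x\|$ small because $Z(0)=y^{\wedge}$ lies in the interior of the ball $\mathcal B_{\pi}$ on which $\log$ is a diffeomorphism. Differentiating the identity $\exp_m(Z(t)) = \exp_m(tx^{\wedge})\exp_m(y^{\wedge})$ and using that the right-trivialized velocity of the right-hand side is the constant $x^{\wedge}$, the master identity turns into the ODE $\dot Z(t) = \dexp_{Z(t)}^{-1}(x^{\wedge})$, whose right-hand side is linear in $x$, hence $O(\|x\|)$. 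By continuous dependence of ODE solutions on a parameter, $Z(t) = y^{\wedge} + O(\|x\|)$ uniformly on $[0,1]$; feeding this back gives $\dot Z(t) = \dexp_{y}^{-1}(x)^{\wedge} + O(\|x\|^2)$ uniformly, and integrating from $0$ to $1$ yields $Z(1) = y^{\wedge} + \big(\dexp_{y}^{-1}x\big)^{\wedge} + O(\|x\|^2)$, i.e. $\exp(x)\exp(y)\approx\exp(\dexp_y^{-1}x + y)$.

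For item (ii) I would avoid repeating the computation and instead reduce to (i) by inversion. Since $\exp(\xi)^{-1}=\exp(-\xi)$, we have $\big(\exp(x)\exp(y)\big)^{-1} = \exp(-y)\exp(-x)$; when $\|y\|$ is small the left factor $-y$ is small, so (i) applies and gives $\exp(-y)\exp(-x)\approx\exp\big(\dexp_{-x}^{-1}(-y) + (-x)\big) = \exp\big(-(x + \dexp_{-x}^{-1}y)\big)$. Taking inverses once more yields $\exp(x)\exp(y)\approx\exp\big(x + \dexp_{-x}^{-1}y\big)$, as claimed. As an alternative, one can run the argument of (i) verbatim on $W(t):=\log\big(\exp_m(x^{\wedge})\exp_m(ty^{\wedge})\big)$ with $W(0)=x^{\wedge}$, where now the relevant trivialized velocity of $\exp_m(x^{\wedge})\exp_m(ty^{\wedge})$ carries an extra $\Ad_{\exp(-x)}=e^{-\ad_x}$ factor that combines with $\dexp$ into $\dexp_{-x}^{-1}$.

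The main obstacle is bookkeeping rather than depth: one must keep the left- versus right-trivialized forms of the derivative-of-exp identity straight so that the Jacobian appearing in (i) is evaluated at exactly $y^{\wedge}$, and the one in (ii) carries exactly the argument $-x$, consistent with the series normalization fixed by \eqref{eqn:left_jacobian}; and one must make the first-order Taylor expansion uniform over the whole interval $[0,1]$, which is precisely where the continuous-dependence estimate and the fact that $Z(t)$ (resp. $W(t)$) stays inside $\mathcal B_{\pi}$, away from the singular locus of $\dexp$, are used. This is, of course, the content of Theorem~5.3 in \cite{Hall2004LieGL}, to which we refer for the complete argument.
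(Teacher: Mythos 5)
Your proposal is correct, but it takes a different route from the paper: the paper does not prove this lemma at all — it simply cites it as Theorem~5.3 of \cite{Hall2004LieGL} (the integral/series form of the BCH formula), whereas you give a self-contained derivation from the right-trivialized derivative-of-exponential identity, turning $Z(t)=\log\bigl(\exp_m(tx^{\wedge})\exp_m(y^{\wedge})\bigr)$ into the ODE $\dot Z=\dexp_{Z}^{-1}(x)$, freezing the Jacobian at $y$ up to $O(\|x\|^2)$, and integrating; item (ii) by inversion of (i) and item (iii) as the degenerate case are both clean. What your route buys is an explicit $O(\|x\|^2)$ (resp.\ $O(\|y\|^2)$) error bound and a precise domain condition (the curve must stay in $\mathcal B_{\pi}$, away from the singular locus of $\dexp$), which the lemma's informal ``for any $x^{\wedge},y^{\wedge}$'' glosses over; what the citation buys is brevity. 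Two small points of bookkeeping. First, your derivation produces the genuine inverse of the left Jacobian, $\dexp_y^{-1}=\sum_{k\ge 0}\tfrac{B_k}{k!}(-\ad_y)^k=\bs{I}-\tfrac12\ad_y+\tfrac1{12}\ad_y^2-\cdots$, while the paper's displayed series \eqref{eqn:left_jacobian}, $\sum_i\tfrac{(-1)^i}{(i+1)!}(\ad_y)^i=\bs{I}-\tfrac12\ad_y+\tfrac16\ad_y^2-\cdots$, is actually the series of $\dexp_{-y}$ (the right Jacobian) rather than $\dexp_y^{-1}$; the two agree to first order, which is all the lemma's approximations and the paper's later use require, but your statement that your normalization is ``consistent with \eqref{eqn:left_jacobian}'' is only true at that order. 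Second, in your side remark offering an alternative proof of (ii) via $W(t)=\log\bigl(\exp_m(x^{\wedge})\exp_m(ty^{\wedge})\bigr)$, the right-trivialized velocity carries the factor $\Ad_{\exp(x)}=e^{\ad_x}$ (so that $\dexp_x^{-1}\Ad_{\exp(x)}=\dexp_{-x}^{-1}$), not $\Ad_{\exp(-x)}$; equivalently, work with the left-trivialized velocity, which is constantly $y^{\wedge}$, and the right Jacobian appears directly. Neither point affects the validity of your main argument.
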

	
	\subsection{Gauss-Newton method on Lie group~(LGN)}\label{sec:description of the LGN}
	The Gauss-Newton method on a Lie group~(LGN) is a generalization of the GN method for non-linear least-squares problems in a vector space.  One of the main advantages of the LGN is its ability to efficiently optimize on a Lie group. Here, we will review how to extend the classic Gauss-Newton method to LGN on $SE_2(3)$.
	
	Let us consider the following optimization problem:
	\begin{equation}\label{eqn:GN_orignal_problem}
		\bs{X}^* = \mathop{\arg\min}_{\bs{X}\in SE_2(3)} \left\|  r(\bs{X})\right\|^2.
	\end{equation}
	Different from the GN in a Euclidean space, the LGN method needs a mapping called \textit{retraction} $R_{\bs{X}}$, which is defined as the mapping from an increment $\delta x$ in the tangent space of $\bs{X}$ to the neighbourhood of $\bs{X}$. Specifically, in the paper,  we choose the left multiplication as the operation of the retraction, which is defined as:
	\begin{equation*}
		R_{\bs{X}}(\delta x)\triangleq\exp{(\delta x)}\bs{X}.
	\end{equation*}
	We linearize the problem by approximating $r(\bs{X})$ with its first-order Taylor series expansion around the current estimate of the minimizer. The resulting linearized problem is solved by performing a sequence of group operations, which include finding the tangent space and {retraction} to the  Lie group. Each iteration is written in the following form:
	\begin{align}\label{eqn:GN_transformed_problem}
		&\delta x^*=\mathop{\arg\min}_{\delta x\in \mathbb{R}^9}\left\| r(\hat{\bs{X}}^{(l)})+J_{\hat{\bs{X}}^{(l)}}\delta x\right\|^2,\\
  &\hat{\bs{X}}^{(l+1)}= R_{\hat{\bs{X}}^{(l)}}(\delta x^*),\nonumber
	\end{align}
 where $J_{\hat{\bs{X}}^{(l)}}\triangleq\frac{\partial r(R_{\hat{\bs{X}}^{(l)}}(\delta x))}{\partial \delta x}|_{\delta x=\bs{0}}$.
	For the sake
	of notation convenience, we introduce the compact form,
	\begin{equation}\label{eqn:matrix_differential}
		\frac{\partial r(\bs{X})}{\partial \bs{X}}|_{\bs{X}=\hat{\bs{X}}^{(l)}}\triangleq\frac{\partial r(R_{\hat{\bs{X}}^{(l)}}(\delta x))}{\partial \delta x}|_{\delta x=\bs{0}}.
	\end{equation}
	We remark that~\eqref{eqn:matrix_differential} is just a notation convenience for the differential of a matrix Lie group, since division by a vector or a matrix is undefined.  By the solver to the LS~\eqref{eqn:GN_transformed_problem}, we have each iteration time solved in the following form:
	\begin{align*}
		\delta x^* = -\left( J^\top_{\hat{\bs{X}}^{(l)}}J_{\hat{\bs{X}}^{(l)}} \right)^{-1} J^\top_{\hat{\bs{X}}^{(l)}} r(\hat{\bs{X}}^{(l)}). 
	\end{align*}
	
	In~\cite{Tunel2009OptimizationAO}, the reparametrization process is referred to as ``\textit{lift-solve-retract}", as the method reparametrizes~\eqref{eqn:GN_orignal_problem} to~\eqref{eqn:GN_transformed_problem} and then retracts the parametrization to the original coordinates of the manifold. In comparison to computing the solution of Lie algebra directly, the retraction provides a framework for analyzing less expensive alternatives in a Lie group without approximation because it is not necessary to compute the left Jaoobian or the right Jacobian.
	
	\subsection{Uncertainty description on a Lie group}\label{sec:intro_CGN}
	We introduce the concentrated Gaussian distribution as a generalized version of a Gaussian distribution, which will be applied on a Lie group. 
	 Let $\bs{X}\in SE_2(3)$ be a random variable concentrated at the group identity, whose covariance is sufficiently small, and its probability density function~(PDF) is given by
	\begin{equation*}
		f(\bs{X};I,\Sigma) =\alpha(\Sigma)\exp{\left( -\frac{1}{2}\|\log{(\bs{X})}\|^2 _\Sigma\right) }.
	\end{equation*}
	where $\alpha(\Sigma)$ is a normalization factor\footnote{In general, the normalization factor is a function of $\bs{X}$. However, when the covariance is small, it can be assumed to be well approximated by a factor independent of $\bs{X}$. The detailed derivation can be seen in~\cite[Sec.III-B]{Forster2015OnManifoldPF}. } to ensure $f$ to be a PDF. The definition also appears in~\cite{Brossard2020AssociatingUT}. 
	
	By applying a right action $\mu\in SE_2(3)$ on $\bs{X}$, we sent $\bs{X}$ to $\bs{X}'$ centered at $\mu$, denoted by
	$
	\bs{X}'  =  \bs{X}\mu.
	$
	This is called a \textit{right concentrated Gaussian distribution}, denoted as $\mathcal{N}_{RG}(\mu,\Sigma)$ and the PDF of $\bs{X}'$ is written as
	\begin{equation*}
		\alpha(\Sigma)\exp{\left( -\frac{1}{2}\|\log{(\bs{X}'\mu^{-1})}\|^2_ \Sigma\right) }.
	\end{equation*}
	If a left action is applied, i.e., $\bs{X}'=\mu\bs{X}$, we obtain a \textit{left concentrated Gaussian distribution}, denoted as $\mathcal{N}_{LG}(\mu,\Sigma)$.
	
	To summarize this part, when we work with a concentrated Gaussian distribution to compute Lebesgue integral with respect to it on $SE_2(3)$, it is equivalent to compute with respect to  a Gaussian distribution in the Lie algebra.
When the covariance is small, the tails of the distribution in question where bijection fails are marginally necessary to take into account. As shown in~\cite{YunfengWang2006}, the convolution 
of highly concentrated
at the identity is identical to the convolution in the Euclidean space $\mathbb R^9$. The conditioning of $\mathcal{N}_{RG}$ at the identity follows in the same way that it does in $\mathbb{R}^9$. 
	  In this paper, we focus on $\mathcal{N}_{RG}$, and a similar result can be derived for $\mathcal{N}_{LG}$ by utilizing $\mathcal{N}_{RG}$ and $\Ad_{\mu}$, i.e., $\mathcal{N}_{LG}(\mu,\Sigma)$ is structurally identical to $\mathcal{N}_{RG}(\mu,\Ad_{\mu}\Sigma\Ad^\top_{\mu})$, which will not be elaborated on.

	\subsection{Estimation theory}	
	We introduce several key definitions borrowed from statistical theory. First, we need to introduce the conventions of $O_p$ and $o_p$ for our paper. The notion ${X}_n=o_p(a_n)$ means that the set of values $X_n/a_n$ converges to zero in probability, i.e., $\lim_{n\rightarrow\infty}P(|{X}_n/a_n|\geq\epsilon)=0$ for every positive $\epsilon$. The notion $X_n=O_p(a_n)$ means that the set of values $X_n/a_n$ is bounded in probability, i.e., for any positive $\epsilon$, there exists a finite $N$ and $M$ such that $P(\left|X_n/a_n \right| >M)<\epsilon$ for any $n>N$. Similarly to deterministic $O$ and $o$ notation, the probabilistic $O_p$ and $o_p$ notation follows similar rules of calculus.
	\begin{lemma}\label{lemma:calculus_pf_op}
		The following equations hold true for $o_p$ and $O_p$.
		\begin{enumerate}[label=(\roman*)]
			\item Let $r_n$ be a sequence of real numbers. $O_p(r_n)=r_nO_p(1)$ and $o_p(r_n)=r_no_p(1)$.
			\item $o_p(1) + o_p(1) = o_p(1)$ and $O_p(1) + O_p(1) = O_p(1)$ .
			\item $o_p(1) + O_p(1) = O_p(1)$.
			\item if $r(\cdot)$ is a continuous mapping and $\bs{x}_n=\bs{x}_o+O_p(1/\sqrt{n})$ holds, then $r(\bs{x}_n)=r(\bs{x}_o)+O_p(1/\sqrt{n})$ holds\cite{Oehlert1992ANO}. 
            \item if $r(\cdot)$ is a continuous mapping and $\bs{x}_n=\bs{x}_o+o_p(1/\sqrt{n})$ holds, then $r(\bs{x}_n)=r(\bs{x}_o)+o_p(1/\sqrt{n})$ holds\cite{Oehlert1992ANO}. 
		\end{enumerate}    
	\end{lemma}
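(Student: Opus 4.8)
The statement to prove is Lemma~\ref{lemma:calculus_pf_op}, a collection of standard calculus rules for the stochastic order symbols $o_p$ and $O_p$. The plan is to reduce each item to the defining convergence/boundedness-in-probability statements and then argue directly with $\epsilon$--$\delta$ bookkeeping, exactly as one does for deterministic $o$ and $O$ but carrying probabilities along. Items (iv) and (v) are not really ``calculus'' rules but continuous-mapping statements, and I would handle them separately via a continuity argument; they are also the only parts with any real content, so I expect those to be the main obstacle.

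For item (i), I would simply unwind definitions: if $X_n = O_p(r_n)$ then $X_n/r_n = O_p(1)$ by definition (the ratio is bounded in probability), so $X_n = r_n \cdot (X_n/r_n) = r_n O_p(1)$; the $o_p$ case is identical with ``bounded in probability'' replaced by ``converges to zero in probability.'' For item (ii), given $Y_n, Z_n = o_p(1)$, fix $\epsilon > 0$ and use $\{|Y_n + Z_n| \ge \epsilon\} \subseteq \{|Y_n| \ge \epsilon/2\} \cup \{|Z_n| \ge \epsilon/2\}$, then apply the union bound and send $n \to \infty$; for the $O_p$ part, pick bounds $M_Y, M_Z$ for each term with the corresponding thresholds $N_Y, N_Z$ so that each tail probability is below $\epsilon/2$, and conclude with $M = M_Y + M_Z$ and $N = \max\{N_Y, N_Z\}$. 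Item (iii) is the mixed case: an $o_p(1)$ sequence is in particular $O_p(1)$ (converging-to-zero implies bounded in probability), so this follows immediately from item (ii), or one can re-run the same union-bound argument directly.

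For items (iv) and (v), the hypothesis is $\bs{x}_n = \bs{x}_o + O_p(1/\sqrt{n})$ (respectively $o_p(1/\sqrt{n})$) with $r$ continuous, and the claim is $r(\bs{x}_n) = r(\bs{x}_o) + O_p(1/\sqrt{n})$ (respectively $o_p(1/\sqrt{n})$). The argument is a delta-method--type expansion: writing $\bs{x}_n - \bs{x}_o = \bs{u}_n/\sqrt{n}$ with $\bs{u}_n = O_p(1)$, one has $\sqrt{n}\,(r(\bs{x}_n) - r(\bs{x}_o)) = \sqrt{n}\,(r(\bs{x}_o + \bs{u}_n/\sqrt n) - r(\bs{x}_o))$, and the task is to show this is $O_p(1)$. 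Here mere continuity of $r$ is not quite enough in general for the $O_p$ statement — one needs $r$ differentiable at $\bs{x}_o$ (or locally Lipschitz there); the cited reference~\cite{Oehlert1992ANO} treats exactly this, so I would invoke it, noting that under differentiability $\sqrt{n}(r(\bs{x}_n)-r(\bs{x}_o)) = \nabla r(\bs{x}_o)^\top \bs{u}_n + o_p(1) = O_p(1)$ by a first-order Taylor expansion combined with items (ii)--(iii) and the continuous mapping theorem applied to the linear map $\bs{u}_n \mapsto \nabla r(\bs{x}_o)^\top \bs{u}_n$. For the $o_p$ version in item (v), the same expansion gives $\sqrt{n}(r(\bs{x}_n)-r(\bs{x}_o)) = \nabla r(\bs{x}_o)^\top \bs{u}_n + o_p(1)$ with $\bs{u}_n = o_p(1)$, hence the whole expression is $o_p(1)$, i.e.\ $r(\bs{x}_n) = r(\bs{x}_o) + o_p(1/\sqrt n)$. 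The main subtlety to flag is precisely this gap between ``continuous'' as stated and the differentiability actually needed; since the lemma explicitly defers to~\cite{Oehlert1992ANO} for (iv)--(v), the cleanest route in the paper is to give full self-contained proofs of (i)--(iii) and cite the reference for (iv)--(v), remarking that in all our applications $r$ is in fact smooth so the hypotheses are met.
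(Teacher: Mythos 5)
Your proposal is correct, but note that the paper itself offers no proof of this lemma at all: items (i)--(iii) are stated as standard stochastic-order calculus, and items (iv)--(v) are simply attributed to the cited reference~\cite{Oehlert1992ANO}, so there is no internal argument to compare against. Your treatment of (i)--(iii) (unwinding definitions, the union bound $\{|Y_n+Z_n|\ge\epsilon\}\subseteq\{|Y_n|\ge\epsilon/2\}\cup\{|Z_n|\ge\epsilon/2\}$, and the observation that $o_p(1)$ implies $O_p(1)$) is exactly the standard argument and is complete. The genuinely valuable part of your write-up is the flag on (iv)--(v): as literally stated, continuity alone does not preserve the $1/\sqrt{n}$ rate --- e.g.\ $r(x)=\sqrt{|x|}$ at $\bs{x}_o=0$ turns an $O_p(1/\sqrt{n})$ perturbation into one of order $O_p(n^{-1/4})$ --- so the lemma needs differentiability (or a local Lipschitz condition) at $\bs{x}_o$, which is what the delta-method argument in~\cite{Oehlert1992ANO} actually uses; your Taylor-expansion proof under that hypothesis, with the remainder controlled by $\sqrt{n}\,o(\|\bs{x}_n-\bs{x}_o\|)=o_p(1)$, is sound. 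This strengthening is harmless for the paper, since every mapping to which (iv)--(v) are applied (matrix inversion in Lemma~\ref{lemma:approximation_X_a}, the smooth fusion map in Proposition~\ref{prop:Asymptotic property of Gf}) is in fact smooth near the point of expansion, but the hypothesis ``continuous'' in the lemma statement should really read ``differentiable at $\bs{x}_o$'' for the rate claims to hold.
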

	
 Now suppose $\bs x \in \mathbb{R}^d$ is a parameter to be estimated, and ${\bs Y}_{1:n} \triangleq \{Y_i\}_{i=1}^n$ is a collection of $n$ independent measurements, where $Y_i$ is generated from a PDF $p(Y | \bs{x})$. Let $\hat{\bs{x}}_n$ be an estimator of $\bs{x}$ derived from the $n$ measurements. 
	\begin{definition}[Consistency in statistical theory]\label{def:consistency}
		The estimator $\hat{\bs{x}}_n$ is said to be  consistent in statistical theory if it converges in probability to the true value of the state $\bs{x}$. This is defined as:
		\begin{equation*}
			\lim_{n\rightarrow\infty} {P} \left( \| \hat{\bs{x}}_n-\bs{x} \| >\epsilon \right)=0,\quad\text{for any} \quad\epsilon>0.
		\end{equation*}
	\end{definition}
	
	Consistency implies that as the quantity of measurements used increases, the sequence of estimates converges to the true value. It is important to note that this definition of consistency differs from the one widely used in filter, where the latter characterizes the relationship between the estimated and actual covariance matrices\cite{Huang2008AnalysisAI}. In the rest of this paper, which  definition we are referring to  will be made clear from the context.
	
	\begin{definition}[$\sqrt{n}$-consistency]\label{def:n-consistent}
		We say that $\hat{\bs{x}}_n$ is a $\sqrt{n}$-consistent estimator of $\bs{x}$ if 
		$\hat{\bs{x}}_n-\bs{x} = O_p(\frac{1}{\sqrt{n}})$.
	\end{definition}
	It is noteworthy that $\sqrt{n}$-consistency implies consistency. In addition, it also characterizes the rate ($1/\sqrt{n}$) at which an estimator converges to the true value in probability concerning the quantity of samples.
	
	\begin{definition}[Efficiency]\label{def:efficiency}
		An unbiased estimator $\hat{\bs{x}}_n$ is said to be efficient if its mean squared error (MSE), given by
		$$
		\operatorname{MSE}(\hat{\bs{x}}_n)= \mathbb E \left[ \|\hat{\bs{x}}_n-\bs{x}\|^2 \right],
		$$
		achieves the theoretical lower bound --- Cram\`er-Rao  lower bound~(CRLB)\footnote{There is no unbiased estimator which is able to estimate the parameter with less variance than the Cram\`er-Rao lower bound.}. In particular, $\hat{\bs{x}}_n$ is considered to be asymptotically efficient if it reaches the CRLB when $n \rightarrow \infty$. 
	\end{definition}
	The CRLB is calculated by:
	\begin{equation*}
		\operatorname{CRLB}=\operatorname{trace}(\bs{F}^{-1}),
	\end{equation*} 
	where $\bs{F}$ is the Fisher information matrix, which is defined as 
	$$
	\bs{F}\triangleq\mathbb{E} \left[ \frac{\partial \ell({\bs x};{\bs Y}_{1:n}) }{\partial{\bs x}}\frac{\partial \ell({\bs x};{\bs Y}_{1:n}) }{\partial {\bs x}^\top }\right],
	$$ where $\ell({\bs x};{\bs Y}_{1:n})\triangleq\log{p({\bs Y}_{1:n}|{\bs x})}$, which is called the log-likelihood function.  Efficiency serves as a crucial measure of the quality of an estimator. An efficient estimator achieves the smallest MSE, that is, CRLB, among all unbiased estimators.

	\section{Sensor Models and Problem Formulation}\label{sec:model}
	In this section, we introduce our IMU model and outline the observation models for the camera and LiDAR sensors. Furthermore, we define two problems that are the focus of the investigation.  
	
	To enhance the readability of the paper,  we list symbols used frequently in the Nomenclature table.
	\begin{framed}
		\nomenclature[01]{\textbf{List of Symbols}}{\textbf{Descriptions}}
		\nomenclature[02]{\(\{\mathcal{W},\mathcal{I}, 
			\mathcal{C}, \mathcal{L}\}\)}{ Frames of global, IMU, camera, and LiDAR }
		\nomenclature[03]{\(\bs{X}\in SE_2(3)\)}{State of IMU in $\left\lbrace \mathcal{W}\right\rbrace $ }
		\nomenclature[04]{\(\bs{T}_C,\bs{T}_{L}\in SE(3)\)}{State of camera and LiDAR in $\left\lbrace \mathcal{W}\right\rbrace $ }
		\nomenclature[05]{\(\bs{R}_{I},\bs{R}_{C},\bs{R}_{L}\in SO(3)\)}{Rotations of IMU, camera, and LiDAR in $\left\lbrace \mathcal{W}\right\rbrace $}
		\nomenclature[06]{\(\bs{p}_{I},\bs{p}_{C},\bs{p}_{L}\in \mathbb{R}^3\)}{Positions of IMU, camera, and LiDAR in $\left\lbrace \mathcal{W}\right\rbrace $ }
		\nomenclature[07]{\(\bs{\omega}_I\)}{Rotational velocity of IMU}
		\nomenclature[08]{\(\bs{\omega}_m\)}{Measurement of rotational velocity}
		\nomenclature[09]{\(\bs{a}_I\)}{Acceleration of IMU in $\left\lbrace \mathcal{W}\right\rbrace $}
		\nomenclature[10]{\(\bs{a}_m\)}{Measurement of acceleration of IMU}
		\nomenclature[11]{\(\bs{b}_g,\bs{b}_a\)}{Bias of the IMU gyroscope and accelerometer }
		\nomenclature[12]{\(\bs{z}_{C},\bs{z}_{L}\)}{Measurements of camera and LiDAR}
		\nomenclature[13]{\(\eta\)}{Right-invariant error of the true state and estimator of IMU}
		\nomenclature[14]{\(\xi\)}{Exponential coordinates of $\eta$ }
		\nomenclature[15]{\(\bs{P}\)}{Covariance of $\xi$}
		\nomenclature[16]{\(\bs{K}\)}{Kalman gain}
		\nomenclature[17]{\(\bar{(\cdot)}\)}{Predicted variable of $(\cdot)$}
		\nomenclature[18]{\(\hat{(\cdot)}\)}{Updated variable of $(\cdot)$}
		\nomenclature[19]{\(\mu^{(l)}\)}{Value at the $l$th step in the iterative update step}
		\nomenclature[20]{\(\delta^{(l)}\)}{Lie exponential coordinates of $\mu^{(l)}\bar{\bs{X}}^{-1}$}
		\printnomenclature[1in]
	\end{framed}
	
	\subsection{State representation in a multisensor system}
	
	An inertial state $\bs{X}$ to be estimated includes the orientation, position, and velocity of the IMU in $\{\mathcal{W}\}$,  which are represented by $\bs{R}_{I}$,  $\bs{p}_{I}$ and $\bs{v}_{I}$, respectively. The state writes by the following matrix:
	\begin{equation*}
		\begin{split}
			\bs{X}&\triangleq\begin{bmatrix}
				\bs{R}_{I}& \bs{p}_{I} &  \bs{v}_{I}\\
				\bs{0}&\multicolumn{2}{c}{\begin{matrix}
						\bs{I}_2
			\end{matrix}}\end{bmatrix}\in SE_2(3).
		\end{split}
	\end{equation*} 
	The orientation and position of the monocular camera and LiDAR in $\{\mathcal{W}\}$  can be modeled in $SE(3)$. These states are represented by $\bs{R}_{C}$, $\bs{p}_{C}$, $\bs{R}_{L}$ and $\bs{p}_{L}$, respectively, and write in the following matrices:
	\begin{equation*}
		\bs{T}_{C}\triangleq\begin{bmatrix}
			\bs{R}_{C}& \bs{p}_{C}\\
			\bs{0}_{1\times3}&{1}\end{bmatrix},\quad			\bs{T}_{L}\triangleq\begin{bmatrix}
			\bs{R}_{L}& \bs{p}_{L}\\
			\bs{0}_{1\times3}&{1}\end{bmatrix}\in SE(3).
	\end{equation*}
	The sensors are assumed to be fixed on the robot's body, and the states of the sensors can be expressed in terms of a relative transformation between them. The relative transformations of the camera and LiDAR in $\{\mathcal{I}\}$ are denoted by $^{I}\bs{T}_{C}$, $^{I}\bs{T}_{L}\in SE(3)$:
	\begin{equation}\label{eqn:relative transformation}
		^{I}\bs{T}_{C}\triangleq\begin{bmatrix}
			^{I}\bs{R}_C&^{I}\bs{p}_C\\
			\bs{0}_{1\times3}&1
		\end{bmatrix},\quad^{I}\bs{T}_{L}\triangleq\begin{bmatrix}
			^{I}\bs{R}_L&^{I}\bs{p}_L\\
			\bs{0}_{1\times3}&1
		\end{bmatrix}.
	\end{equation}

	\subsection{Kinematics of the IMU}
	{The IMU measures the ego motion of the robot.} Its measurements of the gyroscope $	\bs{\omega}_{m}$ and accelerometer $\bs{a}_{m}$ are modeled as
	\begin{equation}\label{eqn:IMU measurement}
		\bs{\omega}_{m} =\bs{\omega}_I +\bs{b}_{{g}} +\bs{n}_{{g}} ,\quad
		\bs{a}_{m} =\bs{R}_{I}^\top \left(\bs{a}_{I} - \bs{g}\right)+\bs{b}_{{a}} +\bs{n}_{{a}} ,    
	\end{equation}
	%where $\bs{\omega}_{I} $ and $\bs{a}_{I} $ denote IMU's rotational velocity and acceleration in the world frame,
	where the measurement noises $\bs{n}_{g}$ and $\bs{n}_{a}$ follow independent zero-mean white Gaussian process with covariance matrices  $\sigma_{g}^2\bs{I}_{3}$ and $\sigma_{a}^2\bs{I}_{3}$, $\bs{g}=\begin{bmatrix}0&0&-9.81\end{bmatrix}^\top \in\mathbb{R}^3$ is the gravitational acceleration and $\bs{b}_g, \bs{b}_a$ denotes the bias vectors affecting the IMU gyroscope and accelerometer.  The biases are modeled using the typical ``Brownian motion'' model to capture the slowly time-varying nature of these parameters:
	\begin{equation}\label{eqn:bias}
		\dot{\bs{b}}_{{g}} =\bs{n}_{{b g}},\quad \dot{\bs{b}}_{{a}} =\bs{n}_{{b a}}, 
	\end{equation}
	where the measurement noises $\bs{n}_{{bg}}$ and $\bs{n}_{{ba}}$ are typically modeled as zero-mean white Gaussian processes with covariance matrix $\sigma_{bg}^2\bs{I}_{ 3}$ and $\sigma_{ba}^2\bs{I}_{3}$.  We note that $\sigma_g,\sigma_a,\sigma_{bg}$ and $\sigma_{ba}$ can be calibrated offline.
	
	The kinematics of state $\bs{X}$ is given by the following equations:
	\begin{align}
		\dot{\bs{R}}_{I} &=\bs{R}_{I} 	\bs{\omega}_I ^{\wedge},\quad
		\dot{\bs{p}}_{I} =\bs{v}_{I} ,\quad
		\dot{\bs{v}}_{I} =\bs{a}_{I} .\label{eqn:IMU_state_dynamics}
	\end{align} Since the increment of states $\bs{\omega}_I, \bs{a}_I$ is measured by the IMU, the whole dynamic can be expressed as 
	\begin{align*}
		\dot{\bs{R}}_{I} &=\bs{R}_{I}(\bs{\omega}_{m} -\bs{b}_{{g}} -\bs{n}_{{g}})^{\wedge},\quad\dot{\bs{p}}_{I} =\bs{v}_{I},\\
		\dot{\bs{v}}_{I} &=\bs{R}_{I}(\bs{a}_m -  \bs{b}_a - \bs{n}_a)+\bs{g} ,
	\end{align*}
	which writes in the matrix differential equation as
	\begin{align}\label{eqn:imu_true_dynamics}
		\dot{\bs{X}} =\bs{X} \bs{v}_b+\bs{v}_g\bs{X} +\bs{M}\bs{X} \bs{N},
	\end{align}
	where $\bs{M}\triangleq\begin{bmatrix}
		\bs{I}_{3}
		&\bs{0}_{3\times2}\\
		\bs{0}_{2\times3}&
		\begin{matrix}
			\bs{0}_{2}
		\end{matrix}
	\end{bmatrix}$ 
	and $\bs{N}\triangleq\begin{bmatrix}
		\bs{0}_{4\times3}
		&\rvline&\bs{0}_{4\times2}\\
		\hline
		\bs{0}_{1\times3}&\rvline&
		1\quad 0
	\end{bmatrix},$ and 	
	$\bs{v}_b\triangleq\begin{bmatrix}
		(\bs{\omega}_m -  \bs{b}_{g} - \bs{n}_g ) ^\wedge
		&\rvline&\bs{0}_{3\times1}\quad\bs{a}_m -  \bs{b}_a - \bs{n}_a\\
		\hline
		\bs{0}_{2\times3}&\rvline&
		\bs{0}_{2}
	\end{bmatrix}\in\mathfrak{se}_2(3)$ and 
	$\bs{v}_g\triangleq\begin{bmatrix}
		\bs{0}_{3}
		&\rvline&\bs{0}_{3\times1}\quad\bs{g}\\
		\hline
		\bs{0}_{2\times3}&\rvline&
		\begin{matrix}
			\bs{0}_{2}
		\end{matrix}
	\end{bmatrix}\in\mathfrak{se}_2(3)$
	are inputs relative to $\{\mathcal{I}\}$ and $\{\mathcal{W}\}$ respectively.	
	%	The temporal offsets of IMU-camera and IMU-LiDAR are modeled as a random walk with white noise $\bs{n}_{IC}$ and $\bs{n}_{IL}$ respectively {for compensating the 
		%		communication and computational latency. }
	
	\subsection{Measurement models of camera and LiDAR}
	Raw LiDAR and camera measurements are initially processed in the front-end to detect and track features that are assumed to be known planes and points in $\left\lbrace \mathcal{W} \right\rbrace$. Once the front-end processing is complete, the positions of point features in the local coordinates are provided to the back-end for updating state estimates.
	
	The position of the $i$th visual feature in $\left\lbrace \mathcal{W} \right\rbrace$ is denoted as $\bs{p}_{f_{i}}\in\mathbb{R}^3$. The measurement $\bs{z}_{C_i}$ of $\bs{p}_{f_{i}}$ in the camera image is described as follows:
	\begin{align}\label{eqn:camera_model}
		\bs{z}_{C_i} &={h}_{p}\left(\bs{R}^\top_{C}\bs{p}_{f_i}-\bs{R}_C^\top\bs{p}_C\right)+\bs{n}_{C_i},
	\end{align}
	where ${h}_p(\begin{bmatrix}
		p_{x} &p_{y} &p_{z}
	\end{bmatrix}^\top) \triangleq K_I \begin{bmatrix}p_{x} / p_{z} \quad p_{y} / p_{z}\end{bmatrix}^\top$ is the projection model where $ p_{z}>0$ and $K_I\triangleq\begin{bmatrix}
		f_x&0&u_0\\
		0&f_y&v_0\\
		0&0&1
	\end{bmatrix}$ is the intrinsic parameter matrix of the camera. Measurement noise $\bs{n}_{C_i}$ is modeled as zero-mean Gaussian with covariance $\Sigma_{C_i}\triangleq\sigma_{C}^{2} \bs{I}_{2}$. 
	
	The LiDAR measurement of the $j$th point feature $\bs{p}_{f_j}$ is denoted by $\bs{z}_{L_j}$, which is expressed as: \begin{equation}\label{eqn:LiDAR_model} \bs{z}_{L_j} =\bs{R}^\top_{L}\bs{p}_{f_j}-\bs{R}^\top_{L}\bs{p}_{L}+\bs{n}_{L_j}, \end{equation} where $\bs{n}_{L_j}$ is a zero mean Gaussian noise with a covariance of $\Sigma_{L_j}\triangleq\sigma_{L}^{2} \bs{I}_{3}$. An online calibration procedure can be used to determine suitable values for $\sigma_{C}$ and $\sigma_{L}$, in order to improve the accuracy of estimates. Details of the online calibration can be found in Section~\ref{sec:initial condition of the GN}.
	To improve the robustness of the feature association and level the computational load for LIO, we use the \textit{point-to-plane} method i.e., $\bs{p}_{f_j}$ is assumed to be lying on a small plane formed by its nearby map points.  Let $\bs{u}_j$ be the normal vector of the plane and $\bs{q}_j$ be some point lying on that plane. We have\begin{equation}\label{eqn: Lidar_icp_problem}
		0=\bs{u}_{j}^\top (\bs{p}_{f_j}-\bs{q}_j).
	\end{equation}
	This point-to-plane model has been studied in the existing literature~\cite{lin2021r}\cite{xu2022fast}.
	
	For simplicity, the residuals derived from~\eqref{eqn:camera_model} and \eqref{eqn: Lidar_icp_problem} for $n$ features are expressed with:
	\begin{align}\label{eqn:residuals}
		&r_{C}\triangleq\begin{bmatrix}
			r^\top_{C_1}&\cdots&r^\top_{C_n}
		\end{bmatrix}^\top,r_{L}\triangleq\begin{bmatrix}
			r_{L_1}&\cdots&r_{L_n}
		\end{bmatrix}^\top,\\
		&r_{{C_i}}(\bs{T})\triangleq\bs{z}_{C_i}-{h}_{p}\left(\bs{R}^\top_{C}\bs{p}_{f_i}-\bs{R}_C^\top\bs{p}_C\right),\nonumber\\
		&r_{{L_i}}(\bs{T})\triangleq\bs{u}_{j}^\top (\bs{R}_{L}\bs{z}_{L_j}+\bs{p}_{L}-\bs{q}_j),\nonumber
	\end{align}
	where $\bs{T}_C$ and $\bs{T}_L$ can be substituted with the pose of $\bs{X}$ denoted by $\bs{T}$ due to~\eqref{eqn:relative transformation}. The covariance of $r_C$ and $r_L$ are:
	\begin{equation}\label{eqn:covariance_residual}
		\Sigma_C\triangleq\operatorname{diag}\left\lbrace \Sigma_{C_1},\cdots,\Sigma_{C_n} \right\rbrace ,\Sigma_L\triangleq\sigma^2_L\bs{I}_n. 
	\end{equation}
	
	%It is noted that the corresponding weight of the LiDAR residual~(i.e., the inverse of the covariance) will be changed $1/\sigma^2_L$ due to~\eqref{eqn: Lidar_icp_problem}.
	\subsection{Problem formulation}\label{sec:problem formulation}
	%	A filtering-based method is a popular approach for SLAM due to its computational efficiency, ability to handle non-linear systems, robustness to sensor stochastic noise, and flexibility to adopt different sensor modalities. Further, linearization errors  in the conventional Extended Kalman Filtering~(EKF) introduce drift in the estimate and render the filter for SLAM inconsistent~\cite{Barrau2015AnEA}, which has an effect leading the estimator become over-confident, resulting in non-optimal information fusion. Compared with EKF, Right-invariant Extended Kalman Filtering~(RInEKF) algorithm in InEKF is able to avoid the inconsistency problem due to maintaining an estimate of the unobservable directions. Motivated by these, this paper aims to propose the RInEKF method  for handling the real-time navigation and mapping of the robot using multi-sensors. Different from these works~\cite{Barrau2015AnEA,Hartley2020,Xinghan2022}, the Gaussian Newton method is also adopted in the Lie group for accuracy in the update step and the consistency and efficiency can also be considered.
	We focus on the development of an efficient and optimal filter, called  EIKF, for pose estimation in the context of navigation problems. We have previously defined the continuous dynamics of $\bs{X}$ in~\eqref{eqn:imu_true_dynamics}. We now move on to describing the estimates at their discrete sensor sampling instants. To do this, we use the symbol ``$k$'' to denote the time of the $k$th sensor sampling instant. We denote the measurements of visual features or cloud point features at instant $k$ as $\bs{z}_k$, which accordingly is equal to $[\bs{z}_{k,C_1}^\top,\ldots,\bs{z}_{k,C_{n_k}}^\top]^\top$ for the camera or $[\bs{z}_{k,L_1}^\top,\ldots,\bs{z}_{k,L_{n_k}}^\top]^\top$ for the LiDAR, and then denote 
 the collective measurements up to $k$  as $\mathcal{Z}_{k}\triangleq\{\bs{z}_{1},\ldots,\bs{z}_{k}\}$.  The conditional PDF of state $\bs{X}_{k}$ is denoted as $p(\bs{X}_{k} |\mathcal{Z}_{k})$.
	
	The Bayes filtering frame can be executed recursively to (approximately) compute $p(\bs{X}_{k+1} |\mathcal{Z}_{k+1})$ by first predicting $p(\bs{X}_{k+1}| \mathcal{Z}_{k})$ with the Markov property of the discretized dynamics, which is 
	\begin{equation}\label{eqn:predict_imu} p(\bs{X}_{k+1} |\mathcal{Z}_{k})=\int p(\bs{X}_{k+1} |\bs{X}_{k})p(\bs{X}_{k} |\mathcal{Z}_{k})\d{\bs{X}_{k}}. \end{equation}
	Upon the arrival of either \eqref{eqn:camera_model} or \eqref{eqn:LiDAR_model} providing $\bs{X}_{k+1}$, the next step is to update $p(\bs{X}_{k+1}|\mathcal{Z}_{k+1})$, which is determined by \begin{equation}\label{eqn:update} p(\bs{X}_{k+1} |\mathcal{Z}_{k+1})=\frac{p(\bs{X}_{k+1}|\mathcal{Z}_{k})p(\bs{z}_{k+1}|\bs{X}_{k+1})}{p(\bs{z}_{k+1} |\mathcal{Z}_k)}. \end{equation} 
	The equality is a result of the Bayes rule, with $\bs{z}_{k+1}$ only depending on $\bs{X}_{k+1}$. The conditional probability of $\bs{X}_{k+1}$ before and after the arrival of $\bs{z}_{k+1}$ is called the \textit{a priori} and the \textit{a posteriori}, respectively. We conclude the problems to be solved:
  \setcounter{problem}{0}
	\renewcommand{\theproblem}{1.\arabic{problem}} 
\begin{problem}[Exact Prediction]\label{prob:original predict problem}
		Given  $p(\bs{X}_{k}|\mathcal{Z}_{k})$ and~\eqref{eqn:imu_true_dynamics}, our objective is to predict the state $p(\bs{X}_{k+1}|\mathcal{Z}_{k})$ in~\eqref{eqn:predict_imu}
	\end{problem}

	\begin{problem}[Exact Update]\label{prob:update problem}
		Given the prior probability $p(\bs{X}_{k+1}|\mathcal{Z}_{k})$ with measurement $\bs{z}_{k+1}$, our objective is to compute $p(\bs{X}_{k+1}|\mathcal{Z}_{k+1})$ in~\eqref{eqn:update}.
	\end{problem} 
	
	limited studies have investigated effective method to exactly compute~\eqref{eqn:update} due to the non-linearity in the kinematics of $\bs{X}$ in~\eqref{eqn:imu_true_dynamics} and in the measurement models, like~\eqref{eqn:camera_model}. Most filtering techniques~\cite{Mourikis2007AMC,xu2022fast,Lin2022R3LIVEAR} resort to linearization at the points of estimates or iterative schemes without convergence analysis.
	To tackle the issue, we first need to set up a prerequisite that we could construct a ``good'' virtual measurement, denoted by $\bs{T}_{\bs{z}_{k+1}}\in SE(3)$, from $\bs{z}_{k+1}$ through a measurable mapping, and given a known uncertainty level~$\Sigma_{\bs{z}_{k+1}}$ it can be statistically expressed as: 
	\begin{equation} \bs{T}_{\bs{z}_{k+1}}|\bs{X}_{k+1}\sim\mathcal{N}_{RG}(\bs{X}_{k+1},\Sigma_{\bs{z}_{k+1}}). \end{equation}
	With this, \eqref{eqn:update} can be approximately solved by considering the following update:
	\begin{equation}\label{eqn:update_2} p(\bs{X}_{k+1} |\mathcal{T}_{k+1})=\frac{p(\bs{X}_{k+1}|\mathcal{T}_{k})p(\bs{T}_{\bs{z}_{k+1}}|\bs{X}_{k+1})}{p(\bs{z}_{k+1} |\mathcal{T}_k)}, \end{equation} 
 where $\mathcal{T}_{k}\triangleq\{\bs{T}_{\bs{z}_{1}},\ldots,\bs{T}_{\bs{z}_{k}}\}$.
	
	With the presence of $\mathcal{T}_{k}$ and $\bs{T}_{\bs{z}_{k+1}}$, we will
 turn to investigate 
 Problem \ref{prob:original predict problem} and Problem \ref{prob:update problem} as approximates to Problem \ref{prob:predict problem} and Problem \ref{prob:new update problem}, respectively.
 \setcounter{problem}{0}
	\renewcommand{\theproblem}{2.\arabic{problem}} 
	\begin{problem}[Prediction with Virtual Measurements]\label{prob:predict problem}
		Given  $p(\bs{X}_{k}|\mathcal{T}_{k})$ and~\eqref{eqn:imu_true_dynamics}, our objective is to predict the state $p(\bs{X}_{k+1}|\mathcal{T}_{k})$ in~\eqref{eqn:predict_imu}
	\end{problem} 
Section~\ref{sec:filtering_design_predict} provides the prediction step of EIKF for Problem~\ref{prob:predict problem}.

	\begin{problem}[ Update with Virtual Measurements]\label{prob:new update problem}
		Given the probability of \textit{a priori} $p(\bs{X}_{k+1} |\mathcal{T}_{k})$ with the vitual measurement $\bs{T}_{\bs{z}_{k+1}}$, our objective is to compute $p(\bs{X}_{k+1} |\mathcal{T}_{k+1})$ in~\eqref{eqn:update_2}.
	\end{problem}
 Section~\ref{sec:update} and Section~\ref{sec:theoretical analysis} jointly 
	provide an solution to Problems~\ref{prob:new update problem} under the condition of LEMs jointly in  .
 Let us be precise. 
 In Section~\ref{sec:update}, 
 we develop the algorithm of update step with LEMs; in Section~\ref{sec:theoretical analysis} we look into the techniques for constructing $\bs{T}_{\bs{z}_{k+1}}$ and justify its asymptotical optimallity in the MMSE sense, and then derive the \textit{a posteriori} of \eqref{eqn:update_2}.
	
	We end this section by showing that the statement that $\bs{X}_{k+1}|\mathcal{T}_k$ and $\bs{X}_{k+1}| \mathcal{T}_{k+1}$ in any $k$ can be obtained in $\mathcal{N}_{RG}$ is valid. At $k$ with $\bs{X}_{k}| \mathcal{T}_k$ being $\mathcal{N}_{RG}$, we can accurately approximate $\bs{X}_{k+1} |\mathcal{T}_k$ as a $\mathcal{N}_{RG}$, discussed in Section~\ref{sec:propagation}. When \textit{a priori} $\bs{X}_{k+1} |\mathcal{T}_k$ and the virtual measurement $\bs{T}_{\bs{z}_{k+1}}$ are available, we can also express the probability $\bs{X}_{k+1} |\mathcal{T}_{k+1}$  as a $\mathcal{N}_{RG}$, explained in Section~\ref{sec:Optimality_fusion}. These two steps demonstrate that the statement holds for each $k$, which keeps $\bs{X}_{k} |\mathcal{T}_{k}$ in $\mathcal{N}_{RG}$.

	\section{Prediction in the Filter}\label{sec:filtering_design_predict}
	In the section, the probability $p( \bs{X}_{k}|\mathcal{T}_{k})$ is preconditioned to be  $\mathcal{N}_{RG}(\hat{\bs{X}}_{k},\hat{\bs{P}}_{k})$.
	To solve Problem~\ref{prob:predict problem}, we first introduce the uncertainty to states in Section~\ref{sec:invariant error}. Moreover, we aim to derive the uncertainty propagation  in Section~\ref{sec:propagation}.
	\subsection{Associating uncertainty to states }\label{sec:invariant error}
	Let us first introduce the concept of invariant error following the work of \cite{Barrau}. Consider two trajectories $\bs{X} $ and $\bar{\bs{X}} $. The left- and right-invariant errors of two trajectories are defined by:
	\begin{align*}
		\textbf{(left-invariant error):}\quad&\bar{\bs{X}} ^{-1}\bs{X} \quad\text{or}\quad \bs{X} ^{-1}\bar{\bs{X}} ,\\
		\textbf{(right-invariant error):}\quad&\bs{X} \bar{\bs{X}} ^{-1}\quad\text{or}\quad \bar{\bs{X}} \bs{X} ^{-1}.
	\end{align*}
	% In our paper, we will further build upon the concept of invariant error, starting from the uncertainty of estimates and aiming to derive more accurate covariance, different from the InEKF approach proposed by~\cite{Barrau}.
	
	Consider the ``dummy'' system based on the IMU measurements~\eqref{eqn:IMU measurement} with the same mode of ~\eqref{eqn:IMU_state_dynamics}. The continuous-time dynamics of the estimated states $\bar{\bs{X}} $ writes:
	\begin{align*}
		\dot{\bar{\bs{R}}}_{I}&=\bar{\bs{R}}_{I}\left(\bs{\omega}_{m}-\bar{\bs{b}}_g\right)^\wedge,
		\dot{\bar{\bs{v}}}_{I}=\bar{\bs{R}}_{I}\left(\bs{a}_m-\bar{\bs{b}}_a\right)+\bs{g},\nonumber\\
		\dot{\bar{\bs{p}}}_{I}&=\bar{\bs{v}}_{I},
	\end{align*}
	where $\bar{\bs{b}}_g$ and $\bar{\bs{b}}_a$ represent the estimated biases for the IMU measurements and are exogenous to the ``dummy'' system. We briefly discuss the estimation of biases in~\eqref{eqn: IMU_augment_state_propagation}. The dynamics of $\bar{\bs{X}} $  is also expressed as a matrix differential equation:
	\begin{align}
		\dot{\bar{\bs{X}}}  =\bar{\bs{X}}  \bs{v}_b+\bs{v}_g\bar{\bs{X}}  +\bs{M}\bar{\bs{X}}  \bs{N}+\bar{\bs{X}}  \bs{n} ^\wedge,\label{eqn:IMU_estimator}
	\end{align}
	with $\bs{n} \triangleq\begin{bmatrix}
		(\bs{n}_{g}-\tilde{\bs{b}}_{g})^\top &\bs{0}_{1\times 3}&(\bs{n}_a-\tilde{\bs{b}}_a)^\top
	\end{bmatrix}^\top$ and $\tilde{\bs{b}}_g\triangleq\bar{\bs{b}}_g-\bs{b}_g$, $\tilde{\bs{b}}_a\triangleq\bar{\bs{b}}_a-\bs{b}_a$.
	
	Let $\bar{\eta}$ be the right-invariant error from $\bar{\bs{X}}$ to $\bs{X}$, which is expressed as $ \bar{\eta} \triangleq\bs{X} \bar{\bs{X}} ^{-1}. $ The Lie exponential coordinate of $\bar{\eta}$, denoted as $\bar{\xi}$, is the logarithm of $\bs{X} \bar{\bs{X}} ^{-1}$: \begin{equation}\label{eqn:prior definition}
		\bar{\xi} \triangleq \log{(\bs{X} \bar{\bs{X}} ^{-1})}. 
	\end{equation}
	
	We focus on the right-invariant error, since $\bs{X}\sim\mathcal{N}_{RG}$ is equivalent to  $\bar{\xi}\sim\mathcal{N}$, when the covariance is small as explained in Section~\ref{sec:intro_CGN} . We next look into the evolution of $\bar{\xi}$ during the prediction step.

	%	In addition, we also estimate the biases of the IMU and the time offset, which cannot be combined into the Lie group directly. We define the augmented nonlinear errors that stack up estimated the time-offset error $\tilde{t}_{IC}$ and the estimated bias error $\tilde{\bs{b}}_{\bs{g}}, \tilde{\bs{b}}_{\bs{a}}$ in the nonlinear error vector $\tilde{\xi}\in\mathbb{R}^{22}$
	%	\begin{equation}
		%		\tilde{\xi}=\begin{bmatrix}
			%			\xi_I^\top&\tilde{\bs{b}}_{g}&\tilde{\bs{b}}_{a}&\xi_{IC}^\top&\tilde{t}_{IC}
			%		\end{bmatrix}^\top.
		%	\end{equation}
	%	Although we use the augmented nonlinear errors, the ``imperfect'' InEKF maintain the symmetry of the Lie group, which maintains the observability of the original system. 
	\subsection{ Statistics of associating uncertainty}\label{sec:propagation}
	Considering the state $\bs{X} $ with kinematics~\eqref{eqn:imu_true_dynamics} and an estimator $\bar{\bs{X}} $ with kinematics~\eqref{eqn:IMU_estimator}, we obtain the dynamics of their right-invariant errors as follows:
	\begin{align*}
		\dot{\bar{\eta}} &=\bs{v}_g\bar{\eta} -\bar{\eta} \bs{v}_g+\bs{M}\bar{\eta} \bs{N}-\bar{\eta}\bar{\bs{X}}\bs{n}^\wedge\bar{\bs{X}}^{-1}.
	\end{align*}
	The dynamics of its exponential coordinate has been studied in~\cite{Xinghan2022,li2023errors} and is covered in the following.	\begin{lemma}[Dynamics of uncertainty]\label{lemma:dynamic_xi}
		The continuous-time dynamics of the Lie exponential coordinate $\bar{\xi} $ of $\bar{\eta} $ can be written in the form of a non-linear differential equation with linear drift:
		\begin{equation}\label{eqn:extended_pose_error_dynamics}
			\dot{\bar{\xi} }=\bs{A} \bar{\xi} -\dexp_{\bar{\xi} }^{-1}\Ad_{\bar{\bs{X}} }\bs{n},
			\bs{A} \triangleq\begin{bmatrix}\bs{0}_3&\bs{0}_3&\bs{0}_3\\
				\bs{0}_3&\bs{0}_3&\bs{I}_3\\
				\bs{g}^{\wedge}&\bs{0}_3&\bs{0}_3
			\end{bmatrix} ,
		\end{equation}
		with $\Ad_{\bar{\bs{X}}}$ defined in~\eqref{eqn:definition_of_adjoint}.
	\end{lemma}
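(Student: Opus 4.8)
The plan is to differentiate the defining identity $\bar{\eta}=\exp(\bar{\xi}^\wedge)$ in time and feed in the matrix ODE for $\bar{\eta}$ recorded just above the statement, $\dot{\bar{\eta}}=\bs{v}_g\bar{\eta}-\bar{\eta}\bs{v}_g+\bs{M}\bar{\eta}\bs{N}-\bar{\eta}\bar{\bs{X}}\bs{n}^\wedge\bar{\bs{X}}^{-1}$. First I would establish the kinematic link between $\dot{\bar{\xi}}$ and the group velocity: writing $\bar{\eta}(t+\tau)\bar{\eta}(t)^{-1}=\exp(\epsilon^\wedge)$ with $\epsilon=(\dot{\bar{\eta}}\bar{\eta}^{-1})^\vee\tau+o(\tau)$ small (note $\dot{\bar{\eta}}\bar{\eta}^{-1}\in\mathfrak{se}_2(3)$ since $\bar{\eta}$ is a curve in $SE_2(3)$) and invoking the first-order BCH of Lemma~\ref{lemma: compound of two matrix exponentials}(i) with $y=\bar{\xi}(t)$, the limit $\tau\to0$ yields $\dot{\bar{\xi}}=\dexp_{\bar{\xi}}^{-1}\big(\dot{\bar{\eta}}\bar{\eta}^{-1}\big)^\vee$, i.e. the time-derivative of the logarithm equals the inverse left Jacobian applied to the right-trivialized velocity. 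Post-multiplying the $\bar{\eta}$-ODE by $\bar{\eta}^{-1}$ then splits $\dot{\bar{\eta}}\bar{\eta}^{-1}$ into a deterministic part $\bs{v}_g-\bar{\eta}\bs{v}_g\bar{\eta}^{-1}+\bs{M}\bar{\eta}\bs{N}\bar{\eta}^{-1}$ and a noise part $-\bar{\eta}\bar{\bs{X}}\bs{n}^\wedge\bar{\bs{X}}^{-1}\bar{\eta}^{-1}$, which I would treat separately.

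For the noise part I would use $\bar{\eta}\bar{\bs{X}}=\bs{X}$ and $\bar{\bs{X}}^{-1}\bar{\eta}^{-1}=\bs{X}^{-1}$, so that $-\bar{\eta}\bar{\bs{X}}\bs{n}^\wedge\bar{\bs{X}}^{-1}\bar{\eta}^{-1}=-\bs{X}\bs{n}^\wedge\bs{X}^{-1}=-(\Ad_{\bs{X}}\bs{n})^\wedge$; peeling off $(\cdot)^\wedge$ and applying $\dexp_{\bar{\xi}}^{-1}$ gives the exact contribution $-\dexp_{\bar{\xi}}^{-1}\Ad_{\bs{X}}\bs{n}$. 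Writing $\Ad_{\bs{X}}=\Ad_{\bar{\eta}}\Ad_{\bar{\bs{X}}}=e^{\ad_{\bar{\xi}}}\Ad_{\bar{\bs{X}}}$ and discarding the resulting $O(\|\bar{\xi}\|\,\|\bs{n}\|)$ cross terms (consistent with the concentrated-Gaussian regime, in which $\bs{n}$ is carried at first order) reduces this to the stated $-\dexp_{\bar{\xi}}^{-1}\Ad_{\bar{\bs{X}}}\bs{n}$.

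The heart of the proof is showing that the deterministic part collapses \emph{exactly} to the linear term $\bs{A}\bar{\xi}$ with the stated constant $\bs{A}$ --- the ``log-linear'' (group-affine) property that the extended pose $SE_2(3)$ enjoys and that distinguishes the invariant error from an ordinary error-state linearization (cf.~\cite{Barrau}). Here I would substitute the closed form of the $SE_2(3)$ exponential, whose rotation block is $\exp_m(\bs{\phi}^\wedge)$ and whose two translational blocks are $\bs{J}_l(\bs{\phi})\bs{\rho}_1$ and $\bs{J}_l(\bs{\phi})\bs{\rho}_2$, with $(\bs{\phi},\bs{\rho}_1,\bs{\rho}_2)$ the three $\mathbb{R}^3$-blocks of $\bar{\xi}$ and $\bs{J}_l$ the $SO(3)$ left Jacobian. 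A short block computation gives $\bs{M}\bar{\eta}\bs{N}\bar{\eta}^{-1}=(\bs{0},\,\bs{J}_l(\bs{\phi})\bs{\rho}_2,\,\bs{0})^\wedge$ (the $\dot{\bs{p}}=\bs{v}$ coupling) and $\bs{v}_g-\bar{\eta}\bs{v}_g\bar{\eta}^{-1}=\big(\bs{0},\,\bs{0},\,(\bs{I}-\exp_m(\bs{\phi}^\wedge))\bs{g}\big)^\wedge$. Since the first ($\bs{\phi}$) block of this $\mathbb{R}^9$-vector vanishes, only the diagonal $\bs{J}_l(\bs{\phi})^{-1}$ blocks of the block-triangular $\dexp_{\bar{\xi}}^{-1}$ act on it, and using the $SO(3)$ identity $(\bs{I}-\exp_m(\bs{\phi}^\wedge))\bs{g}=\bs{J}_l(\bs{\phi})\,\bs{g}^\wedge\bs{\phi}$ the left-Jacobian factors cancel, leaving $(\bs{0},\,\bs{\rho}_2,\,\bs{g}^\wedge\bs{\phi})=\bs{A}\bar{\xi}$ --- its $\bs{I}_3$ block coming from the velocity coupling and its $\bs{g}^\wedge$ block from gravity. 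Adding the two parts yields the claimed ODE. The main obstacle is precisely this last cancellation: one must carry the full nonlinear $SE_2(3)$ exponential and Jacobian rather than a first-order truncation, and it is the gravity identity above that makes the drift close up into a \emph{state-independent} matrix $\bs{A}$; the remaining manipulations are routine adjoint/BCH calculus already assembled in \cite{Xinghan2022,li2023errors}.
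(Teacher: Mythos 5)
Your derivation is correct, but note that the paper itself does not prove Lemma~\ref{lemma:dynamic_xi} at all — it defers to the cited references \cite{Xinghan2022,li2023errors} — so there is no in-paper argument to match; what you supply is the standard self-contained route, and it checks out. The kinematic identity $\dot{\bar{\xi}}=\dexp_{\bar{\xi}}^{-1}(\dot{\bar{\eta}}\bar{\eta}^{-1})^{\vee}$ via Lemma~\ref{lemma: compound of two matrix exponentials}(i) is right, and the heart of your computation — $\bs{M}\bar{\eta}\bs{N}\bar{\eta}^{-1}=(\bs{0},\,\bs{J}_l(\bs{\phi})\bs{\rho}_2,\,\bs{0})^{\wedge}$, $\bs{v}_g-\bar{\eta}\bs{v}_g\bar{\eta}^{-1}=(\bs{0},\bs{0},(\bs{I}-\exp_m(\bs{\phi}^{\wedge}))\bs{g})^{\wedge}$, the block-triangular sparsity of $\dexp_{\bar{\xi}}^{-1}$ inherited from $\ad_{\bar{\xi}}$ (including the vanishing $(2,3)$ and $(3,2)$ blocks, which your phrase ``only the diagonal blocks act'' tacitly uses), and the cancellation $(\bs{I}-\exp_m(\bs{\phi}^{\wedge}))\bs{g}=-\bs{\phi}^{\wedge}\bs{J}_l(\bs{\phi})\bs{g}=\bs{J}_l(\bs{\phi})\bs{g}^{\wedge}\bs{\phi}$ — is exactly the log-linear/group-affine mechanism that makes the drift collapse to the constant $\bs{A}$. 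The one place where you are slightly less than exact is the noise term: the exact right-trivialized contribution is $-\dexp_{\bar{\xi}}^{-1}\Ad_{\bs{X}}\bs{n}$, equivalently $-\dexp_{-\bar{\xi}}^{-1}\Ad_{\bar{\bs{X}}}\bs{n}$, and the lemma's mixed form $-\dexp_{\bar{\xi}}^{-1}\Ad_{\bar{\bs{X}}}\bs{n}$ only follows after discarding $O(\|\bar{\xi}\|\|\bs{n}\|)$ cross terms; you flag this explicitly, and it is harmless since the paper's own subsequent covariance propagation (e.g.\ treating $\bs{n}_{\operatorname{4th}}$ as independent of $\bar{\xi}$ and truncating $\dexp_{\bar{\xi}}^{-1}$) works at the same order of approximation.
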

	This result is of great significance for the prediction step, where in the deterministic case, the dynamics of $\bar{\xi}$ is linear. In the stochastic case with $\bs{n}$, the term including $\bs{n}$ appears to be uncorrelated and independent  with $\bar{\xi}$ since $\bar{\xi}$ is small and $\dexp_{\bar{\xi} }^{-1}\approx\bs{I}$. Therefore, $\bar{\xi}$ in~\eqref{eqn:extended_pose_error_dynamics} with the initial condition being a Gaussian can remain in a Gaussian state.
	
	However, the term involving $\dexp_{\bar{\xi} }^{-1}$ presents a challenge in analyzing the exact propagation of the covariance. Next, we continue to obtain a more accurate covariance of the error by approximating the higher-order terms of $\dexp_{\bar{\xi} }^{-1}$ in~\eqref{eqn:extended_pose_error_dynamics}. This approximation also allows us to continue treating~\eqref{eqn:extended_pose_error_dynamics} as a linear ordinary differential equation for practical purposes. 
	
	Let $\bs{n}'\triangleq\Ad_{\bar{\bs{X}}} \bs{n} $. Since the noises $\bs{n}_g,\bs{n}_a,\tilde{\bs{b}}_g,\tilde{\bs{b}}_a$ are independent, the covariance can be readily computed and written into:
	\begin{align}
		\bs{n}'&\sim\mathcal{N}(\bs{0}_{9\times1},\Sigma_{\bs{n}'}), \label{eqn:n'_I}\\
		\Sigma_{\bs{n}'}&\triangleq \Ad_{\bar{\bs{X}} }\text{diag}\left\lbrace \sigma^2_{g}\bs{I}_3+\bs{P}_{\tilde{b}_g},\bs{0}_{3},\sigma^2_{a}\bs{I}_3+\bs{P}_{\tilde{b}_a}\right\rbrace\Ad_{\bar{\bs{X}} }^\top\nonumber,
	\end{align}
	where $\bs{P}_{\tilde{\bs{b}}_g}\triangleq\mathbb{E}\left[\tilde{\bs{b}}_g\tilde{\bs{b}}^\top_g\right]$ and $\bs{P}_{\tilde{\bs{b}}_a}\triangleq\mathbb{E}\left[\tilde{\bs{b}}_a\tilde{\bs{b}}^\top_a\right]$ here are assumed to be known. Because $\dexp_{\bar{\xi}}^{-1}\approx\bs{I}_9-\frac{1}{2}\ad_{\bar{\xi} }+\frac{1}{6}\ad^2_{\bar{\xi} }$ as given in~\eqref{eqn:left_jacobian},
	let $\bs{n}_{\operatorname{4th}}$ be a new variable, which is equal to $(\bs{I}_9-\frac{1}{2}\ad_{\bar{\xi} }+\frac{1}{6}\ad^2_{\bar{\xi} })\bs{n}'$ and is assumed to be independent of ${\bar{\xi}}$ for simplicity in calculating the covariance. We can approximate~\eqref{eqn:extended_pose_error_dynamics}, which writes:
	\begin{align}\label{eqn:linearity_of_xi}
		\dot{{\bar{\xi}} }=\bs{A} \bar{\xi} -\bs{n}_{\operatorname{4th}},
	\end{align}
	with $\bs{n}_{\operatorname{4th} }\sim\mathcal{N}(\bs{0}_{9\times1},\Sigma_{\operatorname{4th}})$.  Here, $\Sigma_{\operatorname{4th}}$ can be calculated in the following procedure:
	\begin{align*}
		&\Sigma_{4th}=\Sigma_{\bs{n}' }+\frac{1}{6}\left(\bs{D}\Sigma_{\bs{n}' }+\Sigma_{\bs{n}' }\bs{D}^\top \right)+\frac{1}{4}\bs{B}, \\
		&\bs{D}\triangleq\mathbb{E}\left[\ad_{\bar{\xi} }^2\right]=\begin{bmatrix}
			\langle\langle \bar{\bs{P}}_{\theta,\theta}\rangle\rangle&\bs{0}_{3}&\bs{0}_{3}\\
			\langle\langle \bar{\bs{P}}_{\theta, p}+\bar{\bs{P}}_{\theta, p}^\top\rangle\rangle&\langle\langle \bar{\bs{P}}_{\theta,\theta}\rangle\rangle&\bs{0}_{3}\\
			\langle\langle \bar{\bs{P}}_{\theta, v}+\bar{\bs{P}}_{\theta, v}^\top\rangle\rangle&\bs{0}_3&\langle\langle \bar{\bs{P}}_{\theta,\theta}\rangle\rangle
		\end{bmatrix},\\
		&\bs{B}\triangleq\mathbb{E}\left[\ad_{\bar{\xi} }\bs{n} '\bs{n} '^\top\ad_{\bar{\xi} }^\top\right]=\begin{bmatrix}
			\bs{B}_{\theta,\theta}&\bs{B}^\top_{p,\theta}&\bs{B}^\top_{v,\theta}\\
			\bs{B}_{p,\theta}&\bs{B}_{p,p}&\bs{0}_3\\
			\bs{B}_{v,\theta}&\bs{0}_3&\bs{B}_{v,v}
		\end{bmatrix},\\
		&\bs{B}_{\theta,\theta}=	\langle\langle \bar{\bs{P}}_{\theta,\theta},\Sigma_{\theta,\theta}\rangle\rangle,\quad \bs{B}_{i,\theta}=\langle\langle\bar{\bs{P}}_{\theta,\theta},\Sigma^\top_{i,\theta}\rangle\rangle+\langle\langle\bar{\bs{P}}^\top_{\theta, i},\Sigma_{\theta,\theta}\rangle\rangle,\\
		&\bs{B}_{i,i}=\langle\langle\bar{\bs{P}}_{\theta,\theta},\Sigma_{i,i}\rangle\rangle+\langle\langle\bar{\bs{P}}^\top_{i,\theta},\Sigma_{i,\theta}\rangle\rangle+\langle\langle\bar{\bs{P}}_{i,\theta},\Sigma^\top_{i,\theta}\rangle\rangle\\
		&+\langle\langle\bar{\bs{P}}_{i,i},\Sigma_{\theta,\theta}\rangle\rangle,
	\end{align*}	
	where $i\in\left\lbrace\bs{p},\bs{v} \right\rbrace $ and $\bar{\bs{P}} \triangleq\mathbb{E}{\left[\bar{\xi} \bar{\xi}^\top \right]}=\begin{bmatrix}
		\bar{\bs{P}}_{\theta,\theta}&\bar{\bs{P}}_{\theta,p}&\bar{\bar{\bs{P}}}_{\theta,v}\\
		\bar{\bs{P}}_{p,\theta}&\bar{\bs{P}}_{p,p}&\bar{\bs{P}}_{p,v}\\
		\bar{\bs{P}}_{v,\theta}&\bar{\bs{P}}_{v,p}&\bar{\bs{P}}_{v,v}
	\end{bmatrix}$,
	$\Sigma_{\bs{n}' }=\begin{bmatrix}
		\Sigma_{\theta,\theta}&\Sigma_{\theta,p}&\Sigma_{\theta,v}\\
		\Sigma_{p,\theta}&\Sigma_{p,p}&\Sigma_{p,v}\\
		\Sigma_{v,\theta}&\Sigma_{v,p}&\Sigma_{v,v}
	\end{bmatrix}$. The calculus techniques for $D$ and $B$ can also be found in~\cite{Barfoot2014AssociatingUW,Brossard2020AssociatingUT} and the values of $\bar{\bs{P}}$ are provided and can be approximated from the preceding time step in the discrete-time system.
	\begin{remark}[State estimation with estimated biases]\label{rm: estimated bais propagation}
		In our research, we view the bias error $\tilde{\bs{b}}_{\cdot} \triangleq [\tilde{\bs b}_g^\top~\tilde{\bs b}_a^\top]^\top$ as one component of the noise $\bs{n}$ that affects the dynamics in \eqref{eqn:IMU_estimator}. When we view the bias $\bar{\bs{b}}_{\cdot}$ as the state to be estimated in the filter, the kinematics of the estimated biases can be expressed as follows:
		\begin{equation*}
			\dot{\bar{\bs{b}}}_g=\bs{0}_{3\times 1},\quad\dot{\bar{\bs{b}}}_a=\bs{0}_{3\times1},
		\end{equation*}
		and the augmented state error can be expressed by adding the vector differences $\tilde{\bs{b}}_{\cdot}$:
		\begin{align}\label{eqn: IMU_augment_state_propagation}
			\begin{bmatrix}
				\dot{\bar{\xi}}\\
				\dot{\tilde{\bs{b}}}_{\cdot}
			\end{bmatrix}=\begin{bmatrix}
				\bs{A} 
				& \bs{0}_{9\times 6} \\
				\bs{0}_{6\times9}  &
				\bs{0}_{6}
			\end{bmatrix}
			\begin{bmatrix}
				\bar{\xi}\\
				\tilde{\bs{b}}_{\cdot}
			\end{bmatrix}+\begin{bmatrix}
				\bs{I}_9
				& \bs{0}_{9\times 6} \\
				\bs{0}_{6\times 9}  &
				\bs{I}_6
			\end{bmatrix}
			\bs{n}_{\rm imu},
		\end{align}	
		where $\bs{n}_{\rm imu}\triangleq\begin{bmatrix}
			\bs{n}_{\operatorname{4th}}^\top&\bs{n}^\top_{ bg}&\bs{n}^\top_{ ba}
		\end{bmatrix}^\top\sim\mathcal{N}(\bs{0}_{15\times 1},\bs{Q}),$ and $\bs{Q}\triangleq\operatorname{diag}\left\lbrace\Sigma_{4th},\sigma^2_{bg}\bs{I}_3,\sigma^2_{ba}\bs{I}_3 \right\rbrace $\footnote{
			The augmentation technique is also used for developing 
			``imperfect'' InEKF in~\cite{barrau2015non}. The introduction of additional vector difference
			coins the term of ``imperfect'' InEKF, since it sacrifices all the properties of the InEKF, see~\cite{barrau2015non} for details.}. The covariance $\bar{\bs{P}}_{\tilde{\bs{b}}_g}$ and $\bar{\bs{P}}_{\tilde{\bs{b}}_a}$ in~\eqref{eqn:n'_I} can also be estimated at the prediction step.
	\end{remark}
	
	\begin{remark}[Existing works in $SE_2(3)$]
		Brossard et al.\cite{Brossard2020AssociatingUT} provided IMU propagation in $SE_2(3)$  from the perspective of discrete-time kinematics, while our work approaches this from the perspective of continuous-time kinematics. Both of the algorithms excel in achieving more accurate estimates than the one obtained by the first-order approximation~\cite{Barrau}. 
	\end{remark}
	The analysis of probability $p(\bs{X}_{k+1}|\mathcal{T}_k)$ can leverage $\bar{\bs{X}}_{k+1}$. To this end,  we consider time $k$ 
	and let $\hat{\xi}_k$ be defined by $\hat{\xi}_k\triangleq\log (\bs{X}_k\hat{\bs{X}}^{-1}_k)$ which is the initial condition of~\eqref{eqn:linearity_of_xi}  and implement the following discrete-time dynamics: 
	\begin{align*}
		&\bar{\xi}_{k+1}=\Phi(kT,(k+1)T)\hat{\xi}_k+\bs{n}_{\operatorname{4th,d}}, \bs{n}_{\operatorname{4th,d}}\sim\mathcal{N}(\bs{0}_{9\times 1},\bs{Q}_d),\\
		&\bs{Q}_d\triangleq\int_{kT}^{(k+1)T} {\Phi}\left(\tau,(k+1)T\right) \Sigma_{\operatorname{4th}} \Phi^{T}\left(\tau,(k+1)T\right) \mathrm{d} \tau,\\
		&\Phi(t_1,t_2)\triangleq\exp{(\bs{A}(t_2-t_1))}.
	\end{align*}
	The mean of $p(\bar{\xi}_{k+1}|\mathcal{Z}_{k})$ then writes:
	\begin{align*}
		\mathbb E[\bar{\xi}_{k+1}|\mathcal{Z}_{k}]&=\int \bar{\xi}_{k+1} p(\bar{\xi}_{k+1}|\mathcal{T}_k)\d{\bar{\xi}_{k+1}}\\
		&\overset{\eqref{eqn:predict_imu}}{=}\int(\int \bar{\xi}_{k+1} p(\bar{\xi}_{k+1}|\hat{\xi}_k)\d{\bar{\xi}_{k+1}})p(\hat{\xi}_{k}|\mathcal{T}_k)\d{\hat{\xi}_{k}}\\
		&=\int \mathbb E[\bar{\xi}_{k+1}|\hat{\xi}_{k}]p(\hat{\xi}_{k}|\mathcal{T}_k)\d{\hat{\xi}_{k}}\\
		&\overset{(a)}{=}\Phi((k+1)T,kT)\int \hat{\xi}_{k} p(\hat{\xi}_{k}|\mathcal{T}_k)\d{\hat{\xi}_{k}}{=}\bs{0}_{9\times 1}.
	\end{align*}
	The equality (a) holds because $\bs{n}_{\operatorname{4th,d}}$ is a Gaussian noise independent of $\hat{\xi}_{k}$ and $\bar{\xi}_{k+1}|\hat{\xi}_{k}\sim \mathcal{N}(\Phi(\cdot,\cdot)\hat{\xi}_{k},\bs{Q}_d)$. The associated covariance $\bar{\bs{P}}_{k+1}$ is:
	\begin{align}
		\bar{\bs{P}}_{k+1}=\Phi(kT,(k+1)T)\hat{\bs{P}}_{k}\Phi(kT,(k+1)T)^\top+\bs{Q}_d.\label{eqn:cov_predict}
	\end{align}

	The end condition at time $k+1$ of~\eqref{eqn:IMU_estimator}, initialized with $\hat{\bs{X}}_{k}$, is denoted as $\bar{\bs{X}}_{k+1}$ and can be calculated using standard integration methods in the literature (such as (30) in~\cite{Forster2015OnManifoldPF}), which is skipped here. With the uncertainty~$\bar{\xi}_{k+1}|\mathcal{T}_{k}\sim\mathcal{N}(\bs{0}_{9\times 1},\bar{\bs{P}}_{k+1})$ and the estimates $\bar{\bs{X}}_{k+1}$, we can conclude that $\bs{X}_{k+1}|\mathcal{T}_{k}\sim\mathcal{N}_{RG}(\bar{\bs{X}}_{k+1},\bar{\bs{P}}_{k+1})$.

	\section{Update in the filter}\label{sec:update}
	In this section, we will explore the update step. We provide the GN method within the Lie groups framework and derive the formulation of the KF methodology in Section~\ref{sec:Iterative gaussian newton}. Secondly, we introduce a \textit{$\sqrt{n}$-consistent estimate} of each present state as the initial condition for the GN method, as described in Section~\ref{sec:initial condition of the GN}.
	
	For brevity in notation, we just discuss the update step and therefore omit the subscript $k+1$ in the section, and the predicted states have been assumed to follow $\mathcal{N}_{RG}(\bar{\bs{X}} , \bar{\bs{P}} )$ through the analysis in the last section. 
	\subsection{Iterated  update of filtering through the LGN method}~\label{sec:Iterative gaussian newton}
The section primarily introduces the iterated update through an LGN method under a filtering framework. The initialization of the update can be given flexibility. 
When the initialization is set to be the mean of the predicted distribution, the iterative update boils down to an extensively studied way in the literature \cite{Bell93,Bourmaud2016FromIO} of computing the mean and covariance of a concentrated Gaussian distribution to approximately solve Problem~\ref{prob:update problem}. The maximum a posteriori~(MAP) is a method commonly used for solving the mean.
When the iterated update is initialized with a statistically consistent pose, we will prove that
the distribution obtained from the iterated update converges asymptotically to the solution to Problem~\ref{prob:new update problem}. This is left to be discussed in the next section. The proof of their convergence is also provided in it.

With either the camera or LiDAR as an environmental sensor, given 
the concentrated Gaussian of the prior distribution and the Gaussian of the sampling distribution, first consider
the MAP problem of Problem~\ref{prob:update problem}, expressed as follows:
	\begin{align}\label{eqn:prob:update problem}
		\hat{\bs{X}}_{\MAP} =\mathop{\arg\min}_{\bs{X} \in SE_2(3)} 
		\left\|\bar{\xi}\right\|^2_{\bar{\bs{P}}} +\left\| r_{j}(\bs{T})\right\|^2_{\Sigma_j} ,
	\end{align}
 where $\bar{\xi}$ is defined in~\eqref{eqn:prior definition}, $j\in\left\lbrace{C},{L}\right\rbrace$, and  $r_j$ and $\Sigma_j$ are defined in~\eqref{eqn:residuals} and \eqref{eqn:covariance_residual}, respectively. The uncertainty, denoted as $\hat{\xi}$, of update estimates $\hat{\bs{X}}$ is defined as $\hat{\xi}\triangleq\log(\bs{X}\hat{\bs{X}}^{-1})$.
	
We now take a look the LGN approach for the aforementioned problem without placing a significant emphasis on the issue of convergence. 
Let $\mu^{(l)}$ be an estimate of $\bs{X}$ at the $l$th iteration and $\bs{T}^{(l)}$ be the pose of $\mu^{(l)}$. Define $\delta^{(l)}\triangleq\log{(\mu^{(l)} \bar{\bs{X}} ^{-1})}$ as the left increment to $\bar{\bs{X}} $ and $\tilde{\delta}_{l,l+1}\triangleq\log{(\mu^{(l+1)}\mu^{(l)-1})}$ for short. Utilizing the linearization in Section~\ref{sec:description of the LGN}, the next $l+1$th iteration is expressed as:
	\begin{align*}
		&\tilde{\delta}_{l,l+1}=\arg\min_{\delta\in\mathbb{R}^9}\left\|	\bs{J}_{\mu^{(l)}}\delta+\delta^{(l)} \right\|_{\bar{\bs{P}}}^2+ \left\|	r_j(\bs{T}^{(l)})-\bs{H}_{\mu^{(l)},j}\delta \right\|_{\Sigma_j}^2,\nonumber\\
		&\bs{J}_{\mu^{(l)}}\triangleq\frac{\partial \bar{\xi}}{\partial{\bs{X} }}|_{\mu^{(l)}}\overset{(b)}{=}\dexp_{\delta^{(l)}}^{-1}, \bs{H}_{\mu^{(l)},j}\triangleq\begin{bmatrix}-\frac{\partial r_{j}(\bs{T} )}{\partial{\bs{T} }}|_{\bs{T}^{(l)}}
		    &\bs{0}_{n_j\times 3}
		\end{bmatrix}\nonumber,
	\end{align*}
	where the Jacobians $\bs{J}_{\mu^{(l)}}$ and $\bs{H}_{\mu^{(l)},j}$ for $j\in\left\lbrace {C},{L} \right\rbrace $, evaluated at $\mu^{(l)}$, are defined in~\eqref{eqn:matrix_differential}. The value of $n_j$ is $2n$ for $j=C$ and $n$ for $j=L$. The equality $(b)$ is based on (i) in Lemma~\ref{lemma: compound of two matrix exponentials}. Further details of derivation on the sensor $j$ can be found in Appendix~\ref{apx:jacobians_camera_LIDAR}. The Jacobian of the negative residual in $\bs{H}_{\mu^{(l)}}$ is defined in accordance with the symbols commonly used in the majority of filtering literature.
	
The GN method then proceeds to compute $\mu^{(l+1)}$ as follows:
	\begin{align}~\label{eqn:LGN method of problem 1}
\mu^{(l+1)}&=\exp{(\tilde{\delta}_{l,l+1})}\mu^{(l)},\end{align}
  with
  \begin{align*}
		\tilde{\delta}_{l,l+1}&= \bs{F}_{\mu^{(l)}}\left(\bs{H}^\top_{\mu^{(l)},j}\Sigma^{-1}_jr_{j}(\bs{T}^{(l)})-\bs{J}_{\mu^{(l)}}^\top \bar{\bs{P}}^{-1}\delta^{(l)}\right),\nonumber\\
		\bs{F}_{\mu^{(l)}}&\triangleq\left(\bs{J}_{\mu^{(l)}}^\top \bar{\bs{P}} ^{-1}\bs{J}_{\mu^{(l)}}+\bs{H}^\top_{\mu^{(l)},j}\Sigma^{-1}_j\bs{H}_{\mu^{(l)},j} \right)^{-1}.
	\end{align*} 
To initialize $\mu^{(0)}$, we can use the predicted state $\bar{\bs{X}}$ or an estimate from environmental measurements. The choice can be based on which of the two efficiently gives the most accurate estimates in practice. More facts on the two different initialization options will be discussed in Remark~\ref{remark:refinement_of_InEKF} and in Sections~\ref{sec:initial condition of the GN} and \ref{sec:theoretical analysis}. 
	
	The KF methodology~\cite{Sorenson1970LeastsquaresEF} offers practical computational efficiency. We will convert the LGN iteration into the KF formulation. In particular, The KF gain $\bs{K}^{(l+1)}_j$ for the sensor $j$ is derived as follows: 
	\begin{align}
		\bs{K}^{(l+1)}_j&\triangleq \bs{J}_{\mu^{(l)}} \bs{F}_{\mu^{(l)}} \bs{H}^\top_{\mu^{(l)},j} \Sigma^{-1}_j = \bar{\bs{P}} \bs{J}_{\mu^{(l)}}^{-1\top} \bs{H}^\top_{\mu^{(l)},j}\bs{S}_j^{-1},\nonumber\\ \bs{S}_j&\triangleq \bs{H}_{\mu^{(l)},j} \bs{J}_{\mu^{(l)}}^{-1 } \bar{\bs{P}} \bs{J}_{\mu^{(l)}}^{-1\top} \bs{H}^\top_{\mu^{(l)},j}+ \Sigma_j\label{eqn:LGN kalman filter gain} . 
	\end{align} 
	The update in terms of KF formulation is then given by 	\begin{equation}\label{eqn: update form} \mu^{(l+1)}=\exp(\bs{K}^{(l+1)}_j( r_j(\bs{T}^{(l)})+ \bs{H}_{\mu^{(l)},j}\delta^{(l)}))\bar{\bs{X}}. \end{equation} 
More details on how to convert and obtain the KF gain are summarized in Appendix \ref{apx:Kalman Filter gain}.
	
When the iterated update~(see Algorithm~\ref{alg:Iterated_invariant_kalman_filter}) terminates at the $\bar{l}$th iteration, $\mu^{(\bar{l})}$ is considered to be the mean of $p(\bs{X}_{k+1}| \mathcal {T}_{k+1})$. The covariance matrix, which is represented as $\hat{\bs{P}}$, can be carried out by the Laplace approximation~\cite{Bell93} assuming that ${\mu^{(\bar{l})}}$ is sufficiently close to $\bs{X}$. The approximation is given by the following: \begin{align}
\hat{\bs{P}} &\triangleq \mathbb{E}(\hat{\xi}\hat{\xi}^\top) \approx \bs{F}^{-1}_{\mu^{(\bar{l})}} \nonumber \\
&= \bs{J}^{-1}_{\mu^{(\bar{l})}}(\bs{I} - \bs{K}^{(\bar{l}+1)}_{j}\bs{H}_{\mu^{(\bar{l})},j}\bs{J}^{-1}_{\mu^{(\bar{l})}})\bar{\bs{P}} \bs{J}^{-1\top}_{\mu^{(\bar{l})}}. \label{eqn:P update}
\end{align} The final equality of \eqref{eqn:P update} is obtained by the matrix inversion lemma.
	
	\begin{remark}[Refinement of InEKF]\label{remark:refinement_of_InEKF}
		When the initial value $\mu^{(0)}$ is set to $\bar{\bs{X}} $ and only one iteration is performed, the update formula~\eqref{eqn: update form} is identical to that used in InEKK. Since
		the iterative process and adaptive initialization allow for potential improvement in accuracy and robustness of the estimates, our proposed iterated update of filtering can be seen as a more refined version of InEKF~\cite{Barrau}. 
	\end{remark}

	\begin{algorithm}[h]
		\caption{Framework of iterated update of filtering}\label{alg:Iterated_invariant_kalman_filter}
		\KwIn{$\mu^{(0)}$, $\bar{\bs{P}}$, $\bs{z}_C$ or $\bs{z}_{L}$, $\Sigma_C$ or $\Sigma_L$, $l_{\rm max}$, $\tau$}
		\KwOut{$\hat{\bs{X}}$, $\hat{\bs{P}} $}		
            \textbf{Step 1}: set $l=0$\;
            \textbf{Step 2}: iterate until the end condition is satisfied\
		\begin{enumerate}
			\item compute $r_j(\mu^{(l)})$ via~\eqref{eqn:residuals},
			\item calculate $\bs{H}_{\mu^{(l)}}$ via~\eqref{eqn:camera jacobian matrix} and \eqref{eqn:LiDAR measurement equation},
			\item compute  $\bs{K}^{(l+1)}_j$ via~\eqref{eqn:LGN kalman filter gain} and  $\mu^{(l+1)}$ via~\eqref{eqn: update form},
   \item $l=l+1$\;
		\end{enumerate}	
		\textbf{Step 3}: calculate the covariance $\hat{\bs{P}}$ via~\eqref{eqn:P update}\; 
		\textbf{End Condition}: $|c^{({l})}-c^{({l}-1)}|\leq\tau$ or $l=l_{\rm max}$, where the cost $c^{({l})}$ is given as
		\begin{equation*}
			c^{(l)}=\delta^{(l)} (\bs{J}^\top_{\mu^{(l)}}{\bar{\bs{P}} }^{-1}\bs{J}_{\mu^{(l)}})^{-1}\delta^{(l)\top}+r_j(\mu^{(l)}){\Sigma}^{-1}_jr_j(\mu^{(l)})^{\top}.
		\end{equation*}
	\end{algorithm}
	
	\subsection{$\sqrt{n}$-consistent pose as the initial value for camera and LiDAR}\label{sec:initial condition of the GN}
	We will outline necessary steps to obtain an initial value from a statistical theory standpoint, as an alternative to $\bar{\bs{X}}$ in InEKF~(see Remark~\ref{remark:refinement_of_InEKF}). The initial value is selected as a $\sqrt{n}$-consistent pose derived from~\eqref{eqn:camera_model} or \eqref{eqn:LiDAR_model} where $n$ denotes the quantity of environmental measurements. A comprehensive theoretical analysis of this method, which includes its asymptotic efficiency and optimality, will be detailed in Section~\ref{sec:theoretical analysis}.
	
	\subsubsection{A $\sqrt{n}$-consistent pose estimator from camera measurements}
	We utilize the method proposed in \cite{zeng2022cpnp}. This method constructs linear equations from the original projection model~\eqref{eqn:camera_model} via modification and elimination of the original model, based on which a closed-form LS solution is obtained without considering the constraint $SE(3)$. The closed-form solution mitigates the impact of measurement noise and is projected onto $SE(3)$, yielding a $\sqrt{n}$-consistent pose estimate.
	
	In what follows, we will rephrase the method in our notation. Let $\bs{p}_{f_i} \in \mathbb R^3$ and ${}^{C}\bs{p}^\top_{f_i} \in \mathbb R^3$ denote the $i$th feature in $\{\mathcal{W}\}$ and $\{\mathcal{C}\}$, respectively. The projection of the $i$th feature in the image plane is denoted as $\bs{q}_{i}=[u_i~v_i]^\top \in \mathbb R^2$.
	We split $\bs{R}^\top_C$ and $-\bs{R}^\top_C\bs{p}_C$ as
	$\bs{R}^\top_C=[\bs{r}_1 ~\bs{r}_2~ \bs{r}_3]^\top$ and $-\bs{R}^\top_C\bs{p}_C=[p_{C,1}~p_{C,2}~p_{C,3}]^\top$.
	There exists an ambiguity of scale in the projected mapping $h_{C}$, that is, $\frac{^{C}p_{f_x}}{^{C}p_{f_z}}\equiv\frac{\alpha ^{C}p_{f_x}}{\alpha ^{C}p_{f_z}}$. Therefore, we can set the constraint $\frac{1}{\alpha} \triangleq\frac{1}{n}\sum_{i=1}^{n}{}^C{p}_{f_z}$ to eliminate one variable ($p_{C,3}$) and normalize the vector $\vec{\bs{x}}_C=\alpha\begin{bmatrix}
		\bs{r}^\top_3 &\bs{r}^\top_1&p_{C,1}&\bs{r}^\top_2&p_{C,2}
	\end{bmatrix}\in\mathbb{R}^{11}$. This allows us to optimize an 11-dimensional vector, and $\alpha$ can be determined by ensuring $\left|\bs{R}_C \right|=1$. We define $\bar{\bs{p}}_f=\frac{1}{n}\sum_{i=1}^{n}\bs{p}_{f_i}$. Now we have the following linear form of~\eqref{eqn:camera_model}:
	\begin{equation}\label{eqn:linear equation camera}
		\bs{b}=\bs{A}_C\vec{\bs{x}}_C+\vec{\bs{n}}_C,
	\end{equation}
	where 
	$$
	\bs{A}_C=\begin{bmatrix}
		-u_1(\bs{p}_{f_1}-\bar{\bs{p}}_f)^\top &f_x\bs{p}^\top_{f_1}&f_x&\bs{0}_{1\times4}\\
		-v_1(\bs{p}_{f_1}-\bar{\bs{p}}_f)^\top &\bs{0}_{1\times4}&f_y\bs{p}^\top_{f_1}&f_y\\
		\vdots\\
		-u_n(\bs{p}_{f_n}-\bar{\bs{p}}_f)^\top &f_x\bs{p}^\top_{f_n}&f_x&\bs{0}_{1\times4}\\
		-v_n(\bs{p}_{f_n}-\bar{\bs{p}}_f)^\top &\bs{0}_{1\times4}&f_y\bs{p}^\top_{f_n}&f_y
	\end{bmatrix},
	$$
	$f_x$ and $f_y$ are focal lengths of the camera, $\bs{b}=\begin{bmatrix}
		u_1&
		v_1
		&\cdots&{u}_n&v_n
	\end{bmatrix}^\top\in\mathbb{R}^{2n}$, and $\vec{\bs{n}}_C$ is the modified noise term. 
	
	The LS solution to ~\eqref{eqn:linear equation camera} is given by $({\bm A}_C^\top {\bm A}_C)^{-1} {\bm A}_C^\top {\bm b}$. Since the regressor ${\bm A}_C$ contains noisy measurements, it is correlated with the regressand $\bm b$, which in general may introduce bias into the LS solution and makes it inconsistent according to Definition~\ref{def:consistency}. To address this issue, it is essential to estimate the variance of measurement noise and  conduct bias elimination based on the estimated noise variance to achieve a consistent pose estimate.
	
	For more detailed information about obtaining a consistent estimate, denoted as $\hat{\sigma}_C$, of the noise variance, see \cite[Appendix A]{zeng2022cpnp}. Define the following matrix
	\begin{align}\label{eqn:coefficient_of_noise}
		\bs{G}_C=\begin{bmatrix}
			-(\bs{p}_{f_1}-\bar{\bs{p}}_f)^\top &\bs{0}_{1\times8}\\
			\vdots &\vdots
			\\
			-(\bs{p}_{f_n}-\bar{\bs{p}}_f)^\top &\bs{0}_{1\times8}
		\end{bmatrix}\in\mathbb{R}^{2n\times11}.
	\end{align}
	The bias-eliminated solution can be written in
	\begin{equation}\label{eqn: camera consistent solution vector}
		\vec{\bs{x}}_C=(\bs{A}^\top_C\bs{A}_C-\hat{\sigma}^2_{C}\bs{G}^\top_C\bs{G}_C)^{-1}(\bs{A}^\top_C\bs{b}-\hat{\sigma}^2_{C}\bs{G}^\top_C\bs{1}_{2n\times1}).
	\end{equation}
	As the scale $\alpha$ can be determined uniquely,  we can obtain an estimate of $\bs{x}_C \triangleq {\rm vec}([\bs{R}^\top_C~-\bs{R}^\top_C\bs{p}_C])\in\mathbb{R}^{12}$, denoted as $\hat{\bs{x}}_C$, and use the SVD method~\cite{Myronenko2009OnTC} to project $\hat{\bs{x}}_C\in\mathbb{R}^{12}$ onto $SE(3)$. The resulting projection $\hat{\bs{T}}_{C,\cons}$ of  $\hat{\bs{x}}_C$ onto $SE(3)$ exhibits the following property:
	\begin{theorem}\label{thm:n-consistent solution camera}
		The estimate $\hat{\bs{T}}_{C,\cons}\in SE(3)$  via solving $$
\hat{\bs{T}}_{C,\cons}=\arg\min_{\bs{T}_C\in SE(3)}\left\| \operatorname{vec}(\begin{bmatrix}
		    \bs{R}^\top_C&-\bs{R}^\top_C\bs{p}_C
		\end{bmatrix})-\hat{\bs{x}}_C\right\|^2, 	
		$$ has the property that $\hat{\bs{T}}_{C,\cons}$ is a $\sqrt{n}$-consistent pose for $\bs{T}_C$, i.e., 
		\begin{equation*}
			\hat{\bs{T}}_{C,\cons}=\bs{T}_C+O_p(\frac{1}{\sqrt{n}}).
		\end{equation*}
	\end{theorem}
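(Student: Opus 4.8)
The plan is to track how the $O_p(1/\sqrt n)$ error in the bias-eliminated vector solution $\hat{\bs x}_C$ of~\eqref{eqn: camera consistent solution vector} propagates through the SVD projection onto $SE(3)$. First I would establish that $\hat{\bs x}_C = \bs x_C + O_p(1/\sqrt n)$, where $\bs x_C = \operatorname{vec}([\bs R^\top_C~-\bs R^\top_C\bs p_C])$ is the true (noise-free) parameter. This is essentially the consistency guarantee of the CPnP construction in~\cite{zeng2022cpnp}: the matrices $\bs A_C$, $\bs G_C$, $\bs b$ are built from measurements $\bs q_i = [u_i~v_i]^\top$ each carrying independent zero-mean noise $\bs n_{C_i}$ with variance $\sigma_C^2\bs I_2$; by a law-of-large-numbers argument the normalized Gram-type matrices $\frac1n \bs A_C^\top\bs A_C$, $\frac1n\bs G_C^\top\bs G_C$, $\frac1n\bs A_C^\top\bs b$ converge in probability to deterministic limits, and the $\hat\sigma_C^2$-corrected normal equations exactly cancel the leading bias term, so the residual estimation error is a centered average of $n$ independent terms, hence $O_p(1/\sqrt n)$ by the central limit theorem (or Chebyshev). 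The scale $\alpha$ is recovered continuously from the constraint $|\bs R_C|=1$, and since $r(\cdot)$-continuity preserves $O_p(1/\sqrt n)$ rates by Lemma~\ref{lemma:calculus_pf_op}(iv), the lifted $12$-vector $\hat{\bs x}_C$ inherits the rate.

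Next I would handle the projection step. Write $\hat{\bs T}_{C,\cons} = \arg\min_{\bs T_C\in SE(3)}\|\operatorname{vec}([\bs R^\top_C~-\bs R^\top_C\bs p_C]) - \hat{\bs x}_C\|^2$, and note that when $\hat{\bs x}_C = \bs x_C$ exactly, the unique minimizer is the true $\bs T_C$ itself (the true parameter already lies in the image of the $SE(3)$-embedding). The projection map $\Pi:\mathbb R^{12}\to SE(3)$, $\bs y\mapsto \arg\min_{\bs T\in SE(3)}\|\operatorname{vec}(\cdot) - \bs y\|^2$, is well-defined and Lipschitz-continuous in a neighborhood of the embedded $SE(3)$ manifold — the rotation part is the orthogonal Procrustes / SVD projection $\bs R \mapsto \bs U\bs V^\top$ from the SVD $\bs R = \bs U\bs\Sigma\bs V^\top$, which is smooth away from the measure-zero set where $\bs R$ has a repeated or zero singular value, and $\bs T_C$ (an actual rotation) is safely in the smooth region; the translation part is an affine function of the recovered rotation and the remaining coordinates. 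Therefore, again invoking Lemma~\ref{lemma:calculus_pf_op}(iv) with $\Pi$ as the continuous mapping and the base point $\bs x_C$, we conclude $\hat{\bs T}_{C,\cons} = \Pi(\hat{\bs x}_C) = \Pi(\bs x_C) + O_p(1/\sqrt n) = \bs T_C + O_p(1/\sqrt n)$, which is exactly the claim.

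The main obstacle I anticipate is the first step — rigorously showing that the bias-corrected LS solution is $\sqrt n$-consistent rather than merely consistent. This requires (i) verifying that $\frac1n\bs A_C^\top\bs A_C - \hat\sigma_C^2\frac1n\bs G_C^\top\bs G_C$ is invertible with a limit that is nonsingular (an identifiability/excitation condition on the feature geometry, i.e., the $\bs p_{f_i}$ are not degenerately configured), (ii) showing $\hat\sigma_C^2 = \sigma_C^2 + O_p(1/\sqrt n)$ for the noise-variance estimator of~\cite[Appendix A]{zeng2022cpnp}, and (iii) controlling the cross-correlation between the noisy regressor $\bs A_C$ and the regressand $\bs b$ so that, after the bias subtraction, what remains is a genuine $O_p(1/\sqrt n)$ fluctuation. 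Once this is in place, the projection step is routine via continuity. I would therefore lean on the cited results for (i)–(iii) and devote the written proof primarily to the clean propagation-of-rate argument through $\Pi$.
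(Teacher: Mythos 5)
Your proposal is correct, and your first stage (showing $\hat{\bs{x}}_C=\bs{x}_C+O_p(1/\sqrt{n})$ via bias cancellation in the normal equations, averages of independent zero-mean terms, and the cited consistency of $\hat{\sigma}_C^2$) is essentially the same route the paper takes in its consistency analysis, which also invokes Lemma~\ref{lemma_noise_aver} and \cite{zeng2022cpnp} for exactly the points you list as (i)--(iii). Where you genuinely diverge is the projection step. You argue that the map $\Pi:\mathbb{R}^{12}\to SE(3)$ is locally Lipschitz/smooth near the embedded manifold (orthogonal Procrustes via SVD, smooth away from degenerate singular values, with the translation recovered affinely), and then transfer the $O_p(1/\sqrt{n})$ rate through $\Pi$; this is valid, and it is in fact the more careful reading of Lemma~\ref{lemma:calculus_pf_op}(iv), since mere continuity would not by itself preserve a rate --- you correctly supply the needed local Lipschitz property. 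The paper instead uses a purely metric argument that needs no regularity of the projection at all: because $\hat{\bs{T}}_{C,\cons}$ minimizes the distance to $\hat{\bs{x}}_C$ over $SE(3)$ and the true pose is feasible, one has $\|\operatorname{vec}([\bs{R}_{\cons}^\top\;\; -\bs{R}_{\cons}^\top\bs{p}_{\cons}])-\hat{\bs{x}}_C\|\le\|\bs{x}_o-\hat{\bs{x}}_C\|$, and then the triangle inequality gives $\|\operatorname{vec}([\bs{R}_{\cons}^\top\;\; -\bs{R}_{\cons}^\top\bs{p}_{\cons}])-\bs{x}_o\|\le 2\|\bs{x}_o-\hat{\bs{x}}_C\|=O_p(1/\sqrt{n})$. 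The paper's argument is shorter and sidesteps any discussion of SVD smoothness or uniqueness of the projection; your argument costs a bit more verification but yields extra structure (differentiability of $\Pi$ at $\bs{x}_C$), which would be useful if one later wanted a delta-method covariance for $\hat{\bs{T}}_{C,\cons}$ rather than only a rate.
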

		The proof of Theorem~\ref{thm:n-consistent solution camera} is given in Appendix~\ref{apx:consistent_solution_analysis}.
	
	\subsubsection{A $\sqrt{n}$-consistent estimate from LiDAR measurement}
	By substituting~\eqref{eqn:LiDAR_model} into~\eqref{eqn: Lidar_icp_problem}, we obtain
	\begin{equation}\label{linear_lidar_eqn}
		\bm {u}_j^\top q_j = \bm {u}_j^\top {\bm R}_{L} \bs{z}_{L_j} + \bm {u}_j^\top {\bm p}_{L} - \bm {u}_j^\top {\bm R}_{L} \bm{n}_{L_j},
	\end{equation}
	since the noise is assumed to be isotropic.
	Let ${\bm x}_{L}=[{\rm vec}({\bm R}_{L})^\top~~{\bm p}_{L}^\top]^\top$. Stacking~\eqref{linear_lidar_eqn} for all measurements yields the matrix form
	\begin{equation}\label{linear_Lidar_matrix_eqn}
		{\bm b}={\bm A}_{L} {\bm x}_{L} +\vec{{\bm n}}_{L},
	\end{equation}
	where
	\begin{equation*}
		{\bm b}=\begin{bmatrix}
			\bm {u}_1^\top q_1 \\
			\vdots \\
			\bm {u}_n^\top q_n
		\end{bmatrix},~
		{\bm A}_{L}=\begin{bmatrix}
			[\bs{z}_{L_1}^\top~1] \otimes \bm {u}_1^\top  \\
			~~~~~\vdots ~~~~~~ \\
			[\bs{z}_{L_n}^\top~1] \otimes \bm {u}_n^\top
		\end{bmatrix},~\vec{{\bm n}}_L=\begin{bmatrix}
		    - \bm {u}_1^\top {\bm R}_{L} \bm{n}_{L_1}\\
      \vdots\\
      - \bm {u}_n^\top {\bm R}_{L} \bm{n}_{L_n}
		\end{bmatrix}
	\end{equation*}
	and $\vec{{\bm n}}_{L}$ is the modified noise term. Similar to the case of the camera, the regressor ${\bm A}_{L}$ in the LiDAR system includes measurement noises, leading to correlation with the regressand $\bm b$. Consequently, the LS solution is biased and lacks consistency. 
	Define
	\begin{equation} \label{Q_bar}
		\bar {\bm Q}=\text{diag}{\left( 	\frac{{\bm U} {\bm U}^\top}{n},	\frac{{\bm U} {\bm U}^\top}{n} ,	\frac{{\bm U} {\bm U}^\top}{n} ,{\bs{0} }_{3}\right)  },
	\end{equation}
	where ${\bm U}=[{\bm u}_1~\ldots~{\bm u}_n]$.
	%The covariance $\hat{\sigma}$ is estimated based on samples.  Let $\bar {\bm A}^{(l)}=[{\bm A}^{(l)}~{\bm b}]$,  ${\bm S}=\bar {\bm A}^{(l)}^\top \bar {\bm A}^{(l)}/n$, and 
	%\begin{equation*}
	%	{\bm Q}=\text{diag}{\left( 	\frac{{\bm U} {\bm U}^\top}{n},	\frac{{\bm U} {\bm U}^\top}{n} ,	\frac{{\bm U} {\bm U}^\top}{n} ,{\bm O}_{4 \times 4}\right)  }.
	%\end{equation*}
	%The following lemma gives a consistent estimate of noise variance $\sigma^{(l)}^2$.
	%\begin{lemma} \label{consistent^{(l)}idar_noise_est}
	%	The estimate $\hat \sigma^{(l)}^2=1/\lambda_{\rm max}({\bm S}^{-1}{\bm Q})$ is a $\sqrt{n}$-consistent estimate of $\sigma^{(l)}^2$. 
	%\end{lemma}
	%\begin{proof}
	%	The proof is given in Appendix~\ref{apx:estimation_of_covariance_of_the^{(l)}idar}.
	%\end{proof}
	
	Therefore, the bias-eliminated solution is given by
	\begin{equation} \label{consistent_Lidar_pose_est}
		{\bs{x}}_{L}=\left(\frac{{\bm A}_{L}^\top {\bm A}_{L}}{n}- \hat \sigma_{L}^2 \bar {\bm Q}\right)^{-1} \frac{{\bm A}_{L}^\top {\bm b}}{n},
	\end{equation}
	where $\hat{\sigma}_L^2$ is a consistent estimate of the variance of $\bs{n}_{L}$. The estimation method closely resembles the one employed in the camera model, as detailed in \cite[Appendix A]{zeng2022cpnp}. Once $\bs{x}_{L}$ is obtained, a $\sqrt{n}$-consistent pose $\hat{\bs{T}}_{L,\cons}$ can also be obtained, exhibiting a resemblance to the result in Theorem~\ref{thm:n-consistent solution camera}. For consistency analysis in the LiDAR case, please refer to Appendix~\ref{apx:consistent_solution_analysis}.

To enhance sample efficiency in the LEM cases, we propose the EIKF algorithm by initiating $\mu^{(0)}$ with an $\sqrt{n}$-consistent pose estimate at each sampling instant in Algorithm~\ref{alg:Iterated_invariant_kalman_filter}. Our EIKF is summarized in Algorithm~\ref{alg:EIEKF}. In addition, it is worth highlighting that in the EIKF, a single LGN step is sufficient to significantly improve the estimation accuracy when $n$ is large. This eliminates the need for additional LGN iterations, thereby maintaining a low computational complexity of $O(n)$. Such benefits stem from the fact
that the EIKF, as a whole, solves the mean and covariance of 
$p(\bs{X}_{k+1} |\mathcal{T}_{k+1})$
raised in Problem~\ref{prob:new update problem} as $n$ increases. This is the theoretical foundation of the EIKF in offering superior accuracy when dealing with LEMs and will be further explained in Section~\ref{sec:equivalence}.

	\begin{algorithm}[h]
		\caption{Update of EIKF}\label{alg:EIEKF}
   \KwIn{ $\bar{\bs{X}}$,$\bar{\bs{P}}$, $\bs{z}_j$~($j$ is $C$ or $L$)}
		\KwOut{$\hat{\bs{X}}$, $\hat{\bs{P}} $}	
		\textbf{Step 1:} compute $\sqrt{n}$-consistent pose $\hat{\bs{T}}_{j,\cons}$ and estimated covariance $\hat{\Sigma}_j$ via~\eqref{eqn: camera consistent solution vector} or~\eqref{consistent_Lidar_pose_est}\;
        \textbf{Step 2:} set $\hat{\bs{T}}=
					\hat{\bs{T}}_{j,\cons}{}^{I}\bs{T}^{-1}_j$,
            $\mu=\begin{bmatrix}
                \hat{\bs{T}}&\bar{\bs{v}}_I\\
                \bs{0}_{1\times 4}&1
            \end{bmatrix}$\footnotemark and compute $\bs{H}_{\mu,j}$\;
        \textbf{Step 3:}
            $\bs{K}_j=\bar{\bs{P}} \bs{J}_{\mu}^{-1\top} \bs{H}^\top_{\mu,j}\bs{S}_j^{-1}$, and $\bs{S}_j\triangleq \bs{H}_{\mu,j} \bs{J}_{\mu}^{-1 } \bar{\bs{P}} \bs{J}_{\mu}^{-1\top} \bs{H}^\top_{\mu,j}+ \hat{\Sigma}_j$\;
        \textbf{Step 4:}
        $\hat{\bs{X}} = \exp(\bs{K}_j( r_j(\hat{\bs{T}})+ \bs{H}_{\mu,j}\log{(\mu \bar{\bs{X}}^{-1})}))\bar{\bs{X}}$,
$\hat{\bs{P}} = \bs{J}^{-1}_{\mu}(\bs{I} - \bs{K}_{j}\bs{H}_{\mu,j}\bs{J}^{-1}_{\mu})\bar{\bs{P}} \bs{J}^{-1\top}_{\mu}$
	\end{algorithm}
 \footnotetext{The variable ${}^{I}\bs{T}_j$ is defined in~\eqref{eqn:relative transformation} and $\bar{\bs{v}}_I$ is the velocity component of $\bar{\bs{X}}$. 
Though the LGN iteration is initialized partly using $\bar{\bs{v}}_I$ of the a priori 
$\bar{\bs X}$, the update is optimal in the sense of MMSE to Problem~\ref{prob:new update problem}, see Theorem~\ref{thm:equivalent relation} and its proof. 
}

Lastly, we conclude this section with a discussion of the update of the augmented IMU state, which completes the ``predict-update'' Kalman filtering cycle for the IMU's states.

 \begin{remark}[Update of the augmented states of the IMU]
		In our approach, the estimation to the biases $\bs{b}_{g}$ and $\bs{b}_{a}$  cannot be accommodated with the IMU's kinematics. However, a technique outlined in~\cite{xu2022fast}
enables us to handle the estimation of $\bs{b}_{g}$ and $\bs{b}_{a}$ in conjunction with the line of $SO(3)$ kinematics under the iterated EKF framework.
It is straightforward to see that 
this technique continues to succeed within the 
InEKF and EIKF frameworks on $SE_2(3)$.
Due to space constraints, we omit implementation details here. For those interested in the finer points, we kindly refer them to~\cite{xu2022fast}. 
	\end{remark}
	
	\section{Asymptotical Property Analysis of the EIKF }~\label{sec:theoretical analysis}	
	We will investigate the EIKF with an initial $\sqrt{n}$-consistent pose estimates from LEM. This section begins by discussing the optimal fusion of the ML estimate and the a priori. We will demonstrate that the EIKF with the initial $\sqrt{n}$-consistent pose exhibits identical asymptotic properties as the optimal fusion method.
	
	We will concentrate on one particular kind of environmental measurement sensor and leave out the sensor index, as the process for the other case is similar.	
	\subsection{Fusion of the a priori with an MLE of pose}\label{sec:Optimality_fusion}
	When the pose $\bs{T}\in SE(3)$ to be estimated can be uniquely identified from our $r(\cdot)$, the MLE of $\bs{T}$ can be obtained and expressed as:
	\begin{align}\label{eqn:measurement optimization problem}
		\bs{T}_{\MLE}=\mathop{\arg\min}_{\bs{T}\in SE(3) }\left\|r(\bs{T}) \right\|^2_{\Sigma}.
	\end{align}
	 The following theorem presents a formal statement on the asymptotic properties of the ML estimate for $\bs{T}$.
	\begin{theorem}[Consistency and asymptotic normality of the ML estimate]\label{thm:property_of_MLE}
Under certain regularity conditions\footnote{
In \cite{Jennrich1969AsymptoticPO}, the discussed conditions are expressed with rigorous mathematical forms. These conditions include $\bs{T}$ being an interior element of a compact set, the measurement noises being independent and identically distributed (i.i.d.) with a zero mean and finite covariance, landmarks adhering to a specific distribution, the identifiability of $r()$, the continuity of $r'()$ and $r''()$. }, as $n$ approaches infinity, the ML estimate $\bs{T}_{\MLE}$ converges to the true state $\bs{T}$ in probability. Additionally, there exists a random variable $\xi_{\MLE}$ such that $\bs{T}_{\MLE}=\exp{(\xi_{\MLE})}\bs{T}$ and it has an asymptotic normal distribution:
		\begin{equation*}
		\sqrt{n}\xi_{\MLE}\overset{d}{\longrightarrow}\mathcal{N}(0,\bs{M}^{-1}) ,
		\end{equation*}
		where $\overset{d}{\longrightarrow}$ means convergence in distribution. The matrix $\bs{M}$ is determined by the limit of the fisher information matrix $\bs{F}_{\MLE}$ divided by $n$, i.e.,  
		\begin{equation*}
			\bs{M}=\lim_{n\to \infty}\frac{1}{n}\bs{F}_{\MLE} =\lim_{n\to \infty}\frac{1}{n}\frac{\partial r(\bs{T})}{\partial \bs{T}}^\top\Sigma^{-1}  \frac{\partial r(\bs{T})}{\partial \bs{T}}.
		\end{equation*}
	\end{theorem}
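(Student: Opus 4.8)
The plan is to follow the classical two-step route for M-estimators — first establish consistency, then upgrade it to asymptotic normality through a Taylor expansion of the first-order optimality condition — but carried out in local exponential coordinates so that the geometry of $SE(3)$ reduces to an ordinary finite-dimensional problem. Concretely, I would parametrize a neighborhood of the true pose $\bs{T}$ by the retraction $\xi\mapsto\exp(\xi)\bs{T}$ with $\xi\in\mathbb{R}^6$, set $Q_n(\xi)\triangleq\tfrac{1}{n}\|r(\exp(\xi)\bs{T})\|^2_\Sigma$, and note that $\bs{T}_{\MLE}$ corresponds to a minimizer $\xi_{\MLE}$ of $Q_n$. A preliminary observation that drives everything is that the residual evaluated at the truth is exactly the (transformed) measurement noise: for the camera $r_{C_i}(\bs{T})=\bs{n}_{C_i}$, and for the LiDAR $r_{L_j}(\bs{T})=\bs{u}_j^\top\bs{R}_L\bs{n}_{L_j}$ by \eqref{eqn: Lidar_icp_problem}, so $r(\bs{T})$ is zero-mean with $\mathbb{E}[r(\bs{T})r(\bs{T})^\top]=\Sigma$.

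For consistency, I would invoke the regularity conditions borrowed from \cite{Jennrich1969AsymptoticPO} — compactness of the parameter set, i.i.d. zero-mean finite-variance noise, the landmark sampling assumption, identifiability of $r(\cdot)$, and continuity of its first two derivatives — to apply a uniform law of large numbers: $Q_n(\xi)$ converges in probability, uniformly over a compact neighborhood, to a deterministic limit $Q_\infty(\xi)$ which by identifiability has a unique global minimum at $\xi=\bs{0}$. The standard argmin-consistency theorem for M-estimators then yields $\xi_{\MLE}\overset{P}{\longrightarrow}\bs{0}$, equivalently $\bs{T}_{\MLE}\overset{P}{\longrightarrow}\bs{T}$; in particular, with probability tending to one, $\xi_{\MLE}$ lies in the injectivity ball $\mathcal{B}_{\pi}$, so the $\log$/$\exp$ correspondence and the retraction are well defined there.

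For asymptotic normality, on the event that $\xi_{\MLE}$ is interior the stationarity condition reads $J_{\MLE}^\top\Sigma^{-1}r(\exp(\xi_{\MLE})\bs{T})=\bs{0}$, where $J$ denotes $\partial r(R_{\bs{T}}(\xi))/\partial\xi$ as in \eqref{eqn:matrix_differential}. Taylor-expanding about $\xi=\bs{0}$ gives $r(\exp(\xi)\bs{T})=r(\bs{T})+J_{\bs{T}}\xi+O(\|\xi\|^2)$; substituting and rearranging yields $\sqrt{n}\,\xi_{\MLE}=-\left(\tfrac{1}{n}J_{\bs{T}}^\top\Sigma^{-1}J_{\bs{T}}\right)^{-1}\tfrac{1}{\sqrt{n}}J_{\bs{T}}^\top\Sigma^{-1}r(\bs{T})+o_p(1)$. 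A law of large numbers gives $\tfrac{1}{n}J_{\bs{T}}^\top\Sigma^{-1}J_{\bs{T}}\to\bs{M}$, a Lindeberg-type central limit theorem (the summands are independent but not identically distributed, since landmarks differ) gives $\tfrac{1}{\sqrt{n}}J_{\bs{T}}^\top\Sigma^{-1}r(\bs{T})\overset{d}{\longrightarrow}\mathcal{N}(\bs{0},\bs{M})$ because $\mathbb{E}[r(\bs{T})r(\bs{T})^\top]=\Sigma$, and Slutsky's theorem combines them into $\sqrt{n}\,\xi_{\MLE}\overset{d}{\longrightarrow}\mathcal{N}(\bs{0},\bs{M}^{-1}\bs{M}\bs{M}^{-1})=\mathcal{N}(\bs{0},\bs{M}^{-1})$.

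The routine pieces are the LLN and the CLT; the delicate part is the Lie-group bookkeeping of the remainders — showing that the quadratic-and-higher term in the expansion of $r(\exp(\xi)\bs{T})$, together with the variation of $J$ over the shrinking neighborhood of $\xi_{\MLE}$, contributes only $o_p(1)$ after multiplication by $\sqrt{n}$. This requires a preliminary rate lemma $\xi_{\MLE}=O_p(1/\sqrt{n})$, itself obtained from a local quadratic lower bound on $Q_n$ near $\bs{0}$, plus uniform bounds on the second derivatives of $r\circ\exp$ coming from the regularity conditions; verifying the Lindeberg (or Lyapunov) condition for the non-identically-distributed score under the landmark distribution assumption is the other nontrivial ingredient. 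These are exactly the places where the cited conditions of \cite{Jennrich1969AsymptoticPO} are used, and I would either reference them directly or reproduce the short arguments in an appendix.
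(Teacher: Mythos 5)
Your proposal is correct and follows essentially the same route as the paper: both reduce the problem by parametrizing the pose in exponential coordinates about the true $\bs{T}$ (via $\xi\mapsto\exp(\xi)\bs{T}$), turning the Lie-group MLE into a classical Euclidean nonlinear least-squares problem in $\mathbb{R}^6$, whose consistency and asymptotic normality the paper then obtains by directly citing the regularity conditions and results of Jennrich. The only difference is one of granularity --- you unpack that citation into the standard M-estimation argument (uniform LLN, Taylor expansion of the stationarity condition, Lindeberg CLT, Slutsky), whereas the paper invokes the classical result wholesale.
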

	The proof is given in the Appendix~\ref{apx:property_of_MLE}.
 
 %%%The properties of the MLE on the Lie group
	With a substantial quantity of environmental measurements, the ML estimate on the Lie group $\bs{T}_{\MLE}$ converges to the (right)concentrated Gaussian distribution with the mean of $\bs{T}$ at a rate of $1/\sqrt{n}$ and displays a minimal covariance being the inverse of the Fisher information matrix. The distribution of $\bs{T}_{\MLE}$ can be represented as $\mathcal{N}_{RG}(\bs{T},\bs{F}_{\MLE}^{-1})$. Note also that the ML estimate is asymptotically efficient according to Definition~\ref{def:efficiency}.

 %%We will study the MLE in Problem 2.2

In revisiting Problem~\ref{prob:new update problem}, our objective is to derive the posterior distribution. The necessity of a virtual measurement on the pose is crucial for the intended outcome. We are now in a position to treat the ML estimate as the virtual measurement. The following theorem scrutinizes the posterior distribution with the virtual measurement.

 %%Some settings of our theorem
 Before doing this, we rewrite the negative log-posterior of Problem~\ref{prob:new update problem} with the virtual measurement $\bs{T}_{\MLE}$ and define:
	\begin{equation}\label{eqn:linear MLE}
		\hat{\bs{X}}_{\op}= \mathop{\arg\min}_{\bs{X}\in SE_2(3)}\left\|\bar{\xi} \right\|^2_{\bar{\bs{P}}}+\left\|\log(\bs{T}_{\MLE}\bs{T}^{-1}) \right\|^2_{\bs{F}_{\MLE}^{-1}}.
	\end{equation} 
	
	\begin{theorem}[Update with ML pose]\label{thm:the_optimum_of_MAP}
With a sufficiently large $n$, the distribution of the a posteriori of Problem~\ref{prob:new update problem} is right concentrated Gaussian. The mean of the distribution is $\hat{\bs{X}}_{\op}$ and
		\begin{align}\label{eqn:linear Lie group optimization}	\hat{\bs{X}}_{\op}=\exp{(\hat{\delta})}\bar{\bs{X}},
		\end{align}
  where $\hat{\delta}\triangleq(\bar{\bs{P}}^{-1}+\tilde{\bs{I}}^\top \tilde{\bs{J}}^{-1 \top}_{\tilde{\delta}_{\MLE}} \bs{F}_{\MLE} \tilde{\bs{J}}^{-1}_{\tilde{\delta}_{\MLE}}\tilde{\bs{I}})^{-1}\tilde{\bs{I}}^\top\bs{F}_{\MLE}{\tilde{\delta}_{\MLE}}$, $\tilde{\delta}_{\MLE}\triangleq\log{(\bs{T}_{\MLE}\bar{\bs{T}}^{-1})}$, $\tilde{\bs{J}}_{\tilde{\delta}_{\MLE}}\triangleq
      \dexp^{-1}_{\tilde{\delta}_{\MLE}}\in\mathbb{R}^{6\times 6}$ and $\tilde{\bs{I}}\triangleq\begin{bmatrix}
			\bs{I}_{6}&
			\bs{0}_{6\times3}
		\end{bmatrix}$. The covariance is $(\bs{J}_{\hat{\bs{X}}_{\op}}\bar{\bs{P}}^{-1}\bs{J}^\top_{\hat{\bs{X}}_{\op}}+\tilde{\bs{I}}\bs{F}_{\MLE}\tilde{\bs{I}}^\top)^{-1}$. 
	\end{theorem}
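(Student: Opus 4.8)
\emph{Proof plan.} The plan is to recognize the \textit{a posteriori} in \eqref{eqn:update_2} as a product of two concentrated Gaussians, push the fusion into the Lie algebra where it becomes an ordinary Gaussian product, and then identify the outcome with a single Gauss--Newton step. First I would write out the two factors. The \textit{a priori} is $p(\bs{X}\,|\,\mathcal{T}_{k})=\alpha(\bar{\bs{P}})\exp\!\big(-\tfrac12\|\bar{\xi}\|^2_{\bar{\bs{P}}}\big)$ with $\bar{\xi}=\log(\bs{X}\bar{\bs{X}}^{-1})$, by the standing assumption of Section~\ref{sec:filtering_design_predict}. Taking $\bs{T}_{\bs{z}_{k+1}}=\bs{T}_{\MLE}$ and invoking Theorem~\ref{thm:property_of_MLE}, for $n$ large the virtual-measurement likelihood is $p(\bs{T}_{\MLE}\,|\,\bs{X})=\alpha(\bs{F}_{\MLE}^{-1})\exp\!\big(-\tfrac12\|\log(\bs{T}_{\MLE}\bs{T}^{-1})\|^2_{\bs{F}_{\MLE}^{-1}}\big)$, where $\bs{T}$ is the pose block of $\bs{X}$ and, since $\bs{F}_{\MLE}^{-1}=O_p(1/n)$ is small, $\alpha(\bs{F}_{\MLE}^{-1})$ may be treated as $\bs{X}$-independent (cf.\ the footnote in Section~\ref{sec:intro_CGN}). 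Multiplying and dropping constants gives $-2\log p(\bs{X}\,|\,\mathcal{T}_{k+1})=\|\bar{\xi}\|^2_{\bar{\bs{P}}}+\|\log(\bs{T}_{\MLE}\bs{T}^{-1})\|^2_{\bs{F}_{\MLE}^{-1}}$, which is exactly the objective in \eqref{eqn:linear MLE}; hence the posterior mode is $\hat{\bs{X}}_{\op}$.

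Next I would argue the posterior is itself right concentrated Gaussian. Both factors are concentrated with small covariances ($\bar{\bs{P}}$ small by assumption, $\bs{F}_{\MLE}^{-1}=O_p(1/n)$), so the discussion of Section~\ref{sec:intro_CGN} and the uncertainty-fusion results of \cite{YunfengWang2006,Barfoot2014AssociatingUW,Brossard2020AssociatingUT} reduce the computation to $\mathbb{R}^{9}$: in the coordinate $\delta\triangleq\log(\bs{X}\bar{\bs{X}}^{-1})$ the prior is exactly $\mathcal{N}(\bs{0},\bar{\bs{P}})$, and after linearizing the pose residual the product of the two densities is Gaussian to leading order, so its mean coincides with the mode $\hat{\bs{X}}_{\op}$ of the previous paragraph and its covariance is the inverse Hessian at that mode --- the Laplace / Gauss--Newton approximation of \cite{Bell93}. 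It then remains to evaluate the mode and this Hessian in closed form.

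For the mode, I would use that the matrix exponential of $\mathfrak{se}_2(3)$ splits over its blocks, so the pose of $\exp(\delta)\bar{\bs{X}}$ equals $\exp(\tilde{\bs{I}}\delta)\bar{\bs{T}}$ exactly (here $\bar{\bs{T}}$ is the pose block of $\bar{\bs{X}}$ and the $6$-vector $\tilde{\bs{I}}\delta$ is exponentiated in $SE(3)$), whence $\bs{T}_{\MLE}\bs{T}^{-1}=\exp(\tilde{\delta}_{\MLE})\exp(-\tilde{\bs{I}}\delta)$. Linearizing by Lemma~\ref{lemma: compound of two matrix exponentials} (valid since $\|\tilde{\bs{I}}\delta\|$ is small) gives $\log(\bs{T}_{\MLE}\bs{T}^{-1})\approx\tilde{\delta}_{\MLE}-\tilde{\bs{J}}^{-1}_{\tilde{\delta}_{\MLE}}\tilde{\bs{I}}\delta$ with $\tilde{\bs{J}}_{\tilde{\delta}_{\MLE}}=\dexp^{-1}_{\tilde{\delta}_{\MLE}}$; substituting this and $\bar{\xi}=\delta$ into the objective of \eqref{eqn:linear MLE}, setting the gradient in $\delta$ to zero, and simplifying with $\dexp_{x}x=x$ and the asymptotic $\dexp_{\tilde{\delta}_{\MLE}}=\bs{I}_6+o_p(1)$ produces the stated $\hat{\delta}$, hence $\hat{\bs{X}}_{\op}=\exp(\hat{\delta})\bar{\bs{X}}$. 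For the covariance, I would re-center via $\bs{X}=\exp(\hat{\xi})\hat{\bs{X}}_{\op}$: then $\bar{\xi}=\log(\exp(\hat{\xi})\exp(\hat{\delta}))$ has Jacobian $\bs{J}_{\hat{\bs{X}}_{\op}}=\dexp^{-1}_{\hat{\delta}}$ at $\hat{\xi}=\bs{0}$ (as in equality $(b)$ of Section~\ref{sec:Iterative gaussian newton}), while the pose residual has Jacobian $-\tilde{\bs{I}}$ up to a $\dexp^{-1}$-factor that tends to $\bs{I}_6$ because $\log(\bs{T}_{\MLE}\hat{\bs{T}}^{-1}_{\op})=O_p(1/\sqrt{n})\to\bs{0}$ (both $\bs{T}_{\MLE}$ and the pose of $\hat{\bs{X}}_{\op}$ converge to the true pose, by Theorem~\ref{thm:property_of_MLE} and Lemma~\ref{lemma:calculus_pf_op}); the Gauss--Newton Hessian is then $\bs{J}_{\hat{\bs{X}}_{\op}}\bar{\bs{P}}^{-1}\bs{J}^\top_{\hat{\bs{X}}_{\op}}+\tilde{\bs{I}}^\top\bs{F}_{\MLE}\tilde{\bs{I}}$, whose inverse is the claimed covariance.

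The hard part will be the second paragraph --- turning the heuristic ``a product of concentrated Gaussians is concentrated Gaussian'' into a controlled limit statement as $n\to\infty$. One must show that the errors from truncating BCH, from the $\dexp^{-1}\!\approx\!\bs{I}$ linearizations, and from treating the normalization factors as $\bs{X}$-independent are all $o_p(1)$ relative to the leading quadratic, exploiting $\bs{F}_{\MLE}^{-1}=O_p(1/n)$ together with the smallness of $\bar{\bs{P}}$ and the $O_p/o_p$ calculus of Lemma~\ref{lemma:calculus_pf_op}. A lesser, purely mechanical hurdle is keeping the left/right Jacobian and sign conventions consistent so that the normal equations collapse to exactly the displayed $\hat{\delta}$ and covariance.
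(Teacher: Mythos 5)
Your proposal matches the paper's proof in essence: both reduce the fused negative log-posterior to the objective in \eqref{eqn:linear MLE}, linearize $\log(\bs{T}_{\MLE}\bs{T}^{-1})$ via Lemma~\ref{lemma: compound of two matrix exponentials} to obtain a linear least-squares problem in $\delta=\log(\bs{X}\bar{\bs{X}}^{-1})$ whose normal equations give the stated $\hat{\delta}$, and then re-center at $\hat{\bs{X}}_{\op}$ to exhibit the posterior as a quadratic form in $\hat{\xi}$, from which the covariance is read off. The only cosmetic difference is the justification that the mean equals the mode: the paper cites an MAP--MMSE equivalence result, while you argue the concentrated-Gaussian form of the posterior directly; both rest on the same small-covariance linearizations.
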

 The proof is given in Appendix~\ref{prof:proof_of_MAPwithMLE}. 

%%Problem 2.2 can be solved by the MLE 
The non-linearity of $r(\cdot)$ in~\eqref{eqn:camera_model} or \eqref{eqn:LiDAR_model}
hinders us from solving
Problem~\ref{prob:update problem} easily.    
In contrast, the introduction of $\bs{T}_{\MLE}$ enable us to solve 
Problem~\ref{prob:new update problem} analytically with minimal effort when $n$ is sufficiently large.
%%It is noted that the MAP problem in problem 2.2 can be reformed into the least squres problem
Furthermore, the solution $\hat{\bs{X}}_{\op}$ to~\eqref{eqn:linear MLE} can be interpreted as a perturbed
$\bar{\bs{X}}$ from the left 
and has a closed-form expression. The estimator $\hat{\bs{X}}_{\op}$ is optimal in the sense of MMSE. In particular, this arises from the fact that the ML estimate $\bs{T}_{\MLE}$ is close to the true pose, thereby Lemma~\ref{lemma: compound of two matrix exponentials} applied to
asymptotically approximate the BCH term in the Lie algebra~(see~\eqref{eqn:LLS_of_MAP_using_MLE} in the proof of Theorem~\ref{thm:the_optimum_of_MAP}). 

%%The left problem is how to get the MLE, we need the algorithm that is accurate,fast and theoretical garauntee.

The ML pose $\bs{T}_{\MLE}$ is available from solving an ICP problem~\cite{Segal2009GeneralizedICP} with Lidar points.
The challenge remains in computing the ML pose from visual features due to the high nonlinearity of $r(\cdot)$ in~\eqref{eqn:camera_model}. 
While some works proposed numerical methods~\cite{olsson2008branch} to compute it, the computational cost associated with them is significantly high. 
We will 
get around the difficulty by initializing the LGN step~\eqref{eqn:LGN method of problem 1} with a $\sqrt{n}$-consistent pose. 
We will show that the resulting outcome converges to $\hat{\bs{X}}_{\op}$ in probability.

	\subsection{Fusion with $\sqrt{n}$-consistent pose estimation alternatively for EIKF}\label{sec:equivalence}
%%Setting of the EIKF method   
Recall that we have obtained the $\sqrt{n}$-consistent pose of \eqref{eqn:camera_model} or \eqref{eqn:LiDAR_model}. In Algorithm~\ref{alg:EIEKF}, the initial pose of the LGN of \eqref{eqn:prob:update problem} is set to be $\bs{T}_{\cons}$ for a single iteration. Let us denote the resulting estimate as $\hat{\bs{X}}_{\EIKF}$. We discuss the asymptotic property of $\hat{\bs{X}}_{\EIKF}$ in the next theorem. 

 %%the advantage of the EIKF
	\begin{theorem}[Asymptotic convergence of $\hat{\bs{X}}_{\EIKF}$]\label{thm:equivalent relation}
		 Let the estimated value of our Algorithm~\ref{alg:EIEKF} be denoted as $\hat{\bs{X}}_{\EIKF}$. The estimated value $\hat{\bs{X}}_{\EIKF}$ has the 
  %  can be expressed as
		% \begin{align*}
		% 	\hat{\bs{X}}_{\mf}&=\exp{(\bs K}\left(r(\bs{T}_{\cons})+\bs{H}_{\bs{X}_{\cons}}r_{\bar{\bs{X}}}(\bs{X}_{\cons})\right))\bar{\bs{X}},\\
		% 	\bs{K}&\triangleq\bs{J}_{\bs{X}_{\cons}}\bs M\bs{H}^\top_{\bs{X}_{\cons}}\Sigma_j^{-1},\\
		% 	\bs M&\triangleq\left(\bs{J}_{\bs{X}_{\cons}}^\top \bar{\bs{P}}^{-1}\bs{J}_{\bs{X}_{\cons}}+\bs{H}^\top_{\bs{X}_{\cons}}\Sigma_j^{-1}\bs{H}_{\bs{X}_{\cons}} \right)^{-1}.
		% \end{align*}
		% The covariance writes 
		% \begin{equation*}
		% 	\mathbb{E}\left[\delta_{\hat{\bs{X}}_{\mf}}\delta_{\hat{\bs{X}}_{\mf}}^\top\right]=\bs{J}^{-1}_{\bs{X}_{\cons}}(\bs{I}-{K}\bs{H}_{\bs{X}_{\cons}}\bs{J}^{-1}_{\bs{X}_{\cons}})\bar{\bs{P}}\bs{J}^{-1\top}_{\bs{X}_{\cons}}.
		% \end{equation*}
		 following asymptotic property:\begin{equation}
\hat{\bs{X}}_{\EIKF}-\hat{\bs{X}}_{\op}=o_p(\frac{1}{\sqrt{n}}).
		\end{equation}
	\end{theorem}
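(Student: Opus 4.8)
The plan is to show that the single LGN step initialized at the $\sqrt{n}$-consistent pose $\bs{T}_{\cons}$ produces the same first-order (in $1/\sqrt{n}$) output as the idealized optimal fusion $\hat{\bs{X}}_{\op}$ of Theorem~\ref{thm:the_optimum_of_MAP}, which is itself defined through the ML pose $\bs{T}_{\MLE}$. The bridge between the two is that both $\bs{T}_{\cons}$ and $\bs{T}_{\MLE}$ are $\sqrt{n}$-consistent estimators of the true pose $\bs{T}$, so their difference is $O_p(1/\sqrt{n})$, and moreover the Gauss--Newton map, when started from any $\sqrt{n}$-consistent point, lands within $o_p(1/\sqrt{n})$ of the exact minimizer of the least-squares cost. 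First I would recall from Theorem~\ref{thm:n-consistent solution camera} (and its LiDAR analogue) that $\bs{T}_{\cons}=\bs{T}+O_p(1/\sqrt{n})$, and from Theorem~\ref{thm:property_of_MLE} that $\bs{T}_{\MLE}=\exp(\xi_{\MLE})\bs{T}$ with $\sqrt{n}\,\xi_{\MLE}$ asymptotically normal, hence also $\bs{T}_{\MLE}=\bs{T}+O_p(1/\sqrt{n})$. Therefore $\log(\bs{T}_{\cons}\bs{T}_{\MLE}^{-1})=O_p(1/\sqrt{n})$.

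Next I would write out the single LGN iterate of \eqref{eqn:LGN method of problem 1}--\eqref{eqn: update form} explicitly with $\mu^{(0)}$ built from $\bs{T}_{\cons}$ (padded with $\bar{\bs v}_I$ as in Step~2 of Algorithm~\ref{alg:EIEKF}), giving $\hat{\bs{X}}_{\EIKF}=\exp(\bs{K}_j(r_j(\bs{T}_{\cons})+\bs{H}_{\mu,j}\delta^{(0)}))\bar{\bs{X}}$. The key analytic observation is that $r_j$ is smooth and, since $\bs{T}_{\cons}$ is close to $\bs{T}$, a first-order Taylor expansion of $r_j$ about the true pose gives $r_j(\bs{T}_{\cons})=r_j(\bs{T})+\frac{\partial r_j}{\partial\bs{T}}\big|_{\bs{T}}\,\log(\bs{T}_{\cons}\bs{T}^{-1})+o_p(1/\sqrt{n})$, and the same expansion holds with $\bs{T}_{\MLE}$ in place of $\bs{T}_{\cons}$. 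Because the ML pose satisfies the first-order optimality condition $\frac{\partial r_j}{\partial\bs{T}}^\top\Sigma_j^{-1}r_j(\bs{T}_{\MLE})=0$ up to $o_p(1/\sqrt n)$, one can solve for $\log(\bs{T}_{\MLE}\bs{T}^{-1})$ and substitute, so that after collecting terms the Gauss--Newton increment expressed in terms of $\bs{T}_{\cons}$ equals the increment expressed in terms of $\bs{T}_{\MLE}$ plus an $o_p(1/\sqrt n)$ remainder. Here I would invoke Lemma~\ref{lemma:calculus_pf_op}(iv)--(v) to push the $O_p(1/\sqrt n)$ and $o_p(1/\sqrt n)$ bounds through the continuous maps (the Jacobians $\bs{H}$, $\bs{J}=\dexp^{-1}$, the gain $\bs{K}$, and $\exp$), since all of these depend continuously on the base point and evaluate, at the true state, to the same matrices appearing in $\hat{\bs{X}}_{\op}$ via Theorem~\ref{thm:the_optimum_of_MAP}.

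Finally I would assemble: the retraction $\exp(\cdot)\bar{\bs{X}}$ is smooth, $\bar{\bs{X}}$ is fixed (it is the a~priori, not growing with $n$), and the exponents of $\hat{\bs{X}}_{\EIKF}$ and $\hat{\bs{X}}_{\op}$ differ by $o_p(1/\sqrt n)$, so $\hat{\bs{X}}_{\EIKF}\hat{\bs{X}}_{\op}^{-1}=\exp(o_p(1/\sqrt n))$, i.e. $\hat{\bs{X}}_{\EIKF}-\hat{\bs{X}}_{\op}=o_p(1/\sqrt n)$ in any matrix norm, which is the claim. I expect the main obstacle to be the careful bookkeeping in the step where the Gauss--Newton increment built from $\bs{T}_{\cons}$ is re-expressed in terms of $\bs{T}_{\MLE}$: one must verify that the quadratic Taylor remainder of $r_j$ is genuinely $o_p(1/\sqrt n)$ rather than merely $O_p(1/n)$ contributing nothing — this needs $r_j''$ bounded (a regularity condition already assumed for Theorem~\ref{thm:property_of_MLE}) — and that the padding of the $SE(3)$ pose into $SE_2(3)$ with the a~priori velocity $\bar{\bs v}_I$ (which is \emph{not} $\sqrt n$-consistent) does not spoil the argument, because the velocity block enters only through the fixed projector $\tilde{\bs{I}}$ and the a~priori covariance $\bar{\bs{P}}$, exactly as in the definition of $\hat{\bs{X}}_{\op}$. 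A secondary subtlety is replacing the true noise covariance $\Sigma_j$ by its consistent estimate $\hat\Sigma_j$ in Algorithm~\ref{alg:EIEKF}; since $\hat\Sigma_j=\Sigma_j+o_p(1)$ this only perturbs $\bs{K}_j$ by $o_p(1)$ and, multiplied against a residual term of order $O_p(1/\sqrt n)$, contributes $o_p(1/\sqrt n)$, absorbed into the stated bound.
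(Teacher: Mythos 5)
Your proposal is correct in substance and rests on the same underlying idea as the paper's proof, but it packages it differently. The paper's route introduces two intermediate objects: the one-step refinement $\bs{T}_{\GN}$ of $\bs{T}_{\cons}$ on the measurement-only problem, whose asymptotic equivalence $\bs{T}_{\GN}-\bs{T}_{\MLE}=o_p(1/\sqrt{n})$ is simply cited from the classical one-step-estimator result (Lemma~\ref{lemma:refinement_of_the_GN}), and the fused estimator $\hat{\bs{X}}_{\Gf}$ obtained by substituting $\bs{T}_{\GN}$ for $\bs{T}_{\MLE}$ in Theorem~\ref{thm:the_optimum_of_MAP}, which is tied to $\hat{\bs{X}}_{\op}$ by continuity (Proposition~\ref{prop:Asymptotic property of Gf}); the remaining work is pure matrix bookkeeping, comparing $\delta_{\EIKF}$ with $\delta_{\Gf}$ via Lemma~\ref{lemma:approximation_X_a}. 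You instead re-derive the one-step equivalence inline, by Taylor-expanding $r_j$ about the true pose and invoking the stationarity of $\bs{T}_{\MLE}$; this is exactly the classical derivation the paper outsources to Lehmann, so your version is more self-contained but must carry the remainder bookkeeping explicitly (your observation that the aggregated quadratic remainder is $O_p(1/n)=o_p(1/\sqrt{n})$ after the gain's scaling is the right check, and your handling of the velocity padding and of $\hat{\Sigma}_j$ versus $\Sigma_j$ is a point the paper leaves implicit). The one place where your sketch is too loose is the blanket appeal to Lemma~\ref{lemma:calculus_pf_op}(iv)--(v) to ``push bounds through'' the gain and information matrices: these are not fixed continuous maps but $n$-dependent objects whose dimension and magnitude grow with $n$, so the continuous-mapping lemma does not apply directly. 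The paper handles this in Lemma~\ref{lemma:approximation_X_a} by normalizing, using that $\bs{H}^\top\Sigma^{-1}\bs{H}/n$ converges so that $\bs{M}_{\cons}=O_p(1/n)$ and hence $\sqrt{n}\,\bs{M}_{\cons}\to 0$ in probability, which is also what makes the prior term asymptotically negligible; your argument needs this normalized-information step spelled out in place of the bare continuity appeal, after which it goes through.
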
 
    The proof can be found in Appendix~\ref{prof:proposition_of_mf}. 
    The roadmap of it is first
    to utilize LGN refinement to a $\sqrt{n}$-consistent pose 
    and then to prove 
that both $\hat{\bs{X}}_{\EIKF}$ and  $\hat{\bs{X}}_{\op}$ converge to this refined pose. 

%%%the advantage of the EIKF 
The EIKF stands out as a specialized filter that efficiently leverages the LEM case, offering a reliable initial value-$\sqrt{n}$-consistent pose. As theoretically demonstrated, the estimator shares the same asymptotic properties as $\hat{\bs{X}}_{\op}$ and is asymptotically optimal in terms of MMSE. 
In scenarios featuring reliable and dense measurements, the EIKF produces accurate and computationally effective estimates. Moreover, accuracy can be further improved with increasing data collection.

%%% %%%%Discussion with the other related works
Our filtering method features a single LGN iteration in the update of Algorithm~\ref{alg:EIEKF}. The overall computational complexity is $O(n)$, including the computation of the $\sqrt{n}$-consistent pose and the one single LGN iteration. Therefore, Algorithm~\ref{alg:EIEKF} demonstrates scalability in processing dense tracked feature positions and point clouds.

%%%The backup of the method
Considering generalizability across various scenarios, it is crucial to enable an adaptable initial pose to deal with sparse features. When the practical $n$ falls below a certain threshold, we resort to multiple LGN iterations with $\bar{\bs{X}}$ as an alternative in initialization. This is motivated by noting that the a priori constructed from the IMU may offer a more reliable initial pose for LGN. Though there is no convergence guarantee for the iterated method in this case, our experiments demonstrate its applicability in the majority of cases. In addition, we find through trial and error an empirical threshold to switch the two initialization methods in VIO and LIO scenarios. The practical EIKF is summarized in Algorithm~\ref{alg:practical EIEKF}.

\begin{algorithm}[h]
		\caption{Practical EIKF}\label{alg:practical EIEKF}
   \textbf{Parameters:}~$N$, $l_{\rm EIKF}$, $\tau_{\rm EIKF}$,$\Sigma_C$,$\Sigma_L$\\
		\KwIn{$\hat{\bs{X}}_{k}$,  $\hat{\bs{P}}_{k}$, $\bs{\omega}_{m,k+1}$, $\bs{a}_{m,k+1}$, $\bs{z}_{C,k+1}$, $\bs{z}_{L,k+1}$}
		\KwOut{$\hat{\bs{X}}_{k+1}$, $\hat{\bs{P}}_{k+1}$}
		\textbf{Predict:} compute the predicted state $\bar{\bs{X}}_{k+1}$ via~\eqref{eqn:IMU_estimator} and the covariance $\bar{\bs{P}}_{k+1}$ via~\eqref{eqn:cov_predict}\;
		\textbf{Update:}
			\uIf{$n>N$}{
				execute Algorithm~\ref{alg:EIEKF}\;
			}
			\Else{
				~set $\mu^{(0)}=\bar{\bs{X}}_{k+1}$,
				$l_{\rm max}=l_{\rm EIKF}$,
                    $\tau=\tau_{\rm EIKF}$, $\Sigma_C$, $\Sigma_L$\;
                    execute Algorithm~\ref{alg:Iterated_invariant_kalman_filter}\;
			}
	\end{algorithm}

	\section{Experiments}\label{sec:experiment}
	We present simulations and experimental results of real datasets to validate the performance of our proposed algorithm for VIO, LIO, and LVIO. Our evaluations encompass assessments of accuracy, robustness and computational efficiency, which collectively demonstrate the EIKF method compared to the state-of-the-art filtering method IEKF and InEKF~\cite{Barrau}. Additionally, we explore the impact of different initializers for the EIKF, i.e., initialization via the predicted state (EIKF-I) or via the $\sqrt{n}$-consistent pose (EIKF-C), to provide a comprehensive comparison across various scenarios and sensor configurations. 
	\subsection{Simulation}
	We presented the simulation results of different scenarios including VIO and LIO. Specifically, we focused on the PnP method in VIO and the ICP method in LIO. Different samplings of the measurements, initial disturbances and measurement noise levels were designed to showcase the accuracy. The RMSE was selected as the evaluation criterion.
	
	The IMU measurements were taken at a high rate of 100Hz, while the camera measurements were taken at a relatively low rate of 20Hz. The ground truth trajectory was generated using the position function $[70\sin(0.15t),80\cos(0.15t),7\sin(0.75t)]$ and the angular velocity function $[0.2\text{rad/s},0.3\text{rad/s},0.1\text{rad/s}]$.  The specific experimental settings that include the noise of the IMU measurements used for the evaluation were summarized in Table~\ref{tab:experiment_setting}.  
	\begin{table}[htbp]
		\centering
		\caption{Noise Statistics in Simulation}
		\begin{tabular}{lll}
			
			\toprule
			\textbf{Type} & \textbf{Measurement} & \textbf{S.D.}(Units) \\
			\midrule
			\multirow{4}{*}{\textbf{Measurement}} & Gyroscope Noises& $0.01 \, ({\text{rad}}/{\text{s}\sqrt{\text{HZ}}}) $ \\
			& Accelerometer Noises& $0.01 \, (\text{m} / \text{s}^2\sqrt{\text{HZ}})$ \\
			& Gyroscope Random Walk & $0.001 \, (\text{rad} / \text{s}^2\sqrt{\text{HZ}})$ \\
			& Accelerometer Random Walk &$0.001 \, (\text{m} / \text{s}^3\sqrt{\text{HZ}})$ \\
			% \midrule
			% \multirow{5}{*}{\textbf{State}} & Orientation of IMU & {\color{red}$0.0001 \, \text{rad}^2$} \\
			% & Position of IMU & {\color{red}$0.0001 \, \text{m}^2$} \\
			% & Velocity of IMU & {\color{red}$0.0001 \, \text{m}^2 / \text{s}^2$} \\
			% & Gyroscope Bias & {\color{red}$0.0001 \, \text{rad}^2 / \text{s}^2$} \\
			% & Accelerometer Bias & {\color{red}$0.0001 \, \text{m}^2 / \text{s}^4$} \\
			
			\bottomrule
		\end{tabular}
		\label{tab:experiment_setting}
	\end{table}
	\subsubsection{VIO}
	The IEKF,  the InEKF, the EIKF-C and the EIKF-I  were executed 100 times in four different scenarios to assess and compare their performance. These scenarios included a single setting, varying landmarks, different levels of noise, and initial disturbance.

	%------------------------------------------------------------------------------
	% The EKF,  the InEKF, {\color{red}the iterated InEKF} and the EIKF were run 100 times in two cases of various scale of noise and various random initial orientations and positions. In the first case, the scales of noise were set into $\sigma_{\text{cam}}=0.02$ and $\sigma_{\text{cam}}=2$. In the second one, the orientation initial errors were set to lie within the interval $(0, \pi/12)$ for pitch, roll, and yaw, while the position initial errors were set to lie within $(0,10)$ for the x, y, and z directions. 
	%%%1. Comparison of Different Initial error
	
	\begin{figure}[htbp]
		\centering
		\subfigure[]{
			\includegraphics[width=0.25\textwidth]{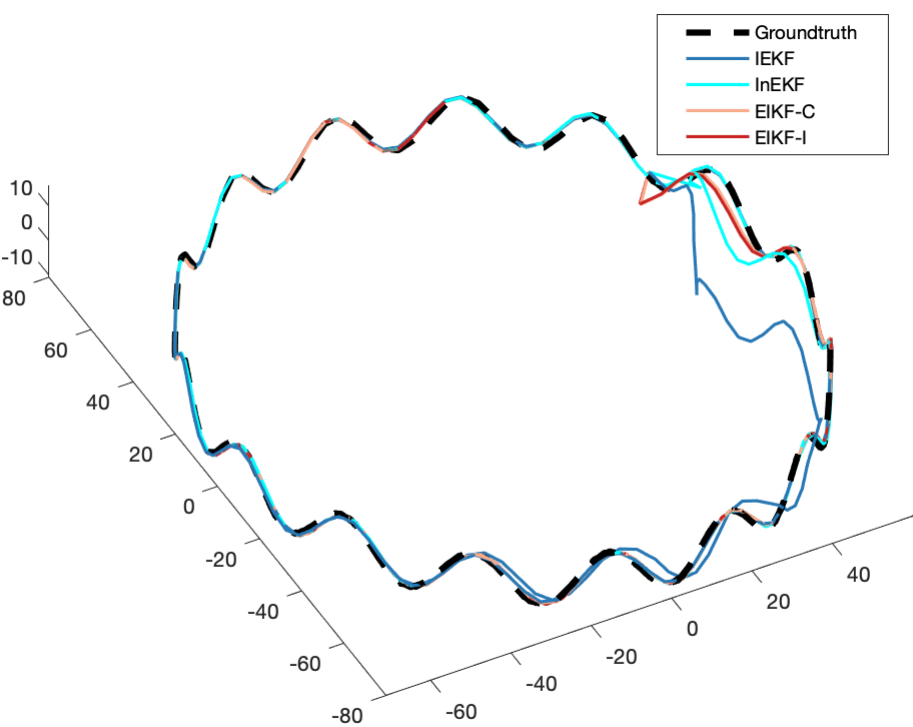}
		}
  \subfigure[]{
			\includegraphics[width=0.4\textwidth]{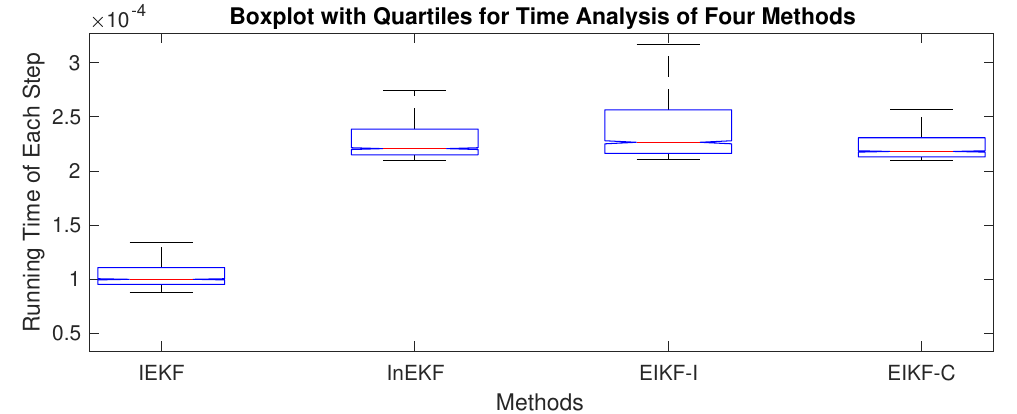}
		}
		\subfigure[]{
			\includegraphics[width=0.4\textwidth]{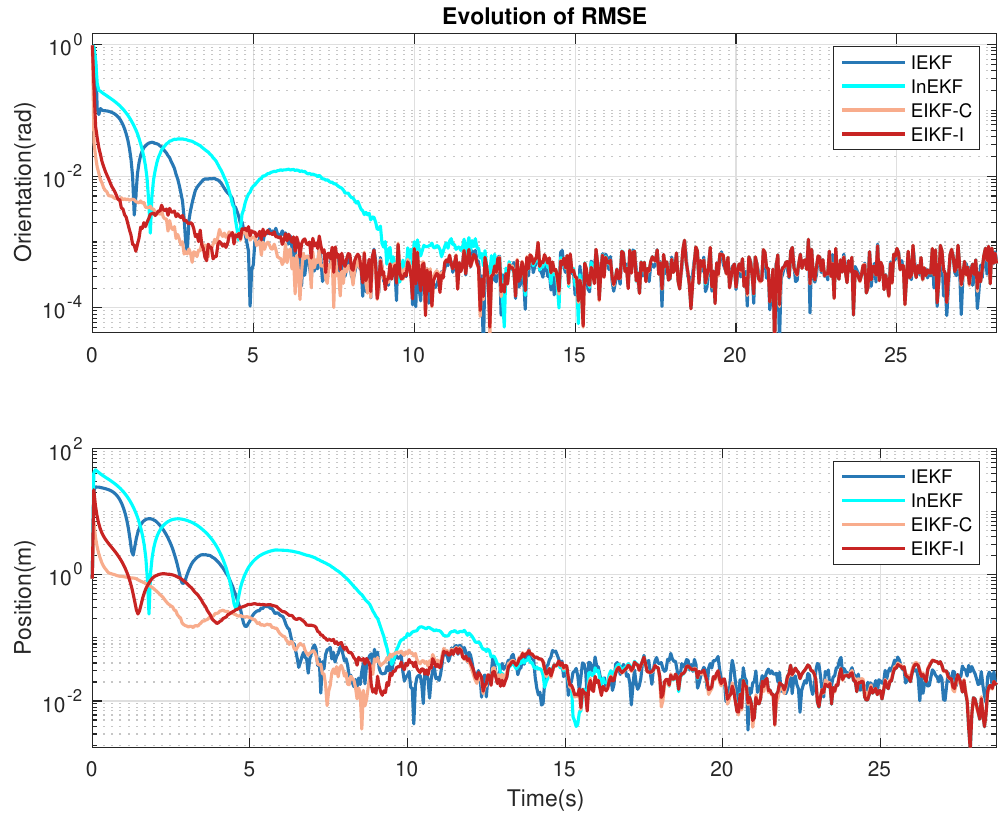}
		}
		\caption{The estimated trajectories, time analysis and RMSEs of orientation and position through IEKF, InEKF, EIKF-C and EIKF-I for VIO. The initial position deviation is set to $[0.5m, 0.5m, 0.5m]$, and the initial attitude deviation is set to $[\frac{\pi}{6}, \frac{\pi}{6}, -\frac{\pi}{6}]$~($[\text{roll}, \text{pitch}, \text{yaw}]$). The entire trajectory is generated using $\begin{bmatrix}
				70\sin{(0.15t)},80\sin{(0.15t)},7\sin{(0.75t)}.
			\end{bmatrix}$. We allow a maximum of three iterations for IEKF and EIKF-I. All methods successfully converge, but exhibit different average RMSE values over the entire evolution. Indeed, EIKF-C demonstrates superior performance in low cost as other filterings do, small RMSE and a fast convergence rate, with just a single iteration in update of the time period from $t=0$ to $30$. }
		\label{vio_rmse}
	\end{figure}
	In the first case, the entire trajectory in our experiment spanned a duration of 30 seconds with 100 camera landmarks and the standard derivation of noises set into $\sigma_{\text{cam}}=1$ pixel.  The trajectory, the time analysis and the evolution of the RMSE of orientation and position were shown in Figure~\ref{vio_rmse}.  The experimental results demonstrated that all filters exhibit convergence, but with a different convergence rate and average RMSE. EIKF-C demonstrated superior performance in RMSE and convergence rate while maintaining low computational cost.  To simplify the comparison, we used the average RMSE to evaluate all the methods in the following three experiments. 
	
	In the second case, we performed a comparison of all methods under  \textit{ varying numbers of the landmark} settings. The experimental results were depicted in Figure~\ref{fig:landmarks_num_vio}. The figure illustrated that as the quantity of landmarks increased, the average RMSE of EIKF-C noticeably decreased, whereas the other methods did not exhibit this behavior. EIKF-C consistently outperformed the other methods when the quantity of landmark was larger than $50$. Furthermore, EIKF-C exhibited a faster asymptotic rate with respect to the quantity of landmarks, confirming the asymptotic efficiency as analyzed in Section~\ref{sec:theoretical analysis}.
	
	The third and fourth cases involved comparisons of all methods under different \textit{noises} and \textit{initial} \textit{deviation} settings to demonstrate robustness. In terms of noise setting, we considered standard deviations ranging from $0.1$ to $2$ pixels as the noises levels. The results are displayed in Figure~\ref{fig:noise_scale_vio}. EIKF-I and EIKF-C achieved lower RMSEs compared to those of other methods. However, when the measurement noise was set to values greater than $1.5$, EIKF-I outperformed EIKF-C, indicating that initialization using the predicted state performed better in these scenarios. Regarding the initial disturbances, we applied the deviation coefficients ranging from $0$ to $1.0$ to the values $[0.5m, 0.5m, 0.5m]$ and $[\frac{\pi}{6}, \frac{\pi}{6}, -\frac{\pi}{6}]$. The results, shown in Figure~\ref{fig:init_scale_vio}, further highlighted the lower RMSEs achieved by EIKF-I and EIKF-C. In summary, these two experiments showcased the accuracy and robustness of EIKF under different noise and disturbance conditions.
	
	\begin{figure}[htbp]
		\centering
		\includegraphics[width=0.47\textwidth]{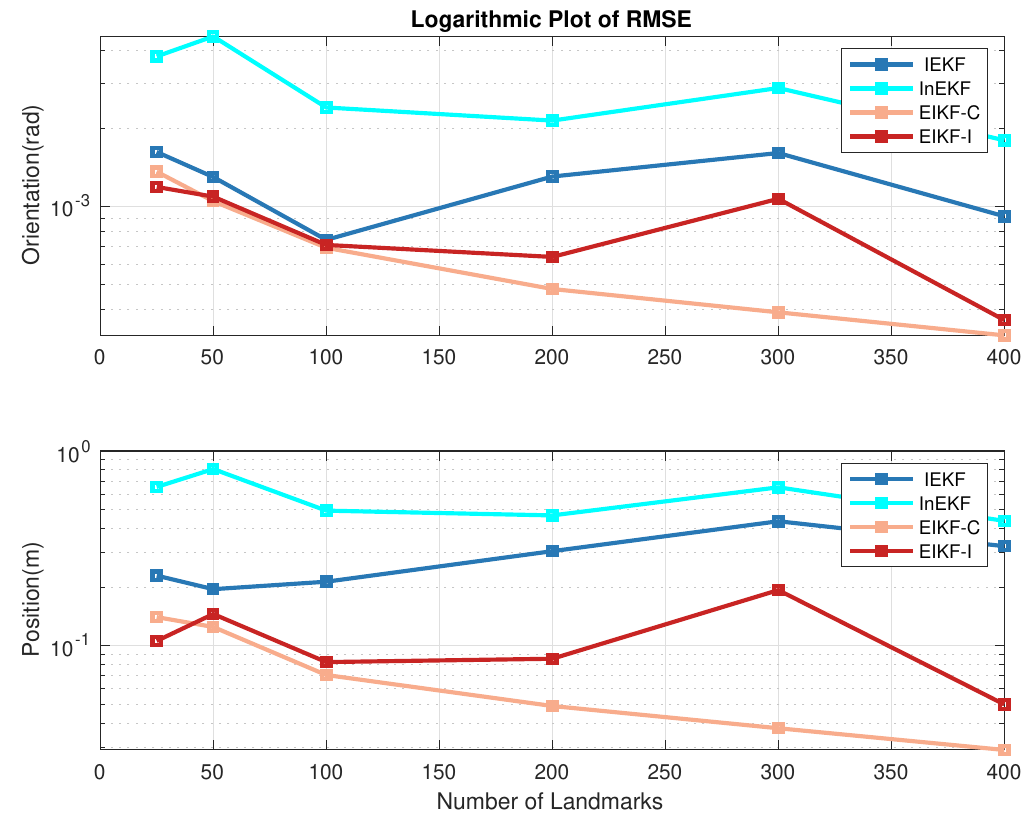}
		
		\caption{RMSEs for orientation and position, versus different numbers of landmarks, are compared among IEKF, InEKF, EIKF-C, and EIKF-I. EIKFs outperform others in terms of both asymptotic convergence rate and the average RMSE. }\label{fig:landmarks_num_vio}
	\end{figure}
	
	\begin{figure}[htbp]
		\centering
		\includegraphics[width=0.47\textwidth]{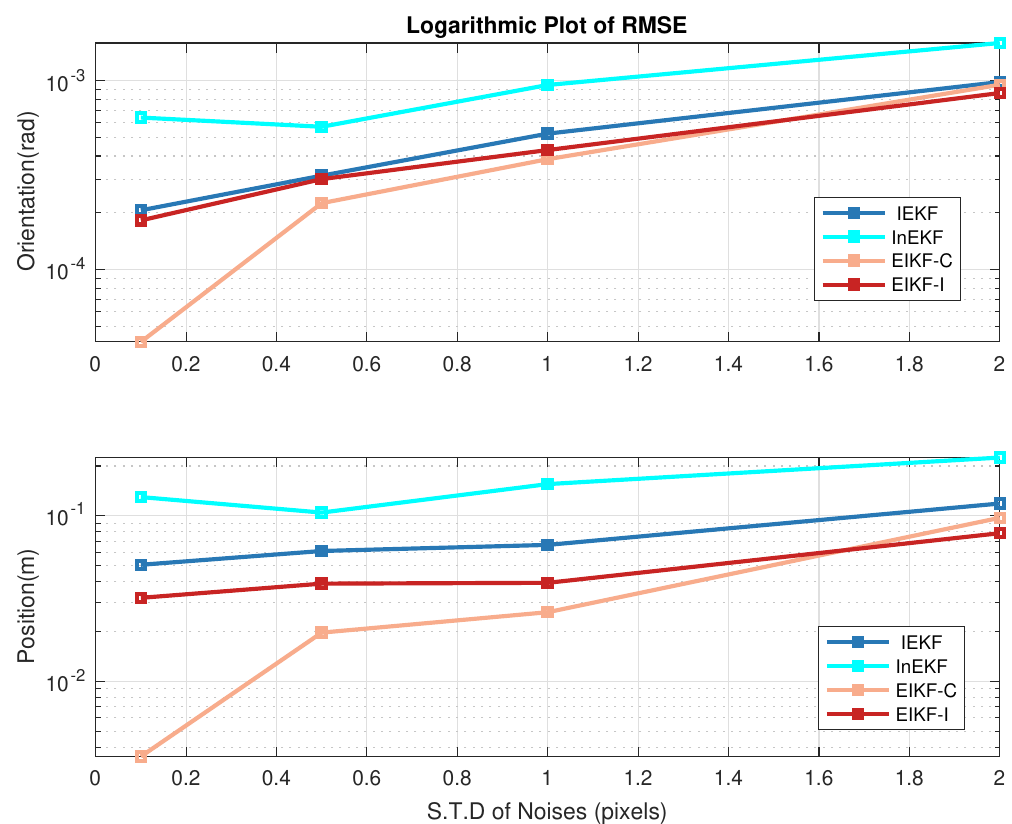}
		
		\caption{RMSEs for orientation and position, versus different levels of noises, are compared among IEKF, InEKF, EIKF-C, and EIKF-I.}\label{fig:noise_scale_vio}
	\end{figure}
	
	\begin{figure}[htbp]
		\centering
		\includegraphics[width=0.47\textwidth]{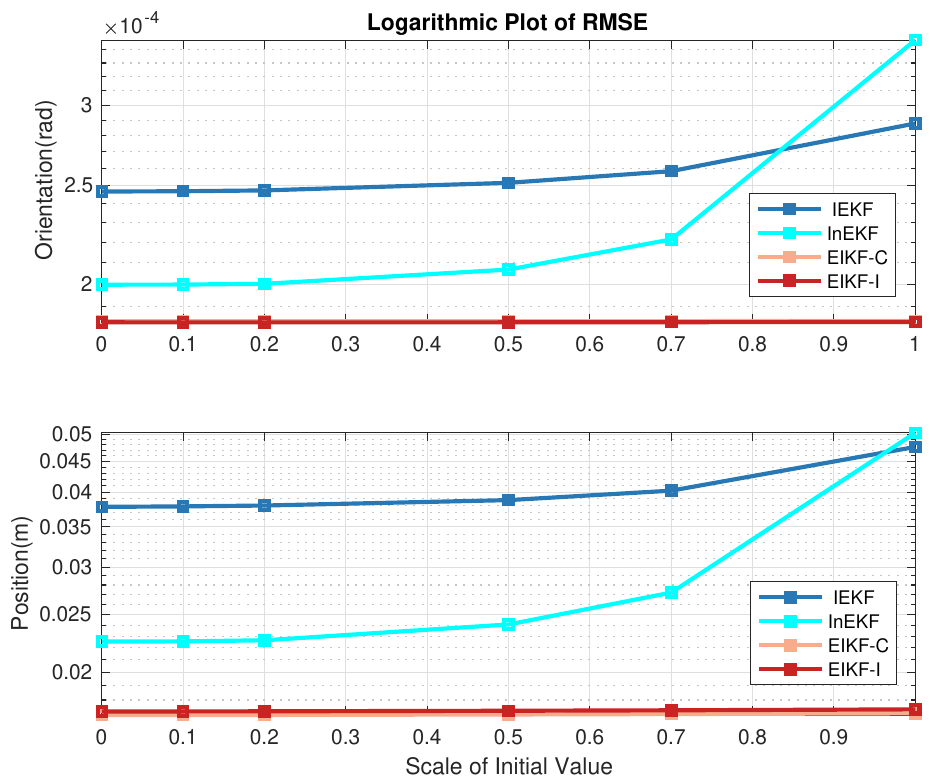}
		\caption{RMSEs for orientation and position, versus different scales of initial deviations, are compared among IEKF, InEKF, EIKF-C, and EIKF-I.}\label{fig:init_scale_vio}
	\end{figure}
	% 2. convergence property

	% 3. iterated inekf v.s. eikf
	% In addition to the main experiments, we also conducted an additional experiment to demonstrate the difference between EIKF and the iterated InEKF, with the same experimental settings including initial conditions and scale of noise. In this experiment, we observed the convergence behavior of both methods after two iterations. The results showed that the convergence gap of the iterated InEKF became more similar to that of EIKF after two iterations. However, even with the same number of iterations, EIKF still exhibited better performance in terms of accuracy and efficiency, as shown in Figure \ref{}. This highlights the advantages of the EIKF method in providing efficient and optimal filtering results compared to the iterated InEKF approach.

	%%%%%%%%%%%%%%%%%%%%%%%%%%%%%%%%%%%%%%%%%%%%%%%%%%%%%%%%%%%%%%%%%%%%%%%%%%%%%%%%%%%%%%%%%%%%%%%    
	\subsubsection{LIO}
	The IEKF, the InEKF and the EIKF were also conducted on LIO system in the same four distinct scenarios as the VIO system. The measurements of LiDAR and IMU were obtained at 50Hz and 100Hz, respectively. 
	
	In the first case, using the same trajectory as the VIO with 400 landmarks and a noises standard deviation of 0.2, we presented the trajectories of all filters in Figure~\ref{lio_path_noise}. The experimental results revealed the fast convergence rate and lower average RMSE of the EIKFs compared to IEKF and InEKF. In the other three cases, involving comparisons under different numbers of landmarks, noises levels, and initial deviations, the simulations supported similar conclusions as in the VIO experiments. Moreover, the differences between IEKF and EIKF in LIO were less pronounced than in VIO, indicating that EIKF excelled in non-linear models compared to linear ones.
	% \begin{figure}[htbp]
		% 	\centering
		% 	\subfigure[]{
			% 		\includegraphics[width=\mysize]{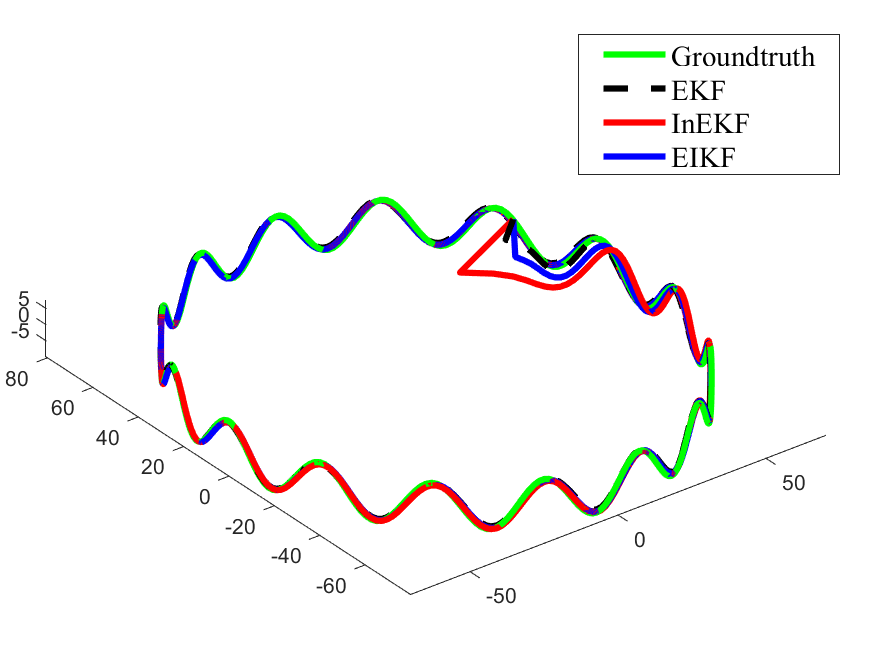}
			% 	}
		% 	\subfigure[]{
			% 		\includegraphics[width=\mysize]{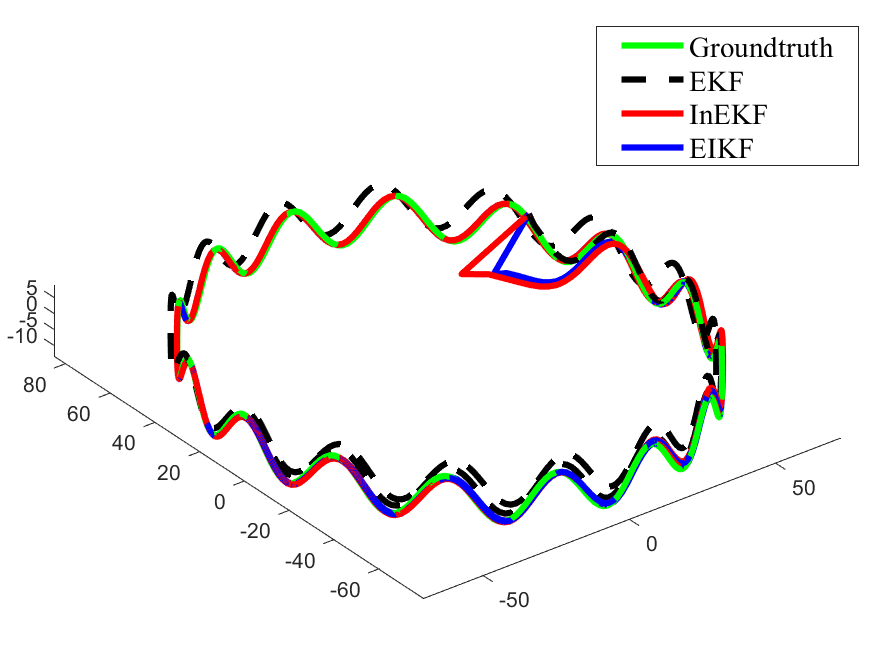}
			% 	}
		% 	\subfigure[]{
			% 		\includegraphics[width=\mysize]{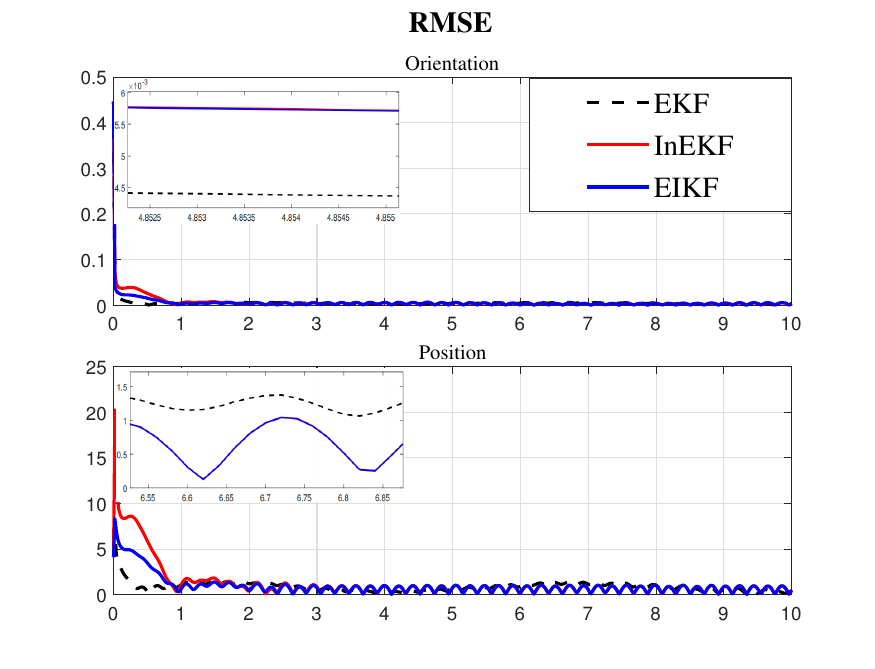}
			% 	}
		% 	\subfigure[]{
			% 		\includegraphics[width=\mysize]{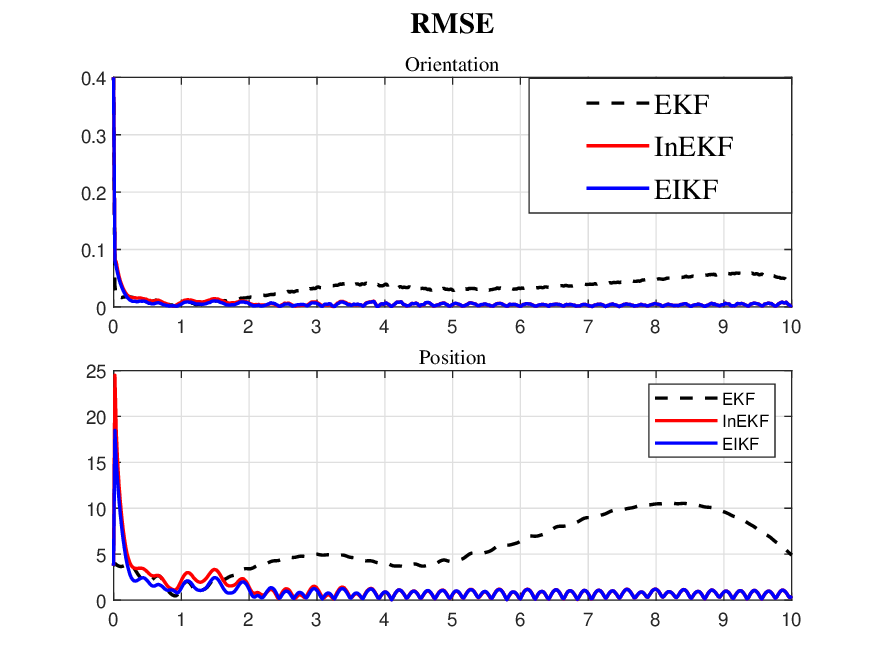}
			% 	}
		
		% 	\caption{The estimated trajectory and RMSE of position and orientation through EKF, InEKF and EIKF for LIO with different LiDAR measurement noises. The initial position $[x,y,z]$ deviation is set to  $[-2\text{m}, -3 \text{m}, 1 \text{m}]$ and the attitude $[roll,pitch,yaw]$ deviation is $[18^\circ,12^\circ, 12^\circ]$. From left to right: \textbf{The first column}: small LiDAR measurement noise ($\sigma_{lid}=0.20$m) condition. \textbf{The second column}: large LiDAR measurement noise ($\sigma_{lid}=0.80$m) condition.}
		%  	\label{lio_path_noise}
		% \end{figure}
	
	\begin{figure}[htbp]
		\centering
		\includegraphics[width=0.45\textwidth]{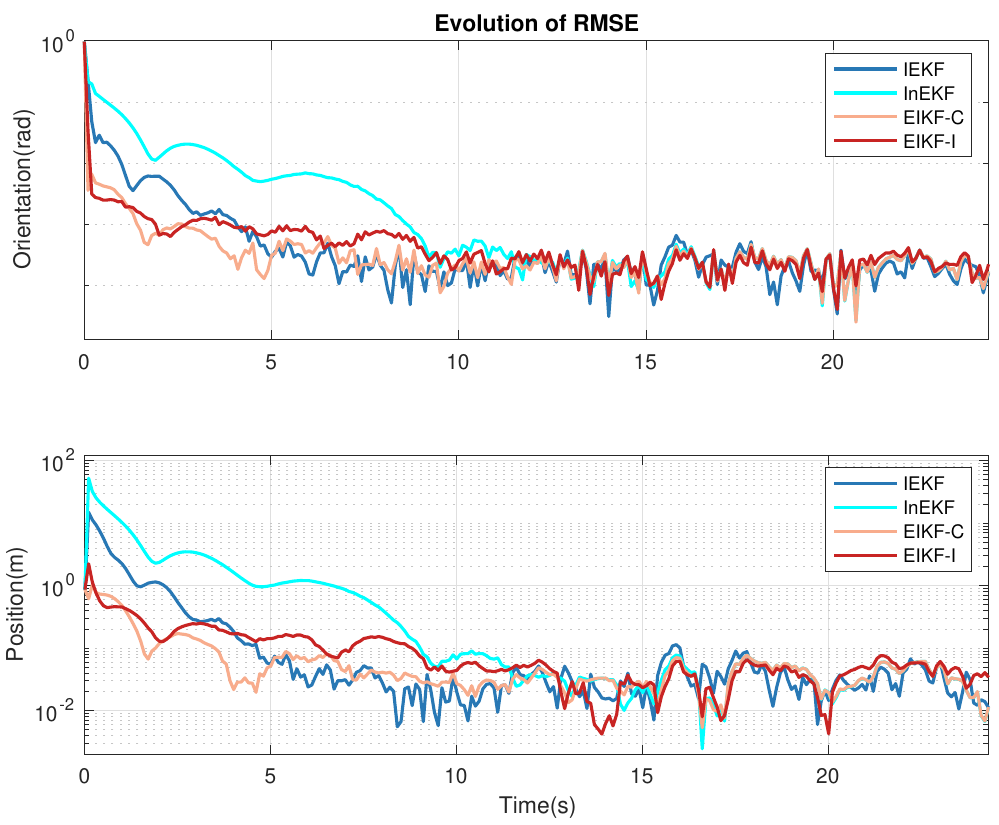}
		\caption{The estimated trajectory and RMSEs of position and orientation by IEKF, InEKF, EIKF-C and EIKF-I for LIO.}
		\label{lio_path_noise}
	\end{figure}
	
	\begin{figure}[htbp]
		\centering
		\includegraphics[width=0.45\textwidth]{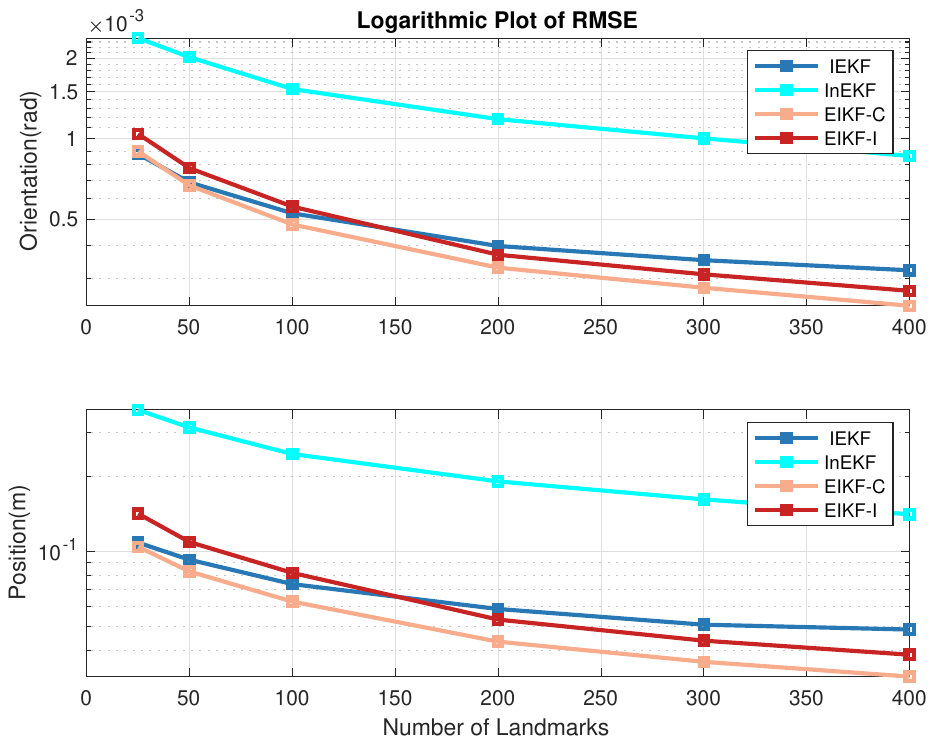}
		
		\caption{RMSEs for orientation and position versus different numbers of landmarks, are compared among IEKF, InEKF, EIKF-C, and EIKF-I.}
	\end{figure}
	
	\begin{figure}[htbp]
		\centering
		\includegraphics[width=0.45\textwidth]{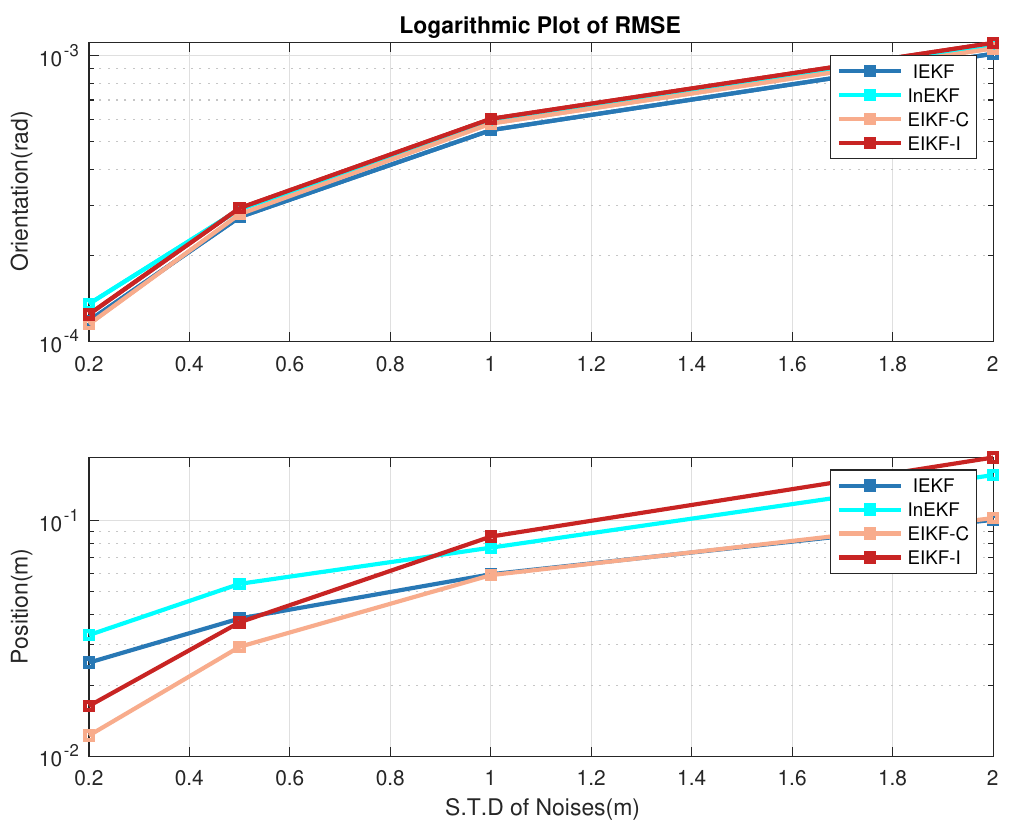}
		
		\caption{RMSEs for orientation and position versus different levels of noises are compared among IEKF, InEKF, EIKF-C, and EIKF-I.}
	\end{figure}
	
	\begin{figure}[htbp]
		\centering
		\includegraphics[width=0.45\textwidth]{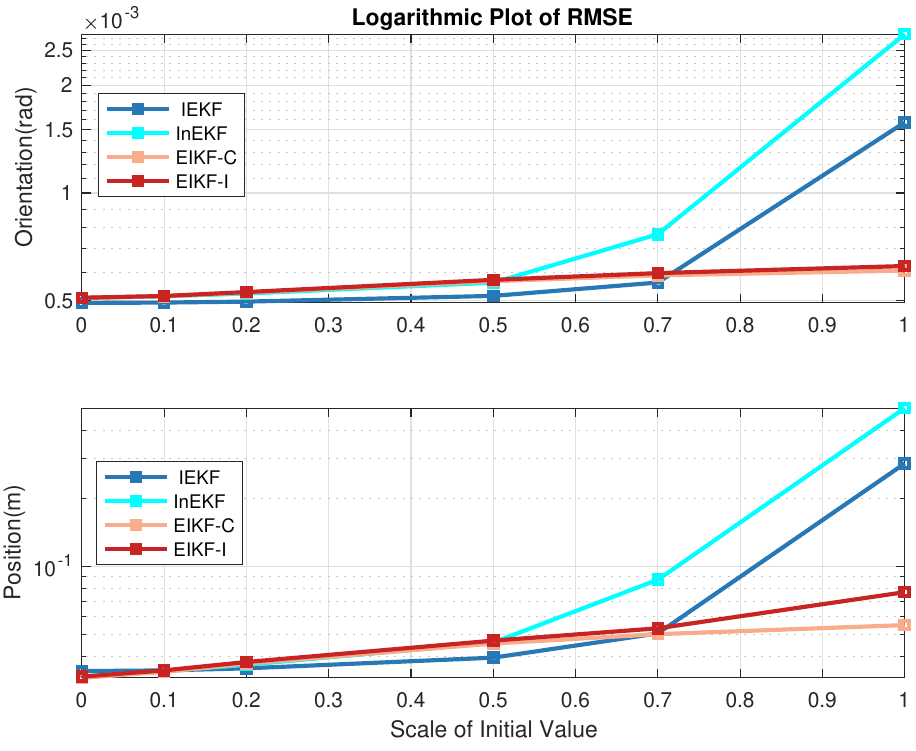}
		\caption{RMSEs for orientation and position versus different scales of initial deviations are compared among IEKF, InEKF, EIKF-C, and EIKF-I.}
	\end{figure}

	In summary of the simulation part, these results clearly demonstrated the superiority of the EIKFs method, particularly in terms of the RMSE evaluation and robustness in the non-linear measurement problem.  Specifically, \begin{inparaenum}
		\item from the initial deviation experiments, the class of InEKF, including InEKF, EIKF-I and EIKF-C, worked better than the one of EKF and the iterated scheme improved the accuracy, proving the accuracy and robustness of EIKF.
		\item when meeting the smaller measurement noises and LEM, EIKF-C worked better than EIKF-I, proving the asymptotic efficiency of EIKF.
		\item  EIKF-C had stable performance without considering iteration due to theoretical guarantee as we analyzed in Section~\ref{sec:theoretical analysis}.
	\end{inparaenum}    Hence, EIKF-C leveraged its asymptotic optimum to handle the non-linearities effectively without relying on linearization, confirming its superior performance in pose estimation tasks.
	
	% The estimated trajectory and RMSE of position and orientation through EKF, InEKF and EIKF for LIO with small LiDAR measurement noise $\sigma_{lid}=0.2$. From left to right: \textbf{The first column}: no initial position and attitude deviations. Three filters perform similarly. \textbf{The second column}: large initial position deviations $[15 \text{m}, 10 \text{m}, 5 \text{m}]$ and attitude deviations (roll $15^\circ $, pitch $15^\circ $, yaw $15^\circ $) with LiDAR measurement noise $\sigma_{lid}=0.2$. From the enlarged RMSE curve, it can be observed that as time increases, both InEKF and EIKF outperform EKF.}

\subsection{Experiment: Extensive evaluation in public datasets}
This experiment was designed to evaluate the performance of EIKF in some selected filtering algorithms in different inertial-based navigation system, as depicted in Figure~\ref{fig:exp}. Our main goals consisted of assessing the precision and speed of execution. For accuracy evaluation, we employed the following metric to compare estimated poses with ground truth poses: the RMSE of the Absolute Transition Error (ATE) \footnote{The ATE is determined by calculating the difference between the estimated trajectory and ground truth after alignment. More details are available in~\cite{Zhang2018ATO}}. We compared our method with other state-of-the-art techniques such as EKF, IEKF, and InEKF. To test the filterings, we utilized the OpenVINS~\cite{Geneva2020OpenVINSAR} framework for the VIO module, Fast-LIO~\cite{xu2022fast} for the LIO module, and R3LIVE~\cite{lin2021r} for the LVIO module.

\begin{figure}[htbp]
	\centering
	\includegraphics[width=0.5\textwidth]{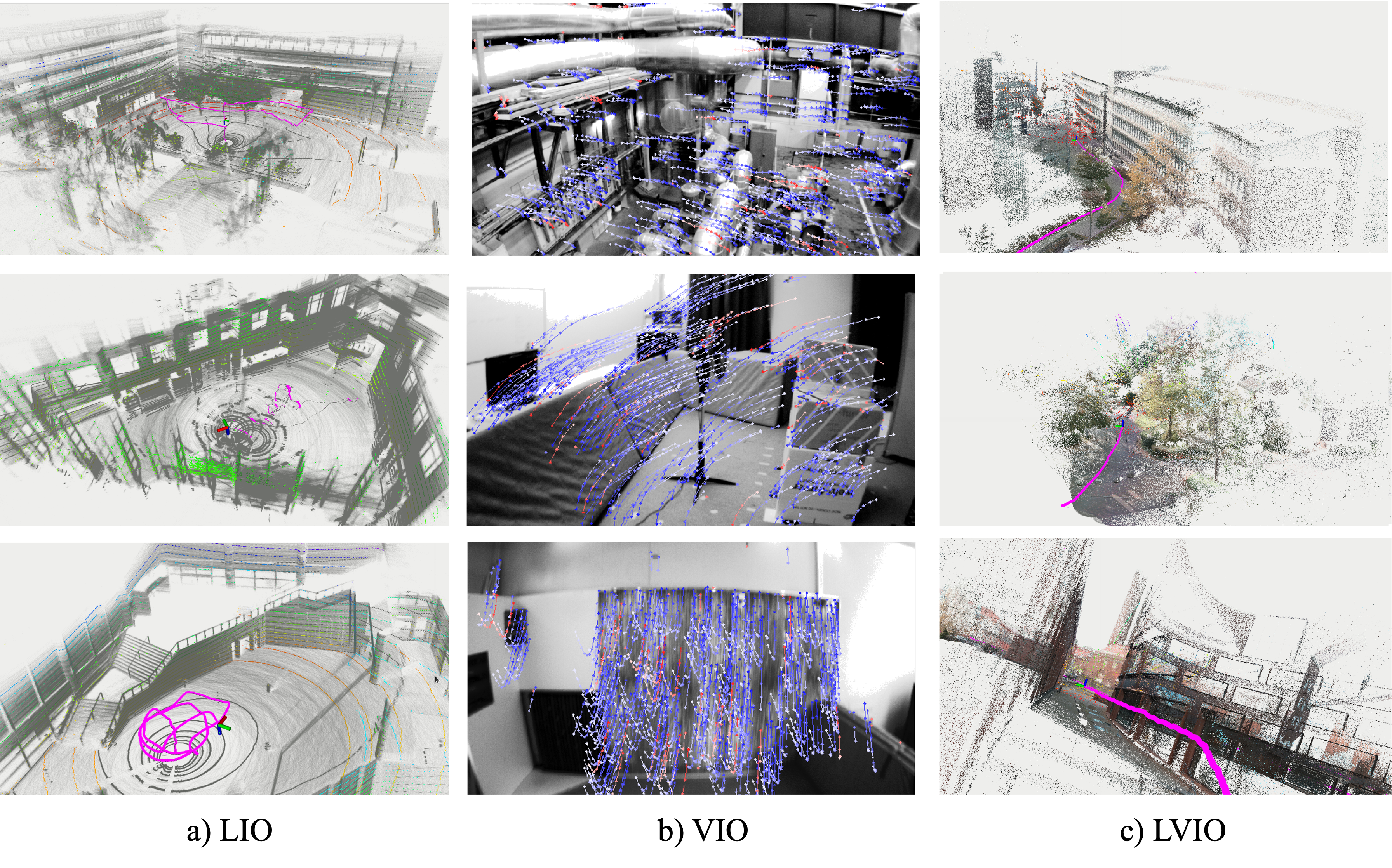}
	\centering
	\caption{Selected Inertial-based odometry. The left column represents LIO with the EIKF implementation, the middle column is VIO, and the right column represents LVIO. In these experiments, we use a $\sqrt{n}$-consistent pose estimation as the initial value for the update step of the EIKF in LEM scenarios.  }\label{fig:exp}
\end{figure}

\begin{table} 
	\centering
	\caption{ATE Analysis in VIO Evaluation}
	\begin{threeparttable}
		\begin{tabular}{c c c c c }
			\hline
			\hline
			&  ($^\circ$) / ($m$) &   EKF           & InEKF              & EIKF           \\
			\hline
			& \textbf{MH\_01\_easy} & 2.498 / 0.168 &  2.049 / 0.134& \textbf{1.785} / \textbf{0.116}\\
			& \textbf{MH\_02\_easy} & 0.749 / \textbf{0.104}  & 0.717 / 0.110 & \textbf{0.606} / 0.118\\
			& \textbf{MH\_03\_medium} &1.498 / 0.154  & 1.327 / 0.162 & \textbf{1.199} / \textbf{0.095}\\
			& \textbf{MH\_04\_difficult}&  0.961 / 0.193 & \textbf{0.957} / 0.175  &1.128 / \textbf{0.160}\\
			& \textbf{MH\_05\_difficult} & 1.380 / 0.294 &0.976 / 0.297  &   \textbf{0.770} / \textbf{0.235}  \\
			& \textbf{V1\_01\_easy}  & 0.829 / 0.060 & \textbf{0.814} / 0.066 & 0.832 / \textbf{0.048}\\
			& \textbf{V1\_02\_medium} & 1.982 / 0.079& \textbf{1.928} / 0.075 & 1.996 / \textbf{0.063}\\
			& \textbf{V1\_03\_difficult}&  2.978 / 0.066& 2.564 / 0.061 & \textbf{2.304} / \textbf{0.051}\\
			& \textbf{V2\_01\_easy} &\textbf{1.416} / 0.076& 1.529 / 0.077 & 1.465 / \textbf{0.057} \\
			& \textbf{V2\_02\_medium}& 1.338 / 0.087 &1.186 / 0.080 & \textbf{0.946} / \textbf{0.072}\\
			& \textbf{V2\_03\_difficult}& 1.621 / 0.100 & 1.526 / 0.101 & \textbf{0.939} / \textbf{0.092}\\ 
			\hline\hline
		\end{tabular}\label{table:exp_VIO}
		
	\end{threeparttable}
\end{table}

\begin{table}[]
	\centering 
	\caption{Selected Datasets in LIO and LVIO esperiment}
	\begin{tabular}{ c c c c}
		\hline
		
		Name & Dataset\tablefootnote{We performed the evaluation on the NTU VIRAL dataset using the specialized MATLAB script provided by the dataset authors~\cite{nguyen2022ntu}. For assessing other datasets, we utilized the EVO toolkit, accessible at \href{https://github.com/MichaelGrupp/evo}{https://github.com/MichaelGrupp/evo}.} & \makecell{duration\\($sec$)} &\makecell{Distance\\($m$)} \\ 
		\hline
		ulhk1 & HK-2019-04-26-1 & 108.08 & 396.28  \\  
		ulhk2 & HK-2019-04-26-2  & 87.14 & 248.00 \\
		ulhk3 & HK-2019-03-16-1  & 86.44&118.90   \\
		ulca1 &  \makecell{CAGoldenBridge\\20190828191451} & 240  &4452.31  
		\\
		ulca2 & \makecell{CALombardStreet\\20190828190411 }&  253 &1499.69 
		\\
		eee 01 & NTU VIRAL &398.7&237.07\\
		eee 02 & NTU VIRAL&321.1&171.07  \\
		eee 03 & NTU VIRAL&181.4&127.83  \\
		nya 01 & NTU VIRAL&396.3&160.30  \\
		nya 02 & NTU VIRAL&428.7&249.10  \\
		nya 03 & NTU VIRAL&411.2&315.46 \\
		sbs 01 & NTU VIRAL&354.2&202.30  \\
		sbs 02 & NTU VIRAL&373.3&183.57 \\
		sbs 03 & NTU VIRAL&389.3&198.54 \\
	       tuhh\_02 &   MCD zzz\_day\_02 & 500   & 749  \\
		tuhh\_04 & MCD zzz\_day\_02&   188   &297\\
            kth\_10 & MCD yyy\_day\_10&  615 &918.5\\
		% V1\_01\_easy & Euroc Mav &&\\
		% V1\_02\_medium & Euroc Mav &&\\
		% V1\_03\_difficult & Euroc Mav &&\\
		% V2\_01\_easy & Euroc Mav &&\\
		% V2\_02\_medium & Euroc Mav &&\\
		% V2\_03\_difficult & Euroc Mav &&\\
		% MH\_01\_easy & Euroc Mav &&\\
		% MH\_02\_easy & Euroc Mav &&\\
		% MH\_03\_medium & Euroc Mav &&\\
		% MH\_04\_difficult & Euroc Mav &&\\
		% MH\_05\_difficult & Euroc Mav &&\\
		\hline
	\end{tabular}
\end{table}

\begin{table} 
	\centering
	\caption{ATE Analysis in LIO Evaluation}
	\begin{threeparttable}
		\begin{tabular}{c c c c c}
			\hline \hline
			% &  & &LIO &&   \\
			% \hline
			
			Dataset&   EKF &Iterated  EKF&InEKF & EIKF(Ours)\\
			% &\vline&EKF &Iterated  EKF&InEKF & EIKF(Ours)     \\
			\hline
			% Dataset&   EKF &Iterated  EKF&InEKF & Iterated InEKF & EIKF
			eee01        &     0.40      &  0.21    & 0.55  &\textbf{0.21}  \\
			eee02         &      0.42       &  0.27     &   0.13  & \textbf{0.12}  \\
			eee03         &      0.30      & 0.21       &  \textbf{0.14} &  0.16   \\
			nya01        &         0.16    & 0.14           & 0.32  &  \textbf{0.11}    \\
			nya02        &       0.26      &\textbf{0.21}      &0.24&\textbf{0.21}\\
			nya03       &0.13&0.13&0.32&\textbf{0.12}                                                  \\
			sbs01   &    0.22 &0.18         &  \textbf{0.09}        & 0.11               \\
			sbs02&0.19&0.16&\textbf{0.10}&0.14\\
			sbs03&0.15&0.14&\textbf{0.10}&\textbf{0.10}\\
			ulhk1&1.89&0.92&1.45&\textbf{0.60}\\
			ulhk2&0.51 & 0.50& 0.50 & \textbf{0.49}\\
			ulhk3&1.37&1.22&0.93&\textbf{0.76}\\
			ulca1& diverge& 0.26 & 0.20 & \textbf{0.15}  \\
			ulca2\tablefootnote{To assess the performance of ULCA2, we acknowledge that the ground truth for the $z$-axis is not accurate. Therefore, we also provide an analysis of the errors in the $xy$-axis. }&  diverge& 1.17 (0.69)&1.64 (0.72)&\textbf{1.12} (\textbf{0.67} ) \\
			
			% 1.51&2.65&1.11&\textbf{0.93} & 2.31   \\
			\hline\hline
		\end{tabular}\label{table:exp_LIO_LVIO}
		%The evaluation results are presented in terms of accuracy assessment using the Absolute Pose Error (APE) per 100 meters metric, aligning with the evaluation methodology outlined in the work of Zheng et al.~\cite{zheng2022fast}.
	\end{threeparttable}
\end{table}

% \begin{table*} \label{ATE_ov}
	% \centering
	% \caption{ATE openvins}
	% \begin{threeparttable}
		% \begin{tabular}{c c c c c c c c c c c c c }
			% \hline
			%     \hline
			% & \textbf{MH\_01\_easy} & \textbf{MH\_02\_easy} & \textbf{MH\_03\_medium} & \textbf{MH\_04\_difficult} & \textbf{MH\_05\_difficult} & \textbf{V1\_01\_easy}  & \textbf{V1\_02\_medium} & \textbf{V1\_03\_difficult} & \textbf{V2\_01\_easy} & \textbf{V2\_02\_medium} & \textbf{V2\_03\_difficult} & \textbf{Average} \\\hline
			% EIKF & 104.887 / 22514.159 & 139.708 / 2386.924 & 5.805 / 76.429 & 1.215 / 0.208 & 1.101 / 0.285 & 0.885 / 0.068  & 1.928 / 0.075 & 2.285 / 0.064 & 104.950 / 374.764 & 1.186 / 0.080 & 1.526 / 0.101 & 33.225 / 2304.832 \\
			% EKF & 159.399 / 70.565 & 117.674 / 53.484 & 86.472 / 42.557 & 1.086 / 0.212 & 1.401 / 0.298 & 0.829 / 0.060 & 1.982 / 0.079 & 2.978 / 0.066 & 1.416 / 0.076 & 1.338 / 0.087 & 1.621 / 0.100 & 34.200 / 15.235 \\
			% InEKF & 119.111 / 9397.770 & 91.666 / 2546.365 & 5.972 / 76.405 & 1.215 / 0.208 & 1.101 / 0.285 & 0.814 / 0.066 & 1.928 / 0.075 & 2.564 / 0.061 & 1.529 / 0.077 & 1.186 / 0.080 & 1.526 / 0.101 & 20.783 / 1092.863 \\
			%  \hline\hline
			% \end{tabular}
		
		% \end{threeparttable}
	% \end{table*}

\subsubsection{VIO}
We conducted a comparative evaluation of the EIKF algorithm within the OpenVINS framework, comparing it to the EKF and InEKF methods. OpenVINS is based on the Multi-State Constraint Kalman Filter, an EKF approach designed for VIO pose estimates. We customized the framework to integrate the EIKF algorithm and improve feature point tracking. In particular, InEKF performance was improved to the extent that the iterated scheme was no longer required. We used the EuRoC MAV dataset as our evaluation dataset.

The experimental results, presented in Table~\ref{table:exp_VIO}, clearly showed that EIKF outperformed the other methods in terms of accuracy, when tracking a large number of feature points.

\subsubsection{LIO}
We used two diverse datasets, namely the NTU VIRAL dataset~\cite{nguyen2022ntu} and the Urbanloco dataset~\cite{Wen2019UrbanLocoAF}, to assess the performance of the LIO module with the EIKF method. These datasets allowed us to evaluate the accuracy and robustness of EIKF in a variety of real-world scenarios, spanning indoor and outdoor environments, dynamic and static environments, and presenting a comprehensive set of challenges. This facilitated a thorough evaluation of the LIO module's performance and enabled a comparison with other state-of-the-art methods.

The experimental results, as summarized in Table~\ref{table:exp_LIO_LVIO}, highlighted the superior accuracy of the EIKF method compared to other filtering approaches, such as IEKF in Fast-LIO. 
\subsubsection{LVIO}
We evaluated the filter performance in datasets\cite{mcdviral2023}. The LVIO architecture involved LIO and VIO. The LIO utilized LiDAR measurements to reconstruct geometric structures and estimated poses using the point-to-plane ICP method. On the other hand, VIO improved pose estimates by incorporating the PnP method, which relied on point clouds to provide features.

The results presented in Table~\ref{table:exp_LVIO} provided valuable insights into the accuracy of the EIKF method within the VIO module, particularly when handling LEM scenarios. These findings demonstrated the potential of the EIKF method to improve the accuracy of pose estimates in LVIO scenarios.
\begin{table}[]
	\centering
	\caption{Absolute Pose Error w.r.t. Translation Part($m$) Analysis in LVIO Evaluation}
	\begin{tabular}{ccccc}
		\hline \hline
		&   \multicolumn{1}{l}{EKF} &   \multicolumn{1}{l}{IEKF}        & \multicolumn{1}{l}{InEKF}       & \multicolumn{1}{l}{EIKF} \\ \hline
		
		tuhh\_02           &2.1  &0.29                                 &0.34                                      &\textbf{0.23 }           \\ 
		tuhh\_04          &   0.67                            & 0.48                                     & 38.5 & \textbf{0.41 }         \\ 
		kth\_10       &1.27             &  1.41               & 1.22                                      & \textbf{1.02 }              \\ \hline\hline
	\end{tabular}\label{table:exp_LVIO}
\end{table}

\section{Conclusion}\label{sec:conclusion}
EIKF presents an appealing approach to achieve optimality in the sense of MMSE in the LEM scenarios. We propose three key contributions. Firstly, we introduce an iterative framework for filtering with adaptive initialization. Secondly, we address a $\sqrt{n}$-consistent pose of the PnP and general ICP problem, utilizing it as the initial value in the GN update step. The solution is closed-form and has $O(n)$ computational complexity. Finally, we demonstrate that the filter can attain optimality in the estimation from the a priori and MLE in LEM. This superior performance holds significant practical implications, potentially benefiting VIO, LIO and LVIO with dense samples of the measurements in the field. 
Furthermore, our findings open the door to future research avenues, where the theory-guaranteed number of numerous environmental measurements and a better initial solution can be further explored and optimized for various domains.

\appendices 

			{
				\section{The Gauss-Newton method as a update step of iterated InEKF}\label{apx:Kalman Filter gain}
				This section introduces how to incorporate the LGN method into Kalman filtering. 

				The increment $\tilde{\delta}_{l,l+1}$, defined as $\mu^{(l+1)}=\exp(\tilde{\delta}_{l,l+1})\mu^{(l)}$, from the $l$th iterated value to the $l+1$ iterated value is obtained in the following form by the LGN method:
				\begin{align*}
					\tilde{\delta}_{l,l+1}&=\bs{F}_{\mu^{(l)}}\left(\bs{H}_{\mu^{(l)}}^\top \Sigma^{-1}r(\bs{T}^{(l)})-\bs{J}_{\mu^{(l)}}^\top \bs{P}^{-1}\log{(\mu^{(l)}\bar{\bs{X}}^{-1})}\right),
				\end{align*}  
                where $				\bs{F}_{\mu^{(l)}}\triangleq\left(\bs{J}_{\mu^{(l)}}^\top \bs{P}^{-1}\bs{J}_{\mu^{(l)}}+\bs{H}_{\mu^{(l)}}^\top \Sigma^{-1}\bs{H}_{\mu^{(l)}} \right)^{-1}$.
				
				Recall that $\delta^{(l)}\triangleq\log{(\mu^{(l)}\bar{\bs{X}}^{-1})}$.
				Under the assumption that $\tilde{\delta}_{l,l+1}$ is small in Lemma~\ref{lemma: compound of two matrix exponentials}, we have that
				\begin{equation*}						\delta^{(l+1)}=\bs{J}_{\mu^{(l)}}\tilde{\delta}_{l,l+1}+\delta^{(l)}.				\end{equation*}
				Using the identity $ \delta^{(l)}\equiv\bs{J}_{\mu^{(l)}}\delta^{(l)}\equiv\bs{J}^{-1}_{\mu^{(l)}}\delta^{(l)}$, we have the following 
				\begin{align*}
						\delta^{(l+1)}
						=\bs{J}_{\mu^{(l)}} \bs{F}_{\mu^{(l)}}\left(\bs{H}_{\bs{T}^{(l)}}^\top \Sigma^{-1}\left( r(\mu^{(l)})+\bs{H}_{\mu^{(l)}} \delta^{(l)}\right) \right). 
				\end{align*}
				
				Regarding the form of the Kalman filter, the estimate $\mu^{(l+1)}$ can be written in the following form:
				\begin{align*}
	\mu^{(l+1)}=\exp{(\bs{K}^{(l+1)}( r(\bs{T}^{(l)})+ \bs{H}_{\mu^{(l)}}\delta^{(l)}))}\bar{\bs{X}},
				\end{align*}
				where
$\bs{K}^{(l+1)}=\bs{J}_{\mu^{(l)}} \bs{F}_{\mu^{(l)}}\bs{H}_{\mu^{(l)}}^\top \Sigma^{-1}$.
				We also derive the gain $\bs{K}$ in the following form using the matrix inversion lemma:
				\begin{equation}
\bs{K}^{(l+1)}=\bs{P}\bs{J}^{-1\top}_{\mu^{(l)}}\bs{H}_{\mu^{(l)}}^\top \left(\bs{H}_{\mu^{(l)}}\bs{J}^{-1}_{\mu^{(l)}}\bs{P}\bs{J}^{-1\top}_{\mu^{(l)}}\bs{H}_{\mu^{(l)}}^\top +\Sigma\right)^{-1}. 				
      \end{equation}
			}
			{\section{Derivation of the Jacobians for the camera measurement and the LiDAR measurement}~\label{apx:jacobians_camera_LIDAR}
				
				The Jacobian matrix $\bs{H}_{\hat{\bs{X}},C}$ of one feature in camera~\eqref{eqn:camera_model} is:
				\begin{align}\label{eqn:camera jacobian matrix}
					\bs{H}_{\hat{\bs{X}},C}&=\frac{\partial h_p}{\partial ^C\bs{p}_f}\begin{bmatrix}
						\frac{\partial h_t}{\partial \hat{\bs{\theta}}} &  \frac{\partial h_t}{\partial \hat{\bs{p}}}&\bs{0}_{2 \times 3}
					\end{bmatrix},\nonumber\\
					\frac{\partial h_p}{\partial ^C\bs{p}_f}&=\frac{1}{^{C}{p}_{z}}\left[\begin{array}{ccc}
						f_x & 0 & -f_x\frac{^{C}p_{x}}{^{C}p_{z}},\nonumber \\
						0 & f_y & -f_y \frac{^{C}p_{y}}{^{C}p_{z}}
					\end{array}\right],\\
					\frac{\partial h_t}{\partial \hat{\bs{\theta}}} &= ^I{\bs{R}}^\top_C   \hat{\bs{R}}^{\top}  ({^{G}\bs{p}_{f}})^{\wedge},\quad \frac{\partial h_t}{\partial \hat{\bs{p}}}=-^I{\bs{R}}^\top_C   \hat{\bs{R}}^{\top}.
				\end{align}
				
				The LiDAR measurement model writes
				\begin{align*}
					\bs{z}_{L}=h_{L}(^G\bs{p}_f)+\bs{n}_{L}
					=u^{\top} \left( h_{t,L}(^L\bs{p}_f) -\bs{q}_j     \right)+\bs{n}_{L}
				\end{align*}
				where $h_{t,L}$ is the mapping that transforms the position of a measured point in $\left\lbrace L \right\rbrace $ into $\left\lbrace W \right\rbrace $. 
				The Jacobian of $\bs{H}_{\hat{\bs{X}},L}$ can be rewritten as:
				\begin{align}\label{eqn:LiDAR measurement equation}
					\bs{H}_{\hat{\bs{X}},L}&=u^{\top}\begin{bmatrix}
						\frac{\partial h_{t,L}}{\partial \hat{\bs{\theta}}}& \frac{\partial h_{t,L}}{\partial \hat{\bs{p}}}&\bs{0}_{3\times3}
					\end{bmatrix},\nonumber\\
					\frac{\partial h_{t,L}}{\partial \hat{\bs{\theta}}}&= -\left(\hat{\bs{R}} ^I\bs{R}_{L} ^L \bs{p}_{f} \right)^{\wedge}-  \left(\hat{\bs{R}} ^I \bs{p}_{L}\right)^{\wedge} -\bs{p}^{\wedge},\nonumber\\
					\frac{\partial h_{t,L}}{\partial \hat{\bs{p}}}&= \bs{I}_3. 
				\end{align}
			}
			{\section{$\sqrt{n}$-Consistency analysis of the solutions in camera and LiDAR}~\label{apx:consistent_solution_analysis}
   \subsection{Consistency analysis}
				Before presenting the detailed explanation of $\sqrt{n}$-consistency, we give the following lemma.
 				\begin{lemma}[{\cite[Lemma 4]{zeng2022global}}]
					\label{lemma_noise_aver} Let $\left\lbrace {X}_k \right\rbrace $ be a sequence of independent random variables with $\mathbb{E}\left[{X}_k\right]=0$ and $\mathbb{E}\left[{X}_k^2\right]\leq\phi<\infty$ for any $k$. Then, there is $\sum_{k=1}^{n}{X}_k/\sqrt{n}=O_p(1)$.
				\end{lemma}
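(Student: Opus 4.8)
The plan is to deduce the claim directly from a second-moment estimate followed by Markov's inequality, and then to match the resulting tail bound to the definition of the $O_p$ notation recalled in Section~\ref{sec:math_preliminary}. First I would set $S_n \triangleq \sum_{k=1}^{n} X_k$. Since the $X_k$ are independent and each has zero mean, every cross term satisfies $\mathbb{E}[X_j X_k] = \mathbb{E}[X_j]\mathbb{E}[X_k] = 0$ for $j \neq k$, so
\begin{equation*}
\mathbb{E}[S_n^2] = \sum_{k=1}^{n} \mathbb{E}[X_k^2] \leq n\phi ,
\end{equation*}
and hence $\mathbb{E}\big[(S_n/\sqrt{n})^2\big] \leq \phi$ uniformly in $n$.

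Next I would apply Markov's inequality to the non-negative random variable $(S_n/\sqrt{n})^2$: for every $M > 0$,
\begin{equation*}
P\!\left( \left| S_n/\sqrt{n} \right| > M \right) = P\!\left( (S_n/\sqrt{n})^2 > M^2 \right) \leq \frac{\mathbb{E}[(S_n/\sqrt{n})^2]}{M^2} \leq \frac{\phi}{M^2}.
\end{equation*}
Given any $\epsilon > 0$, choosing $M \triangleq \sqrt{2\phi/\epsilon}$ — a finite constant not depending on $n$ — yields $P(|S_n/\sqrt{n}| > M) \leq \phi/M^2 = \epsilon/2 < \epsilon$ for all $n$, so in particular the defining condition of $O_p$ holds with this $M$ and with $N$ taken to be $1$. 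By the definition of the $O_p$ notation, this is precisely the assertion $\sum_{k=1}^{n} X_k/\sqrt{n} = O_p(1)$.

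I do not expect a genuine obstacle here; the only point worth stating carefully is that the $O_p(1)$ property requires a tail bound that is \emph{uniform} in $n$, and the argument above supplies exactly that, because the constant $\phi$ bounding $\mathbb{E}[X_k^2]$ is independent of $k$ and therefore of $n$. If the second moments were merely finite without a common bound, one would instead need a truncation argument to control the sum, but that situation does not arise under the stated hypotheses.
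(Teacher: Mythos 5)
Your proof is correct and complete: the independence and zero-mean assumptions give $\mathbb{E}[S_n^2]\leq n\phi$, and Markov's (Chebyshev's) inequality then yields a tail bound uniform in $n$, which is exactly the $O_p(1)$ condition as defined in Section~\ref{sec:math_preliminary}. Note that the paper itself offers no proof of this lemma — it is simply quoted from \cite[Lemma 4]{zeng2022global} — and your second-moment argument is the standard (and essentially the only natural) way to establish it, so there is nothing to reconcile with the paper's treatment.
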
 
				
				For the camera case, note that $\bs{A}_{C}$ is defined in~\eqref{eqn:linear equation camera}. Here we define the noise-free counterpart of $\bs{A}_{C}$ as
				$$\bs{A}_{C,o}=\begin{bmatrix}
					-u_{1,o}(\bs{p}_{f_1}-\bar{\bs{p}}_f)^\top &f_x\bs{p}^\top_{f_1}&f_x&\bs{0}_{1\times4}\\
					-v_{1,o}(\bs{p}_{f_1}-\bar{\bs{p}}_f)^\top &\bs{0}_{1\times4}&f_y\bs{p}^\top_{f_1}&f_y\\
					\vdots\\
					-u_{n,o}(\bs{p}_{f_n}-\bar{\bs{p}}_f)^\top &f_x\bs{p}^\top_{f_n}&f_x&\bs{0}_{1\times4}\\
					-v_{n,o}(\bs{p}_{f_n}-\bar{\bs{p}}_f)^\top &\bs{0}_{1\times4}&f_y\bs{p}^\top_{f_n}&f_y
				\end{bmatrix}, $$
				where $u_{i,o},v_{i,o}$ are the projections of features in the image without noise. It has been proven in~\cite{zeng2022cpnp} that the LS solution $(\bs{A}_{C,o}^\top \bs{A}_{C,o})^{-1}\bs{A}_{C,o}^\top \bs{b}$ is $\sqrt{n}$-consistent. However, $\bs{A}_{C,o}$ is the noise-free counterpart and is not available in practice.
				Therefore, the main idea is to analyze the gap between $\bs{A}_{C}^\top\bs{A}_{C}/n$ and $\bs{A}_{C,o}^\top\bs{A}_{C,o}/n$ and between $\bs{A}_{C}^\top\bs{b}/n$ and $\bs{A}_{C,o}^\top\bs{b}/n$. By eliminating the gaps, we can obtain a $\sqrt{n}$-consistent estimate.
				
				To eliminate these gaps, let $\Delta_{\bs{A}_C}\triangleq\bs{A}_{C}-\bs{A}_{C,o}$. From Lemma~\ref{lemma_noise_aver}, we have 
				\begin{align*}
					\frac{\bs{A}^\top_C\bs{A}_C}{n}&=\frac{\bs{A}^\top_{C,o}\bs{A}_{C,o}}{n}+\frac{\bs{A}^\top_{C,o}\Delta_{\bs{A}_C}+\Delta_{\bs{A}_C}^\top\bs{A}_{C,o}+\Delta_{\bs{A}_C}^\top\Delta_{\bs{A}_C}}{n}\\
					&=\frac{\bs{A}^\top_{C,o}\bs{A}_{C,o}}{n}+\sigma^2_{C}\frac{\bs{G}_C^\top\bs{G}_C}{n}+O_p(\frac{1}{\sqrt{n}}),
				\end{align*}
				and 
				\begin{align*}
					\frac{\bs{A}_C^\top\bs{b}}{n}=\frac{\bs{A}_{C,o}^\top\bs{b}}{n}+\sigma^2_C\frac{\bs{G}_C^\top}{n}\bs{1}_{2n\times1}+O_p(\frac{1}{\sqrt{n}}),
				\end{align*}
				where $\bs{G}_C$ is defined in~\eqref{eqn:coefficient_of_noise}. Therefore, we obtain
				\begin{align*}
					\vec{\bs{x}}_C & =(\bs{A}^\top_C\bs{A}_C-\hat{\sigma}^2_{C}\bs{G}^\top_C\bs{G}_C)^{-1}(\bs{A}^\top_C\bs{b}-\hat{\sigma}^2_{C}\bs{G}^\top_C\bs{1}_{2n\times1}) \\
					& = \left(\frac{\bs{A}_{C,o}^\top \bs{A}_{C,o}}{n}+O_p(\frac{1}{\sqrt{n}})\right)^{-1} \left( \frac{\bs{A}_{C,o}^\top \bs{b}}{n}+O_p(\frac{1}{\sqrt{n}})\right) \\
					& = (\bs{A}_{C,o}^\top \bs{A}_{C,o})^{-1}\bs{A}_{C,o}^\top \bs{b} + O_p(\frac{1}{\sqrt{n}}) \\
					& = \vec{\bs{x}}_{C,o}+ O_p(\frac{1}{\sqrt{n}}),
				\end{align*}
				where $\vec{\bs{x}}_{C,o}$ is the true value of $\vec{\bs{x}}_{C}$.
				
				Now we consider the LiDAR case. First, we write the noise-free counterpart of ${\bm A}_{L}$ as
				\begin{align*}
					{\bm A}_{L,o}=\begin{bmatrix}
						\bs{z}_{L_1}^{o \top} \otimes \bm {u}_1^\top ~~ \bm {u}_1^\top \\
						~~~~~\vdots ~~~~~~~~ \vdots \\
						\bs{z}_{L_n}^{o \top} \otimes \bm {u}_n^\top ~~ \bm {u}_n^\top
					\end{bmatrix}.
				\end{align*}
				Similar to the camera case, we are standing on the point to eliminate the bias between $({\bm A}_{L}^\top {\bm A}_{L})^{-1} {\bm A}_{L}^\top {\bm b}$ and $({\bm A}_{L,o}^\top {\bm A}_{L,o})^{-1} {\bm A}_{L,o}^{\top} {\bm b}$. Recall that $\bar {\bm Q}$ is defined in~\eqref{Q_bar}.
				From Lemma~\ref{lemma_noise_aver}, we have
				\begin{align*}
					\frac{{\bm A}_{L}^\top {\bm A}_{L}}{n} & = \frac{{\bm A}_{L,o}^{ \top} {\bm A}_{L,o}}{n} +  \sigma_L^2 \bar {\bm Q} + O_p (\frac{1}{\sqrt{n}} ) \\
					\frac{{\bm A}_{L}^\top {\bm b}}{n} & = \frac{{\bm A}_{L,o}^{ \top} {\bm b}}{n} + O_p (\frac{1}{\sqrt{n}} ).
				\end{align*}
				Therefore, we obtain
				\begin{align*}
					{\bs{x}}_{L} & =\left(\frac{{\bm A}_{L}^\top {\bm A}_{L}}{n}- \hat \sigma_{L}^2 \bar {\bm Q}\right)^{-1} \frac{{\bm A}_{L}^\top {\bm b}}{n} \\
					& = \left(\frac{{\bm A}_{L,o}^\top {\bm A}_{L,o}}{n}+O_p (\frac{1}{\sqrt{n}} )\right)^{-1} \left( \frac{{\bm A}_{L,o}^\top {\bm b}}{n} +O_p (\frac{1}{\sqrt{n}} )\right) \\
					& = (\bs{A}_{L,o}^\top \bs{A}_{L,o})^{-1}\bs{A}_{L,o}^\top \bs{b} + O_p(\frac{1}{\sqrt{n}}) \\
					& = {\bs{x}}_{L,o}+ O_p(\frac{1}{\sqrt{n}}), 
				\end{align*}
				where ${\bs{x}}_{L,o}$ is the true value of ${\bs{x}}_{L}$.

		\subsection{Proof of Theorem \ref{thm:n-consistent solution camera}	}
         Denote $\bs{T}_{\cons}\in SE(3)$ be the matrix form of a $\sqrt{n}$ consistent pose, projected from the vector form $\hat{\bs{x}}$. We can obtain by the following optimization problem:
					$$
					\bs{T}_{\cons} = \mathop{\arg\min}_{\bs{T}_{\cons}\in S E(3)}\left\| \operatorname{vec}(\begin{bmatrix}
		    \bs{R}^\top&-\bs{R}^\top\bs{p}
		\end{bmatrix})-\hat{\bs{x}}\right\|. 
					$$
					
					For any $n$ and the true state $\bs{x}_o\in\mathbb{R}^{12}$, we can verify that 
					\begin{align*}
						&\left\| \operatorname{vec}(\begin{bmatrix}
		    \bs{R}_{\cons}^\top&-\bs{R}_{\cons}^\top\bs{p}_{\cons}
		\end{bmatrix})-\hat{\bs{x}}\right\|\leq \|\bs{x}_o-\hat{\bs{x}}\|+\\&\left\| \operatorname{vec}(\begin{bmatrix}
		    \bs{R}_{\cons}^\top&-\bs{R}_{\cons}^\top\bs{p}_{\cons}
		\end{bmatrix})-{\bs{x}_o}\right\|\leq 2\|\bs{x}_o-\hat{\bs{x}}\|
					\end{align*}
					where the first inequality holds because $\bs{T}_{\cons}$ is the optimum of the above optimization problem. As $\hat{\bs{x}}$ converges to 
					$\bs{x}_o$ at the rate of 
					$O_p(1/\sqrt{n})$, the proof is complete.
				
				%	\subsection{Estimation of the covariance of the LiDAR}~\label{apx:estimation_of_covariance_of_the^{(l)}idar}
				%The proof is mainly based on the following lemma: 
				%\begin{lemma}[{\cite[Lemma 6]{zeng2023consistent}}] \label{lemma^{(l)}argest_eig}
				%	Let ${\bf Q}$ and ${\bf S}^o$ be two real symmetric matrices and ${\bf S}={\bf S}^o+{\bf Q}$. If ${\bf S}$ is positive-definite and ${\bf S}^o$ is positive-semidefinite, then $\lambda_{\rm max}( {\bf S}^{-1} {\bf Q})=1$.
				%\end{lemma}
				%On the one hand, given that ${\bm u}_j$'s are not coplanar, the matrix ${\bm S}=\bar {\bm A}^{(l)}^\top \bar {\bm A}^{(l)}/n$ are positive-definite with probability one. On the other hand, let $\bar {\bm A}^{(l)}^{o}=[{\bm A}^{(l)}^o~{\bm b}]$ be the noise-free counterpart of $\bar {\bm A}^{(l)}$ and ${\bm S}^o={\bm A}^{(l)}^{o \top} {\bm A}^{(l)}^o/n$. From~\eqref{noisefree^{(l)}idar_matrix_eqn} we know that ${\bm S}^o$ is positive-semidefinite. Moreover, it can be verified that
				%\begin{equation*}
				%	{\bm S} = {\bm S}^o+\sigma^2{\bm Q}+O_p \left(\frac{1}{\sqrt{n}} \right).
				%\end{equation*}
				%By Lemma~\ref{lemma^{(l)}argest_eig}, we have
				%$\lambda_{\rm max}\left( {\bm S}^{-1} \left( \sigma^2{\bm Q}+O_p \left(\frac{1}{\sqrt{n}} \right)\right) \right)=1$, i.e., $\hat \sigma^2=1/\lambda_{\rm max}({\bm S}^{-1}{\bm Q})=\sigma^2+O_p \left(\frac{1}{\sqrt{n}} \right)$.
				
			}
			{\section{Analysis of asymptotic property}\label{apx:theoretical analysis of asymptotic property}
\subsection{Proof of Theorem~\ref{thm:property_of_MLE}}\label{apx:property_of_MLE}
Given any $\bs{Y}$ and such $\tilde{\delta}\in\mathbb{R}^6$ that $\bs{T} =\exp{(\tilde{\delta})}\bs{Y}$, we have the following MLE 
		\begin{align*}
			\tilde{\delta}_{\MLE}=\mathop{\arg\min}_{\tilde{\delta}}\left\|  r_{\bs{Y}}(\tilde{\delta})\right\|_{\Sigma}^2,\quad r_{\bs{Y}}(\tilde{\delta}) \triangleq r(\exp{(\tilde{\delta})}\bs{Y}).
		\end{align*}
		It is well-known fact in \cite{Jennrich1969AsymptoticPO} that under certain regularity conditions, for the true parameter $\tilde{\delta}\in\mathbb{R}^6$ and optimal estimator $\tilde{\delta}_{\MLE}\in\mathbb{R}^6$ w.r.t. the quantity of samples $n$, it holds that $$\epsilon\triangleq \tilde{\delta}_{\MLE}-\tilde{\delta},\quad \epsilon\overset{d}{\longrightarrow}\mathcal{N}(0,\bs{F}_{\tilde{\delta}}^{-1}).$$ 
  The estimated fisher information matrix denoted by $\bs{F}_{\MLE,\tilde{\delta}}$ can be obtained by $$
		\bs{F}_{\MLE,\tilde{\delta}} = \frac{\partial r_{\bs{Y}}(\tilde{\delta})}{\partial \tilde{\delta}}^\top\Sigma^{-1}\frac{\partial r_{\bs{Y}}(\tilde{\delta})}{\partial \tilde{\delta}}.
		$$

		Now, let $\bs{Y}$ be the true parameter, i.e. $\bs{Y}=\bs{T}$ and we have $\tilde{\delta}=0$. Therefore, we can conclude that $\tilde{\delta}_{\MLE}\overset{d}{\longrightarrow}\mathcal{N}(0,\bs{F}_{\MLE}^{-1})$, which completes the proof. 
   
            \subsection{Proof of Theorem~\ref{thm:the_optimum_of_MAP}}\label{prof:proof_of_MAPwithMLE}
The work \cite{Gribonval2011ShouldPL} theoretically has proven that an estimator of MAP is also an MMSE estimator with the assumption of measurement being the white Gaussian and the pdf of a priori being the exponential form. We now try to solve a MAP.
 Using the LGN reparameterization method, we get an equivalent form of MAP~\eqref{eqn:linear MLE}:
	\begin{align}\label{eqn:LLS_of_MAP_using_MLE}	\hat{\delta}&=\mathop{\arg\min}_{\delta}\left\|\delta\right\|^2_{\bar{\bs{P}}}+\left\|\tilde{\delta}_{\MLE}-\dexp_{\tilde{\delta}_{\MLE}} \tilde{\bs{I}}\delta\right\|^2_{\bs{F}_{\MLE}^{-1}},
	\end{align}
where $\delta\triangleq\log({\bs{X}}\bar{\bs{X}}^{-1})$ and the resulting equality $\log(\bs{T}_{\MLE}\bs{T}^{-1})=\tilde{\delta}_{\MLE}-\dexp_{\tilde{\delta}_{\MLE}} \tilde{\bs{I}}\delta$ follows (i) of Lemma~\ref{lemma: compound of two matrix exponentials} as $\log(\bs{T}_{\MLE}\bs{T}^{-1})$ is small.

By the GN method for the above optimization problem, we obtain
\begin{align*}
    \hat{\delta} &=(\bar{\bs{P}}^{-1}+\tilde{\bs{I}}^\top\dexp^\top_{\tilde{\delta}_{\MLE}}\bs{F}_{\MLE}\dexp_{\tilde{\delta}_{\MLE}}\tilde{\bs{I}} )^{-1}\cdot\\
&\tilde{\bs{I}}^\top\dexp^\top_{\tilde{\delta}_{\MLE}}\bs{F}_{\MLE}\tilde{\delta}_{\MLE},
\end{align*}
 and the estimates can be $\hat{\bs{X}}_{\op}=\exp{(\hat{\delta})}\bar{\bs{X}}$.

We are now in a position to consider the posterior distribution and compute the covariance. Let $\hat{\xi}\triangleq\log{(\hat{\bs{X}}_{\op}\bs{X}^{-1})}$. The posterior distribution can be written into the following:
\begin{align}	
&p(\bs{X}_{k+1} |\mathcal{T}_{k+1})\propto p(\bs{X}_{k+1}|\mathcal{T}_{k})p(\bs{T}_{\MLE}|\bs{X}_{k+1})\nonumber\\
&\propto \exp{(\frac{1}{2}(\left\|\bs{J}_{\hat{\bs{X}}_{\op}}\hat{\xi}+\hat{\delta}\right\|^2_{\bar{\bs{P}}}+\left\|\log{(\bs{T}_{\MLE}\hat{\bs{T}}^{-1})}+\tilde{\bs{I}}\hat{\xi}\right\|^2_{\bs{F}_{\MLE}^{-1}}))}\label{eqn:concise_form_of_distribution},
\end{align}
which is the form of the multivariate Gaussian distributions with the random variables being $\hat{\xi}$. Therefore, based on the definition of $\mathcal{N}_{RG}$, we consider $\bs{X}$ to be $\mathcal{N}_{RG}$.  The covariance can be derived by arranging the coefficients of the quadratic terms, i.e., $\mathbb{E}[\hat{\xi}\hat{\xi}^\top]=(\bs{J}^{-1\top}_{\hat{\bs{X}}_{\op}}\bar{\bs{P}}^{-1}\bs{J}^{-1}_{\hat{\bs{X}}_{\op}}+\tilde{\bs{I}}^\top\bs{F}_{\MLE}\tilde{\bs{I}})^{-1}$.
That completes the proof.

            \subsection{Supporting results for the proof of Theorem~\ref{thm:equivalent relation}}\label{prof:proposition_of_mf}
  Before proving the main result, we employ the LGN method with a $\sqrt{n}$ consistent pose as initialization to approximate the ML estimate. Let a single LGN iteration with the $\sqrt{n}$-consistent pose be denoted as $\bs{T}_{\GN}$, which writes:
 \begin{align}
          \bs{T}_{\GN} &= \exp{(\Delta)}\bs{T}_{\cons},\quad \bs{H}_{\bs{T}_{\cons}}\triangleq -\frac{\partial r(\bs{T})}{\partial \bs{T}}|_{\bs{T}_{\cons}}, \nonumber\\ \Delta&\triangleq (\bs{H}_{\bs{T}_{\cons}}^{\top}\Sigma^{-1}\bs{H}_{\bs{T}_{\cons}})^{-1}\bs{H}_{\bs{T}_{\cons}}^{\top}\Sigma^{-1}r(\bs{T}_{\cons}).
 \end{align}
Particularly, when examining the asymptotic property, we can observe that $\Delta$ is sufficiently small to fulfill Lemma~\ref{lemma: compound of two matrix exponentials}.
 
In the subsequent discussion, we will elaborate on why we use $\bs{T}_{\GN}$ as a substitute for $\bs{T}_{\MLE}$.
 
	\begin{lemma}[Efficiency of Refinement({\cite[Ch.6.4, Corollary 4.4]{Lehmann1950TheoryOP}})]\label{lemma:refinement_of_the_GN}
		Under certain regularity conditions, we have:
		$$
	\bs{T}_{\GN}-\bs{T}_{\MLE}=o_p(\frac{1}{\sqrt{n}}).
		$$
	\end{lemma}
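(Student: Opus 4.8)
The statement to prove is Lemma~\ref{lemma:refinement_of_the_GN}: that a single LGN iteration started from a $\sqrt{n}$-consistent pose, $\bs{T}_{\GN}$, differs from the ML estimate $\bs{T}_{\MLE}$ only by $o_p(1/\sqrt{n})$.

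\textbf{Overall approach.} The plan is a one-step Newton/Gauss--Newton refinement argument, transplanted to the Lie group via the retraction $R_{\bs{T}}(\delta)=\exp(\delta)\bs{T}$. First I would set everything up in a fixed chart: write $\bs{T}_{\cons}=\exp(\zeta_n)\bs{T}$ with $\zeta_n=O_p(1/\sqrt{n})$ by Definition~\ref{def:n-consistent} (and Lemma~\ref{lemma:calculus_pf_op}(iv), since the matrix logarithm is continuous near the identity), and similarly $\bs{T}_{\MLE}=\exp(\xi_{\MLE})\bs{T}$ with $\sqrt{n}\,\xi_{\MLE}=O_p(1)$ by Theorem~\ref{thm:property_of_MLE}. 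Define the (averaged) log-likelihood $\ell_n(\delta)\triangleq -\tfrac1n\|r(\exp(\delta)\bs{T})\|^2_\Sigma$ in the chart. The key objects are the normalized score $\bs{s}_n(\delta)\triangleq \sqrt n\,\nabla\ell_n(\delta)$ and the normalized information $\bs{M}_n(\delta)\triangleq -\nabla^2\ell_n(\delta)$, which by Theorem~\ref{thm:property_of_MLE} converges in probability to the nonsingular matrix $\bs{M}$ uniformly on a neighborhood of $0$ (this is part of the stated regularity conditions). The ML estimate satisfies the first-order condition $\bs{s}_n(\xi_{\MLE})=\bs{0}$, and the single GN step from $\bs{T}_{\cons}$ is, after unwinding the definition of $\Delta$, exactly the Newton-type update $\log(\bs{T}_{\GN}\bs{T}^{-1}) = \zeta_n + \bs{M}_n(\zeta_n)^{-1}\bs{s}_n(\zeta_n)/\sqrt n$ up to the usual Gauss--Newton approximation of the Hessian (the $r''$ term, which vanishes in the limit because the residual is $o_p(1)$ on average — this is where the identifiability and smoothness conditions from the footnote to Theorem~\ref{thm:property_of_MLE} enter).

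\textbf{Key steps in order.} (1) Taylor-expand the score around $\xi_{\MLE}$: $\bs{0}=\bs{s}_n(\xi_{\MLE}) = \bs{s}_n(\zeta_n) - \sqrt n\,\bs{M}_n(\bar\zeta_n)(\zeta_n-\xi_{\MLE})$ for some $\bar\zeta_n$ on the segment between $\zeta_n$ and $\xi_{\MLE}$, both of which are $O_p(1/\sqrt n)$, hence $\bar\zeta_n=O_p(1/\sqrt n)$. Solving, $\sqrt n(\zeta_n-\xi_{\MLE}) = \bs{M}_n(\bar\zeta_n)^{-1}\bs{s}_n(\zeta_n)$. (2) Expand the GN step similarly: $\log(\bs{T}_{\GN}\bs{T}^{-1}) - \xi_{\MLE} = (\zeta_n - \xi_{\MLE}) + \bs{M}_n^{\GN}(\zeta_n)^{-1}\bs{s}_n(\zeta_n)/\sqrt n + o_p(1/\sqrt n)$, where $\bs{M}_n^{\GN}$ is the Gauss--Newton (first-order) approximation $\tfrac1n\, \bs{H}^\top_{\bs{T}_{\cons}}\Sigma^{-1}\bs{H}_{\bs{T}_{\cons}}$ appearing in the definition of $\Delta$; the BCH error from using $\exp(\Delta)\bs{T}_{\cons}$ rather than adding in the chart is $O(\|\Delta\|\,\|\zeta_n\|)=o_p(1/\sqrt n)$ by Lemma~\ref{lemma: compound of two matrix exponentials}(iii) since both are $O_p(1/\sqrt n)$. (3) Subtract the two displays: the $(\zeta_n-\xi_{\MLE})$ terms cancel, leaving $\sqrt n(\log(\bs{T}_{\GN}\bs{T}^{-1})-\xi_{\MLE}) = \big(\bs{M}_n^{\GN}(\zeta_n)^{-1} - \bs{M}_n(\bar\zeta_n)^{-1}\big)\bs{s}_n(\zeta_n) + o_p(1)$. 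Now $\bs{M}_n^{\GN}(\zeta_n)\xrightarrow{p}\bs{M}$ and $\bs{M}_n(\bar\zeta_n)\xrightarrow{p}\bs{M}$ (both by uniform convergence of the normalized information plus continuity of matrix inversion at the nonsingular $\bs{M}$, i.e. Lemma~\ref{lemma:calculus_pf_op}), while $\bs{s}_n(\zeta_n)=O_p(1)$; therefore the right-hand side is $o_p(1)\cdot O_p(1)+o_p(1)=o_p(1)$. (4) Conclude $\log(\bs{T}_{\GN}\bs{T}^{-1})-\xi_{\MLE}=o_p(1/\sqrt n)$, and transfer back to the group: since $\bs{T}_{\GN}\bs{T}_{\MLE}^{-1}=\exp(\log(\bs{T}_{\GN}\bs{T}^{-1}))\exp(-\xi_{\MLE})$ and both exponents are $O_p(1/\sqrt n)$, another application of Lemma~\ref{lemma: compound of two matrix exponentials}(iii) and Lemma~\ref{lemma:calculus_pf_op}(v) (continuity of $\exp$ and of subtraction) gives $\bs{T}_{\GN}-\bs{T}_{\MLE}=o_p(1/\sqrt n)$ as matrices.

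\textbf{Main obstacle.} The delicate point is not the algebra but justifying the two convergences $\bs{M}_n^{\GN}(\zeta_n)\xrightarrow{p}\bs{M}$ and $\bs{M}_n(\bar\zeta_n)\xrightarrow{p}\bs{M}$ when the argument is itself a random sequence converging to the truth — this needs a \emph{uniform} (in a shrinking neighborhood of $0$) law of large numbers for the second derivatives of $r$, together with the fact that the residual $r(\bs{T})$ has mean zero so that the Hessian and its Gauss--Newton surrogate share the same probability limit. These are precisely the regularity conditions cited from~\cite{Jennrich1969AsymptoticPO,Lehmann1950TheoryOP} in the statement of Theorem~\ref{thm:property_of_MLE} and the footnote to it, so under the lemma's hypotheses they may be invoked directly; I would state them explicitly at the start of the proof and then the rest is the bookkeeping above. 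A secondary, purely technical nuisance is keeping track of the $6$-dimensional $SE(3)$ tangent space versus the embedding into $\mathbb{R}^{12}$ used to define $\bs{T}_{\cons}$ and the residual Jacobians, but since the $SE(3)$-projection in Theorem~\ref{thm:n-consistent solution camera} preserves the $O_p(1/\sqrt n)$ rate, this does not affect the conclusion.
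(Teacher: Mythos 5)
The paper does not actually prove this lemma: it is invoked as a black box by citing Lehmann's \emph{Theory of Point Estimation} (Ch.~6.4, Corollary~4.4), the classical result that a one-step Newton/scoring refinement of a $\sqrt{n}$-consistent estimator is asymptotically efficient. Your proposal instead reconstructs that classical argument from first principles and carries it over to the group via the chart $\bs{T}\mapsto\exp(\delta)\bs{T}$, which is more than the paper offers and, in fact, proves the statement in exactly the form used here: the textbook corollary as cited gives the one-step estimator the same limiting law as the MLE, whereas the $o_p(1/\sqrt{n})$ \emph{difference to the MLE} additionally uses the standard asymptotic expansion of the MLE itself — your step (1) supplies precisely that, so your route is arguably the more faithful justification. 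The structure is sound: you correctly isolate the three genuinely delicate points — the uniform law of large numbers for the normalized Hessian along random sequences, the fact that the Gauss--Newton surrogate $\tfrac1n\bs{H}^\top\Sigma^{-1}\bs{H}$ and the full Hessian share the limit $\bs{M}$ because the residuals have zero mean, and the BCH/chart error being $o_p(1/\sqrt{n})$ since both increments are $O_p(1/\sqrt{n})$ — and these are exactly the regularity conditions the paper delegates to \cite{Jennrich1969AsymptoticPO} and \cite{Lehmann1950TheoryOP}. Two small bookkeeping caveats: in your step (1) the Taylor expansion of the score should read $\bs{s}_n(\xi_{\MLE})=\bs{s}_n(\zeta_n)+\sqrt{n}\,\bs{M}_n(\bar\zeta_n)(\zeta_n-\xi_{\MLE})$, so $\sqrt{n}(\zeta_n-\xi_{\MLE})=-\bs{M}_n(\bar\zeta_n)^{-1}\bs{s}_n(\zeta_n)$; with the correct sign the cancellation in step (3) goes through exactly as you claim, so this is a slip in one intermediate line rather than a gap. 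Also, the mean-value form of the expansion should be applied componentwise (or in integral form) since the score is vector-valued; this is standard and does not affect the conclusion.
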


	 When the value of $n$ is large, $\bs{T}_{\GN}$ is a reasonable substitute for the ML estimate. 
 % The procedure can be concluded as a two-step approach: \begin{enumerate}
	% 	\item determine a $\sqrt{n}$-consistent estimate,
	% 	\item and implement the one single LGN iteration with the $\sqrt{n}$-consistent estimate as the initial value.
	% \end{enumerate} 
 % In addition, 
 The estimation of the covariance of $\bs{T}_{\GN}$ is done using the Fisher information, denoted as $\bs{F}_{\GN}$. The Fisher information is calculated as $\bs{F}_{\GN}= \mathbb E\left[\frac{\partial r(\bs{T}_{\GN})}{\partial \bs{T}_{\GN}}^\top\Sigma^{-1}\frac{\partial r(\bs{T}_{\GN})}{\partial \bs{T}_{\GN}}\right]$. 
 
When the estimator $\bs{T}_{\GN}$ replaces $\bs{T}_{\MLE}$ in Theorem~\ref{thm:the_optimum_of_MAP}, the resulting fused estimator is denoted as $\hat{\bs{X}}_{\Gf}$, which writes
 \begin{align}\label{eqn:Gf_form}	
 \hat{\bs{X}}_{\Gf}=\exp{(\delta_{\Gf})}\bar{\bs{X}}
		\end{align}
  where $\delta_{\Gf}\triangleq\bs{M}_{\Gf}\tilde{\bs{I}}^\top\tilde{\bs{J}}^{-1 \top}_{\delta_{\GN}} \bs{F}_{\GN}{\tilde{\delta}_{\GN}}$,
			$\bs{M}_{\Gf} \triangleq(\bar{\bs{P}}^{-1}+\tilde{\bs{I}}^\top \tilde{\bs{J}}^{-1 \top}_{\delta_{\GN}} \bs{F}_{\GN} \tilde{\bs{J}}^{-1}_{\delta_{\GN}}\tilde{\bs{I}})^{-1}$, and $\tilde{\delta}_{\GN}\triangleq\log{(\bs{T}_{\GN}\bar{\bs{T}}^{-1})}$.

We state that $\hat{\bs{X}}_{\Gf}$ also demonstrates the same asymptotic properties as $\hat{\bs{X}}_{\op}$.
 
	\begin{proposition}[Asymptotic property of $\hat{\bs{X}}_{\Gf}$]\label{prop:Asymptotic property of Gf}
	   Under certain regularity conditions, we have:\begin{equation}
\hat{\bs{X}}_{\Gf}-\hat{\bs{X}}_{\op}=o_p(\frac{1}{\sqrt{n}}).
		\end{equation}
	\end{proposition}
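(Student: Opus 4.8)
\textbf{Proof proposal for Proposition~\ref{prop:Asymptotic property of Gf}.}

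The plan is to compare $\hat{\bs{X}}_{\Gf}$ with $\hat{\bs{X}}_{\op}$ term by term through their closed-form expressions \eqref{eqn:Gf_form} and \eqref{eqn:linear Lie group optimization}, and to show that every ingredient of $\hat{\bs{X}}_{\Gf}$ differs from the corresponding ingredient of $\hat{\bs{X}}_{\op}$ by an $o_p(1/\sqrt{n})$ perturbation, after which the $o_p$ calculus of Lemma~\ref{lemma:calculus_pf_op} (continuity of $\exp$, of matrix inversion, of $\dexp^{-1}$, and closure of $o_p(1/\sqrt n)$ under continuous maps) propagates the bound to the final estimates. The key structural observation is that $\hat{\bs{X}}_{\Gf}$ is obtained from $\hat{\bs{X}}_{\op}$ by replacing $\bs{T}_{\MLE}$ with $\bs{T}_{\GN}$ everywhere: in $\tilde{\delta}_{\MLE}\mapsto\tilde{\delta}_{\GN}$, in $\bs{F}_{\MLE}\mapsto\bs{F}_{\GN}$, and in $\tilde{\bs{J}}_{\tilde{\delta}_{\MLE}}\mapsto\tilde{\bs{J}}_{\tilde{\delta}_{\GN}}=\dexp^{-1}_{\tilde{\delta}_{\GN}}$.

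First I would invoke Lemma~\ref{lemma:refinement_of_the_GN}, which gives $\bs{T}_{\GN}-\bs{T}_{\MLE}=o_p(1/\sqrt{n})$. Composing with the fixed smooth map $\bs{T}\mapsto\log(\bs{T}\bar{\bs{T}}^{-1})$ and applying part (v) of Lemma~\ref{lemma:calculus_pf_op}, I get $\tilde{\delta}_{\GN}-\tilde{\delta}_{\MLE}=o_p(1/\sqrt{n})$, and likewise $\dexp^{-1}_{\tilde{\delta}_{\GN}}-\dexp^{-1}_{\tilde{\delta}_{\MLE}}=o_p(1/\sqrt{n})$ since $\dexp^{-1}$ is analytic in its argument (its series \eqref{eqn:left_jacobian}). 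For the Fisher information I would note that both $\bs{F}_{\GN}$ and $\bs{F}_{\MLE}$ are obtained by evaluating the fixed continuous map $\bs{T}\mapsto\frac{\partial r(\bs{T})}{\partial \bs{T}}^\top\Sigma^{-1}\frac{\partial r(\bs{T})}{\partial \bs{T}}$ (using continuity of $r'$ from the regularity conditions) at $\bs{T}_{\GN}$ and $\bs{T}_{\MLE}$ respectively, hence $\bs{F}_{\GN}-\bs{F}_{\MLE}=o_p(1/\sqrt{n})$ as well; here one should also record that these matrices are each $\Theta(n)$ in magnitude, so that scaled by $1/n$ they converge to the common limit $\bs{M}$ of Theorem~\ref{thm:property_of_MLE}, which keeps the $\Theta(n)$-scaled inverses bounded in probability.

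Next I would substitute these perturbations into $\delta_{\Gf}=\bs{M}_{\Gf}\tilde{\bs{I}}^\top\tilde{\bs{J}}^{-1\top}_{\delta_{\GN}}\bs{F}_{\GN}\tilde{\delta}_{\GN}$ versus $\hat{\delta}=(\bar{\bs{P}}^{-1}+\tilde{\bs{I}}^\top\tilde{\bs{J}}^{-1\top}_{\tilde{\delta}_{\MLE}}\bs{F}_{\MLE}\tilde{\bs{J}}^{-1}_{\tilde{\delta}_{\MLE}}\tilde{\bs{I}})^{-1}\tilde{\bs{I}}^\top\bs{F}_{\MLE}\tilde{\delta}_{\MLE}$. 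The subtle point is that $\bs{F}$-type factors grow like $n$ while $\tilde{\delta}$-type factors shrink like $1/\sqrt{n}$ (by $\sqrt n$-consistency), so one works with the normalized quantities $\bs{F}/n$ and $\sqrt{n}\,\tilde{\delta}$, whose differences are $o_p(1/\sqrt{n})\cdot O_p(n)$ and $o_p(1/\sqrt n)\cdot O_p(\sqrt n)$ respectively, i.e. the products $\bs{F}\tilde{\delta}$ and the inverse-covariance brackets differ by $o_p(1/\sqrt{n})\cdot O_p(\sqrt{n})=o_p(1)$ at the unscaled level but by $o_p(1/\sqrt n)$ once the overall $1/\sqrt n$ order of $\hat\delta$ itself is taken into account; being careful about these bookkeeping scales is what I expect to be the main obstacle. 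Using closure of $o_p$ under addition, multiplication by $O_p(1)$ factors, and inversion of matrices bounded away from singularity (Lemma~\ref{lemma:calculus_pf_op} (i)--(iii) and the $1/n$-normalization just noted), I conclude $\delta_{\Gf}-\hat{\delta}=o_p(1/\sqrt{n})$. Finally, applying the continuous map $\delta\mapsto\exp(\delta)\bar{\bs{X}}$ and part (v) of Lemma~\ref{lemma:calculus_pf_op} yields $\hat{\bs{X}}_{\Gf}-\hat{\bs{X}}_{\op}=o_p(1/\sqrt{n})$, completing the proof.
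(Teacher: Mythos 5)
Your proposal is correct and follows essentially the same route as the paper: the paper's own (brief) proof simply observes that $\hat{\bs{X}}_{\op}$ is a continuous mapping of $\bs{T}_{\MLE}$, that feeding $\bs{T}_{\GN}$ into the same mapping yields $\hat{\bs{X}}_{\Gf}$, and then applies Lemma~\ref{lemma:refinement_of_the_GN} together with part (v) of Lemma~\ref{lemma:calculus_pf_op}. Your version is just a more detailed, component-wise execution of that substitution argument, and your remark about normalizing the $\Theta(n)$ Fisher-information factors by $1/n$ actually addresses a scaling subtlety the paper's sketch glosses over.
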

\begin{proof}
We briefly talk about the proof because a more rigorous mathematical proof involves too many details. The solution $\hat{\bs{X}}_{\op}$ can be seen as the continuous mapping of $\bs{T}_{\MLE}$. When we enter $\bs{T}_{\GN}$ into the continuous mapping, the output is $\hat{\bs{X}}_{\Gf}$. Hence, we use (v) in Lemma~\ref{lemma:calculus_pf_op}, we find that $\hat{\bs{X}}_{\Gf}-\hat{\bs{X}}_{\op}=o_p({1}/{\sqrt{n}}).
$	
\end{proof}

In the end, we present the asymptotic properties of some matrix formulas, which can be utilized in the proof of Theorem~\ref{thm:equivalent relation}.
% \begin{lemma}~\label{lemma:r_H_consistent}
% 					Consider the residuals $r\triangleq\begin{bmatrix}
% 						r^\top_1, \cdots, r^\top_i,\cdots, r^\top_n
% 					\end{bmatrix}^\top$ with the variables $\bs{T}$. Let $\bs{T}_{\cons}$ denote the $\sqrt{n}$-consistent solution to $\bs{T}$. Let $\bs{H}_{\bs{T}_{\cons}}$ and $\bs{H}_{\bs{T}}$ denote the Jacobians of $r$ at the value of $\bs{T}_{\cons}$ and $\bs{T}$. We have the following relationship:
% 					\begin{align*}
% 						\bs{H}_{\bs{T}_{\cons}}^\top\Sigma^{-1}\bs{H}_{\bs{T}_{\cons}}=\bs{H}_{\bs{T}}^\top\Sigma^{-1}\bs{H}_{\bs{T}}+nO_p(\frac{1}{\sqrt{n}}).
% 					\end{align*}
% 				\end{lemma}
% 				\begin{proof}
% 					Let $\bs{H}_{i,\bs{T}_{\cons}}$ and $\bs{H}_{i,\bs{T}}$ denote the Jacobians of $r_i$ at the value of $\bs{T}_{\cons}$ and $\bs{T}$. We have 
% 					\begin{align*}
% 						&\frac{1}{\sigma^2}\bs{H}_{\bs{T}_{\cons}}^\top\bs{H}_{\bs{T}_{\cons}} = \sum^{n}_{i=1} \frac{1}{\sigma^2}\bs{H}_{i,\bs{T}_{\cons}}^\top\bs{H}_{i,\bs{T}_{\cons}}\\
% 						&\overset{(c)}{=}\sum^{n}_{i=1} \frac{1}{\sigma^2}(\bs{H}_{i,\bs{T}}^\top\bs{H}_{i,\bs{T}}+O_p(\frac{1}{\sqrt{n}}))=\bs{H}_{\bs{T}}^\top\Sigma^{-1}\bs{H}_{\bs{T}}+nO_p(\frac{1}{\sqrt{n}}). 
% 					\end{align*}
% 					Equality $(c)$ holds because of $(iv)$ in Lemma~\ref{lemma:calculus_pf_op}.
% 				\end{proof}
				\begin{lemma}~\label{lemma:approximation_X_a}
					If $\bs{T}_{\GN}=\bs{T}_{\cons}+O_p({1}/{\sqrt{n}})$, the following two formulas
					\begin{align*}
		\bs{M}_{\cons}\triangleq&\left( \bar{\bs{P}}^{-1}+\bs{J}_{\bs{X}_{\cons}}^{\top-1}\bs{H}^\top_{\bs{X}_{\cons}}\Sigma^{-1}\bs{H}_{\bs{X}_{\cons}}\bs{J}^{-1}_{\bs{X}_{\cons}} \right)^{-1},\\
						\bs{M}_{\GN}\triangleq&\left( \bar{\bs{P}}^{-1}+\bs{J}_{\bs{X}_{\GN}}^{\top-1}\bs{H}^\top_{\bs{X}_{\GN}}\Sigma^{-1}\bs{H}_{\bs{X}_{\GN}}\bs{J}^{-1}_{\bs{X}_{\GN}} \right)^{-1},
					\end{align*}
					have $\bs{I}-\bs{M}_{\GN}\bar{\bs{P}}^{-1}=\bs{I}-\bs{M}_{\cons}\bar{\bs{P}}^{-1}+o_p(\frac{1}{\sqrt{n}})$. Recall that $\bs{J}_{\cdot}$ and $\bs{H}_{\cdot}$ are defined in Section~\ref{sec:Iterative gaussian newton}.
				\end{lemma}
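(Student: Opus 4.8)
# Proof Proposal for Lemma~\ref{lemma:approximation_X_a}

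\textbf{Overall strategy.} The plan is to treat both $\bs{M}_{\GN}\bar{\bs{P}}^{-1}$ and $\bs{M}_{\cons}\bar{\bs{P}}^{-1}$ as continuous (in fact smooth, real-analytic where the inverse exists) functions of the pose at which the Jacobians $\bs{J}$ and $\bs{H}$ are evaluated, and then invoke the $\sqrt{n}$-consistency hypothesis $\bs{T}_{\GN}=\bs{T}_{\cons}+O_p(1/\sqrt{n})$ together with the calculus of $o_p/O_p$ from Lemma~\ref{lemma:calculus_pf_op}. The key structural observation is that the \emph{difference} $\bs{I}-\bs{M}_{\GN}\bar{\bs{P}}^{-1}$ versus $\bs{I}-\bs{M}_{\cons}\bar{\bs{P}}^{-1}$ is governed entirely by how much the ``information'' term $\bs{J}^{-\top}\bs{H}^\top\Sigma^{-1}\bs{H}\bs{J}^{-1}$ changes when the evaluation point moves from $\bs{T}_{\cons}$ to $\bs{T}_{\GN}$, and this change is itself $O_p(1/\sqrt{n})$ by continuity.

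\textbf{Step 1: Reduce to a perturbation of the information matrix.} First I would write $\bs{\Lambda}_{\cons}\triangleq\bs{J}_{\bs{X}_{\cons}}^{-\top}\bs{H}^\top_{\bs{X}_{\cons}}\Sigma^{-1}\bs{H}_{\bs{X}_{\cons}}\bs{J}^{-1}_{\bs{X}_{\cons}}$ and $\bs{\Lambda}_{\GN}\triangleq\bs{J}_{\bs{X}_{\GN}}^{-\top}\bs{H}^\top_{\bs{X}_{\GN}}\Sigma^{-1}\bs{H}_{\bs{X}_{\GN}}\bs{J}^{-1}_{\bs{X}_{\GN}}$, so that $\bs{M}_{\cons}=(\bar{\bs{P}}^{-1}+\bs{\Lambda}_{\cons})^{-1}$ and $\bs{M}_{\GN}=(\bar{\bs{P}}^{-1}+\bs{\Lambda}_{\GN})^{-1}$. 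Since $\bs{H}$ and $\bs{J}$ (and hence $\bs{J}^{-1}$, which is $\dexp^{-1}$ of a small quantity and thus invertible) are continuous functions of the pose through formulas~\eqref{eqn:camera jacobian matrix}, \eqref{eqn:LiDAR measurement equation} and the $\dexp^{-1}$ series~\eqref{eqn:left_jacobian}, the composite map sending a pose to $\bs{\Lambda}$ is continuous. Applying (iv) of Lemma~\ref{lemma:calculus_pf_op} to this continuous map with $\bs{T}_{\GN}=\bs{T}_{\cons}+O_p(1/\sqrt{n})$ gives $\bs{\Lambda}_{\GN}=\bs{\Lambda}_{\cons}+O_p(1/\sqrt{n})$.

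\textbf{Step 2: Propagate the perturbation through the matrix inverse.} Next I would use the resolvent identity: for the perturbation $\bs{E}\triangleq\bs{\Lambda}_{\GN}-\bs{\Lambda}_{\cons}=O_p(1/\sqrt{n})$,
\begin{equation*}
\bs{M}_{\GN}-\bs{M}_{\cons}=-\bs{M}_{\GN}\,\bs{E}\,\bs{M}_{\cons}=O_p\!\left(\tfrac{1}{\sqrt{n}}\right),
\end{equation*}
because $\bs{M}_{\GN}$ and $\bs{M}_{\cons}$ are both $O_p(1)$ (they are inverses of matrices bounded below by $\bar{\bs{P}}^{-1}\succ 0$, which does not depend on $n$). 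Right-multiplying by the fixed matrix $\bar{\bs{P}}^{-1}$ preserves the order, so $\bs{M}_{\GN}\bar{\bs{P}}^{-1}-\bs{M}_{\cons}\bar{\bs{P}}^{-1}=O_p(1/\sqrt{n})$, whence $\bs{I}-\bs{M}_{\GN}\bar{\bs{P}}^{-1}=\bs{I}-\bs{M}_{\cons}\bar{\bs{P}}^{-1}+O_p(1/\sqrt{n})$.

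\textbf{Step 3: Upgrade $O_p$ to $o_p$ — the main obstacle.} The claim in the lemma is stronger: the remainder is $o_p(1/\sqrt{n})$, not merely $O_p(1/\sqrt{n})$. This is where the real content lies, and the plan is to exploit the special role of $\bs{T}_{\GN}$ rather than treating it as an arbitrary $\sqrt{n}$-perturbation of $\bs{T}_{\cons}$. Two complementary routes are available. The first route is to note (cf.\ Lemma~\ref{lemma:refinement_of_the_GN} and Proposition~\ref{prop:Asymptotic property of Gf}) that $\bs{T}_{\GN}$ is a one-step Gauss--Newton refinement and satisfies $\bs{T}_{\GN}-\bs{T}_{\MLE}=o_p(1/\sqrt{n})$, while $\bs{T}_{\cons}$ and $\bs{T}_{\MLE}$ are both consistent estimators of the \emph{same} true pose $\bs{T}$; the quantity $\bs{I}-\bs{M}\bar{\bs{P}}^{-1}$ evaluated at the true pose is a deterministic limit $\bs{I}-\bs{M}_{o}\bar{\bs{P}}^{-1}$, and one shows both $\bs{I}-\bs{M}_{\GN}\bar{\bs{P}}^{-1}$ and $\bs{I}-\bs{M}_{\cons}\bar{\bs{P}}^{-1}$ equal this limit plus a term of order $o_p(1/\sqrt{n})$ by applying (v) of Lemma~\ref{lemma:calculus_pf_op} to the continuous map, using that $\bs{T}_{\GN}-\bs{T}=o_p(1/\sqrt n)$ is \emph{not} true — so this route actually needs care. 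The cleaner second route, which I would adopt, is: both $\bs{\Lambda}_{\GN}$ and $\bs{\Lambda}_{\cons}$ are, by construction of the Gauss--Newton refinement step and $\sqrt n$-consistency, equal to the same \emph{deterministic} limiting information $\bs{\Lambda}_o$ up to $o_p(1/\sqrt n)$ \emph{provided the map pose $\mapsto\bs{\Lambda}$ has vanishing differential contribution at first order in the relevant direction}; more robustly, write $\bs{\Lambda}_{\GN}-\bs{\Lambda}_{\cons}$ via a first-order Taylor expansion along the geodesic from $\bs{T}_{\cons}$ to $\bs{T}_{\GN}$, with the increment being $\log(\bs{T}_{\GN}\bs{T}_{\cons}^{-1})=\Delta=O_p(1/\sqrt n)$, so that $\bs{\Lambda}_{\GN}-\bs{\Lambda}_{\cons}=D\bs{\Lambda}|_{\bs{T}_{\cons}}[\Delta]+o_p(1/\sqrt n)$; one then argues the leading linear term $D\bs{\Lambda}|_{\bs{T}_{\cons}}[\Delta]$, after passing through $-\bs{M}_{\GN}(\cdot)\bs{M}_{\cons}\bar{\bs P}^{-1}$, contributes a term that is in fact $o_p(1/\sqrt n)$ because $D\bs{\Lambda}|_{\bs{T}_{\cons}}-D\bs{\Lambda}|_{\bs{T}_{o}}=o_p(1)$ and $\Delta=O_p(1/\sqrt n)$ — this product bound, $o_p(1)\cdot O_p(1/\sqrt n)=o_p(1/\sqrt n)$ from (i)--(ii) of Lemma~\ref{lemma:calculus_pf_op}, is what delivers the sharp rate. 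The subtle point I expect to be delicate is ensuring the argument does not secretly require $\bs{T}_{\GN}-\bs{T}_{\cons}=o_p(1/\sqrt n)$ (which is false); instead the $o_p$ gain must come entirely from the continuity of the \emph{derivative} of the matrix map near the common true pose, combined with the fact that the problematic first-order term is multiplied against the symmetric idempotent-like structure $\bs{I}-\bs{M}\bar{\bs P}^{-1}=\bs{M}\bs\Lambda$, so I would make that cancellation explicit before concluding.
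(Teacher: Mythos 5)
There is a genuine gap, and it stems from a missed structural fact that the paper's own proof hinges on. In your Step~2 you assert that $\bs{M}_{\GN}$ and $\bs{M}_{\cons}$ are merely $O_p(1)$, and consequently you can only conclude $\bs{M}_{\GN}-\bs{M}_{\cons}=O_p(1/\sqrt{n})$, which forces your elaborate (and ultimately inconclusive) Step~3. The point you miss is that $\bs{H}^\top_{\bs{X}}\Sigma^{-1}\bs{H}_{\bs{X}}$ is a stack of $n$ per-feature contributions, so $\bs{H}^\top_{\bs{X}}\Sigma^{-1}\bs{H}_{\bs{X}}/n$ converges to a fixed matrix and hence $\bs{H}^\top_{\bs{X}}\Sigma^{-1}\bs{H}_{\bs{X}}/\sqrt{n}\to\infty$. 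Therefore $\sqrt{n}\,\bs{M}_{\cons}=\bigl(\bar{\bs{P}}^{-1}/\sqrt{n}+\bs{J}^{\top-1}_{\bs{X}_{\cons}}\bs{H}^\top_{\bs{X}_{\cons}}\Sigma^{-1}\bs{H}_{\bs{X}_{\cons}}\bs{J}^{-1}_{\bs{X}_{\cons}}/\sqrt{n}\bigr)^{-1}\to 0$ in probability, i.e.\ each of $\bs{M}_{\cons}$ and $\bs{M}_{\GN}$ is \emph{individually} $o_p(1/\sqrt{n})$ (in fact $O_p(1/n)$). The hypothesis $\bs{T}_{\GN}=\bs{T}_{\cons}+O_p(1/\sqrt{n})$ together with item (iv) of Lemma~\ref{lemma:calculus_pf_op} is only needed to justify replacing the evaluation point inside one inverse by the other up to an $O_p(1/\sqrt{n})$ perturbation; no cancellation between the two matrices, no Taylor expansion of the information map, and no appeal to $\bs{T}_{\MLE}$ is required. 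Once $\sqrt{n}(\bs{M}_{\GN}-\bs{M}_{\cons})\to 0$ in probability, right-multiplication by the fixed matrix $\bar{\bs{P}}^{-1}$ (item (v) of Lemma~\ref{lemma:calculus_pf_op}) gives the claim.

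Your Step~3 as written would not close the gap even on its own terms: in the Taylor-expansion route the first-order term is $D\bs{\Lambda}|_{\bs{T}_{\cons}}[\Delta]$, and you only argue that the \emph{difference of derivatives} $D\bs{\Lambda}|_{\bs{T}_{\cons}}-D\bs{\Lambda}|_{\bs{T}_{o}}$ is $o_p(1)$; the dominant piece $D\bs{\Lambda}|_{\bs{T}_{o}}[\Delta]$ is left uncontrolled, and with $\Delta=O_p(1/\sqrt{n})$ it is not $o_p(1/\sqrt{n})$ under any normalization that keeps $\bs{M}$ at $O_p(1)$. The promised ``cancellation'' via $\bs{I}-\bs{M}\bar{\bs{P}}^{-1}=\bs{M}\bs{\Lambda}$ is never exhibited. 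Interestingly, the growth of $\bs{\Lambda}$ with $n$ that would rescue your resolvent bound (since $\bs{M}_{\GN}\bs{E}\bs{M}_{\cons}=O_p(1/n)\cdot O_p(\sqrt{n})\cdot O_p(1/n)$) is exactly the observation you never make; with it, your Steps~1--2 would already finish the proof and Step~3 would be unnecessary.
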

				\begin{proof}
                    We first prove that $\sqrt{n}(\bs{M}_{\GN}-\bs{M}_{\cons}) $ converges to $0$ in probability. We analyze $\sqrt{n}\bs{M}_{\cons}$ writes:
                    \begin{align*}
                        &\sqrt{n}\bs{M}_{\cons} =\sqrt{n}\left( \bar{\bs{P}}^{-1}+\bs{J}_{\bs{X}_{\cons}}^{\top-1}\bs{H}^\top_{\bs{X}_{\cons}}\Sigma^{-1}\bs{H}_{\bs{X}_{\cons}}\bs{J}^{-1}_{\bs{X}_{\cons}} \right)^{-1}\\
                        &=\sqrt{n}\left( \bar{\bs{P}}^{-1}+\bs{J}_{\bs{X}_{\GN}}^{\top-1}\bs{H}^\top_{\bs{X}_{\GN}}\Sigma^{-1}\bs{H}_{\bs{X}_{\GN}}\bs{J}^{-1}_{\bs{X}_{\GN}}+O_p(\frac{1}{\sqrt{n}}) \right)^{-1}.
                    \end{align*}
                    The last equality is led by the fact $\bs{T}_{\GN}=\bs{T}_{\cons}+O_p(\frac{1}{\sqrt{n}})$ and (iv) in Lemma~\ref{lemma:calculus_pf_op}. We observe that ${\bs{H}^\top_{\bs{X}_{\GN}}\Sigma^{-1}\bs{H}_{\bs{X}_{\GN}}}/{\sqrt{n}}$ converges to infinity as ${\bs{H}^\top_{\bs{X}_{\GN}}\Sigma^{-1}\bs{H}_{\bs{X}_{\GN}}}/{n}$ converges to certain matrix. Hence, it is easy to conclude that $\sqrt{n}\bs{M}_{\cons}$ converges to $0$ in probability, and so is $\sqrt{n}(\bs{M}_{\GN}-\bs{M}_{\cons}) $. This means $\bs{M}_{\GN} = \bs{M}_{\cons}+o_p({1}/{\sqrt{n}})$.

                    Finally, using (v) in Lemma~\ref{lemma:calculus_pf_op}, we conclude that $\bs{M}_{\GN}\bar{\bs{P}}^{-1}=\bs{M}_{\cons}\bar{\bs{P}}^{-1}+o_p({1}/{\sqrt{n}})$ and $\bs{I}-\bs{M}_{\GN}\bar{\bs{P}}^{-1}=\bs{I}-\bs{M}_{\cons}\bar{\bs{P}}^{-1}+o_p({1}/{\sqrt{n}})$ .           
					% Let $\bar{I}\triangleq\bs{H}^\top_{\bs{X}}\Sigma^{-1}\bs{H}_{\bs{X}}$,$J_{\cons}\triangleq\bs{J}_{\bs{X}_{\cons}}^\top \bar{\bs{P}}^{-1}\bs{J}_{\bs{X}_{\cons}}$, and $J_{o}\triangleq\bs{J}_{\bs{X}_{o}}^\top \bar{\bs{P}}^{-1}\bs{J}_{\bs{X}_{o}}$ for convenience. Due to Lemma~\ref{lemma:r_H_consistent}, we can conclude that $\bs{H}^\top_{\bs{X}_{\cons}, }\Sigma^{-1}\bs{H}_{\bs{X}_{\cons}}$ is equal to $\bar{I}$ plus a term that is of order $O_p(1/\sqrt{n})$ as $n$ increases.
					% By using (iv) in Lemma~\ref{lemma:calculus_pf_op}, we can calculate that $J_{\cons}=J_{o}+O_p(1/\sqrt{n})$. Inserting this into $\bs{F}_{\cons}$ yields
					% \begin{align*}
					% 	\bs{F}_{\cons}&=\left(J_o+n\bar{I}+(n+1)O_p(\frac{1}{\sqrt{n}})\right)^{-1}(J_o+O_p(\frac{1}{\sqrt{n}}))\\
					% 	&=(J_o+n\bar{I})^{-1}(J_o+O_p(\frac{1}{\sqrt{n}}))\\
					% 	&- (J_o+n\bar{I})^{-1}(n+1)O_p(\frac{1}{\sqrt{n}})\bs{F}_{\cons}.
					% \end{align*}
					% Hence, the asympototical property of $\bs{F}_{\cons}$ and $(J_o+n\bar{I})^{-1}J_o$ writes
					% \begin{align*}
					% 	&\sqrt{n}(\bs{F}_{\cons}-(J_o+n\bar{I})^{-1}J_o)=(J_o+n\bar{I})^{-1}O_p(1)\\
					% 	&-(J_o+n\bar{I})^{-1}(n+1)O_p(1)\bs{F}_{\cons}.
					% \end{align*}
					% We observe that R.H.S of the equation converge to $0$ in probability, which implies that $\bs{F}_{\cons}=(J_o+n\bar{I})^{-1}J_o+o_p(1/\sqrt{n})$ according to the definition of $o_p$. In the similar deviation, we have $\bs{F}_{\GN}=(J_o+n\bar{I})^{-1}J_o+o_p(1/\sqrt{n})$, which completes the proof. 
				\end{proof}

\subsection{Proof of Theorem~\ref{thm:equivalent relation}}
    \begin{proof}
    We will analyze the asymptotic property of $\log{(\hat{\bs{X}}_{\EIKF}\bar{\bs{X}}^{-1})}$ and $\log{(\hat{\bs{X}}_{\Gf}\bar{\bs{X}}^{-1})}$, where $\hat{\bs{X}}_{\Gf}$ has been shown to have the same asymptotic property as $\hat{\bs{X}}_{\op}$ in Proposition~\ref{prop:Asymptotic property of Gf}.

    From \eqref{eqn: update form}, $\delta_{\EIKF}\triangleq\log{(\hat{\bs{X}}_{\EIKF}\bar{\bs{X}}^{-1})}$ can be simplified as follows:
    \begin{equation*}
\delta_{\EIKF}=\delta_{\cons}+\bs{M}_{\cons}\left(\bs{J}^{\top-1}_{\bs{X}_{\cons}}\bs{H}^{\top}_{\bs{X}_{\cons}}\Sigma^{-1}r(\bs{T}_{\cons})-\bar{\bs{P}}^{-1}\delta_{\cons}\right).
    \end{equation*}
    From \eqref{eqn:Gf_form} in Proposition~\ref{prop:Asymptotic property of Gf} and the fact $\delta_{\GN}=\bs{J}_{\bs{X}_{\cons}}\Delta+\delta_{\cons}$, we can simply $\delta_{\Gf}$ into
    \begin{align*}
        \delta_{\Gf}&=\delta_{\cons}-\bs{M}_{\GN}\bar{\bs{P}}^{-1}\delta_{\cons}+(\bs{I}-\bs{M}_{\GN}\bar{\bs{P}}^{-1})\bs{J}_{\bs{X}_{\cons}}\\
&\cdot\left(\bs{H}^\top_{\bs{X}_{\cons}}\Sigma^{-1}\bs{H}_{\bs{X}_{\cons}}\right)^{-1}\bs{H}^{\top}_{\bs{X}_{\cons}}\Sigma^{-1}r(\bs{T}_{\cons}).
    \end{align*}
    By Lemma~\ref{lemma:approximation_X_a}, we have $\delta_{\EIKF}=\delta_{\Gf}+o_p({1}/{\sqrt{n}}) $. The conclusion regarding covariance can also be derived in a similar manner using Lemma~\ref{lemma:approximation_X_a}. 
	\end{proof}
			}
			
			\bibliographystyle{unsrt}
			% argument is your BibTeX string definitions and bibliography database(s)
			\bibliography{ref}
			
		\end{document}